\documentclass[twoside]{article}

\usepackage[preprint]{aistats2027}

\usepackage[round]{natbib}

\usepackage[utf8]{inputenc} 
\usepackage[T1]{fontenc}    
\usepackage{hyperref}   
\usepackage{url}            
\usepackage{booktabs}       
\usepackage{amsfonts}       
\usepackage{nicefrac}       
\usepackage{microtype}      
\usepackage{xcolor}         
\usepackage{graphicx}
\usepackage{subcaption}

\usepackage{todonotes}

\usepackage{amsmath,amsfonts}
\usepackage{dsfont}
\usepackage{amssymb}
\usepackage{mathtools}
\usepackage{amsthm}
\usepackage{bm}
\usepackage{bbm}
\usepackage{thmtools}
\usepackage{thm-restate}
\usepackage{algorithm,algorithmic}

\newcommand{\gapA}{\Delta^{\mathcal{A}}_{a,t}}

\newcommand{\gaphat}{\hat{\Delta}_{a,t}}
\newcommand{\q}{q_{a,t}}

\newcommand{\qhatstar}{\hat{q}_{\hat a^*,t}}

\newcommand{\z}{z_{a,t}}
\newcommand{\zstar}{z_{\hat a^*,t}}

\newcommand{\zhatstar}{\hat{z}_{\hat a^*,t}}

\DeclareMathOperator*{\argmax}{arg\,max}

\theoremstyle{plain}

\newtheorem{lemma}{Lemma}[section]
\newtheorem{corollary}{Corollary}[section]

\theoremstyle{definition}
\newtheorem{definition}{Definition}[section]
\newtheorem{assumption}{Assumption}[section]

\theoremstyle{remark}
\newtheorem{remark}{Remark}[section]
\usepackage[capitalize,noabbrev]{cleveref}
\crefname{assumption}{Assumption}{Assumptions}
\Crefname{assumption}{Assumption}{Assumptions}

\usepackage{microtype}
\title{Learning from a Bandit Learner}

\author{
}

\begin{document}

\twocolumn[

\aistatstitle{Learning from a Bandit Learner}

\aistatsauthor{ Michela Petriconi \And Nicolas Menet \And  Claire Vernade }

\aistatsaddress{ University of Tübingen \\ University of Technology Nuremberg  \And  ETH Zurich \And University of Technology Nuremberg } ]

\begin{abstract}
Classical inverse bandit methods typically aim to recover true reward parameters. In contrast, this paper proposes inferring a demonstrator’s subjective beliefs, specifically, their \emph{internal sub-optimality gap estimates}. We consider no-regret demonstrators following a randomized policy (e.g. Thompson sampling) and characterize a family of such algorithms for which \emph{per-step inversion} is possible, allowing gap estimates to be recovered directly from the arm-selection distribution. Our analysis identifies an inherent stability-coverage trade-off. Inversion becomes ill-conditioned as the demonstrator concentrates its probability mass on a single arm, while the number of informative rounds depends on the rate at which its policy converges. We delineate conditions under which inversion is reliable and establish theoretical guarantees on the resulting estimation error. Experiments on simulated data support the analysis and illustrate how these conditions affect gap recovery in practice.\looseness-1
\end{abstract}

\section{Introduction}
Imagine going to the casino with a friend. As they decide to try their luck on the slot machines, you sit back and watch, observing them progressively allocate their coins to the machines. Suppose your friend (the \emph{learner}) has recently read \emph{Bandit Algorithms} \citep{Lattimore_Szepesvári_2020} and can perform a no-regret learning strategy, but you (the \emph{observer}) do not know exactly which one. Can you infer their belief on the average rewards of the machines? 

We study the problem of \emph{learning from a learner} \citep{pmlr-v97-jacq19a} in the bandit setting, a variation of the Inverse Bandits (IB) problem~\citep{guo2022learning,guha2024one}
that we address under different assumptions and observation models. \citet{guo2022learning} study a white-box setting in which the observer knows the learner's internal decision mechanism exactly and uses it explicitly to form estimates of the \emph{true} mean rewards, that are recoverable only up to a constant due to the unidentifiability of inverse problems \citep{russell1998learning}. In contrast, we propose a formulation that relaxes the white-box assumption. Specifically, we assume that the demonstrator belongs to a broad family of randomized, no-regret bandit algorithms, and that the observer has access to the realized action sequence, with neither knowledge of the exact policy that generated it nor of the rewards associated with the played actions. However, we assume that the observer can form an estimate of the demonstrator's sampling probability. Under this structured grey-box assumption we target the learner's \emph{internal beliefs} about the means, which may not coincide with the true means \citep{halder2025stablethompsonsamplingvalid}. This shift is deliberate: these beliefs govern the behavior of a demonstrator that is still learning and, without additional information, are the only reward-related quantities that can be learned from its actions. Even if the beliefs on the rewards are biased, a second learner can leverage them to construct a better policy for the same task, in the spirit of ``better than the demonstrator'' approaches~\citep{brown2020better}.\looseness-1

We focus on randomized algorithms, which implicitly or explicitly sample arms at each round from a distribution that depends on an internal estimate of the mean rewards and the learner's uncertainty. The most prominent such algorithm is Thompson Sampling (TS) \citep{Thompson1933ONTL,russo2018tutorial}, which pulls arms according to the probability that their mean reward is larger than all others, what \citet{menet2025lite} call \emph{Probability of Maximality} (PoM). In TS, the sampling probability is implicit via an indirect posterior sampling mechanism, making it difficult to recover the belief parameter in closed form from the sampling distribution.\looseness-1

Our first contribution is a general class of randomized bandit
algorithms whose sampling probabilities admit a closed-form relationship
with their internal mean estimates. Building on the Linear-Time
Independence-based Estimator (LITE) of \citet{menet2025lite}, in~\cref{sec:LITE_sampling} we
introduce $v$-LITE Sampling ($v$-LS), which turns the LITE estimator
into a sampling rule. Under suitable assumptions, we provide regret bounds for $v$-LS in~\cref{app:gen_lite}.\looseness-1

Our second contribution is an inversion method for this class of
learners. In
\cref{sec:iLITE}, we show how the
demonstrator's sampling probabilities can be inverted to recover its
internal gap estimates. We derive inversion-error
bounds when the sampling probabilities are available only through
estimates, including under batched observations. To move beyond exactly
specified \(v\)-LS demonstrators, we define a broader family of
algorithms whose internal gaps and sampling policies are close to those
of a reference \(v\)-LS learner. For this family, the reconstruction
error is controlled jointly by the sampling-probability estimation error
and the mismatch with the reference policy. We support these theoretical
results with simulations.

\section{Preliminaries}
\label{sec:TS_v_LITE}

A stochastic bandit instance \citep{Lattimore_Szepesvári_2020} is defined as a collection of reward distributions $\nu=(\nu_a:a\in[K])$, where $[K]:=\{1,2,\dots,K\}$ denotes the set of available actions (bandit arms). For the remainder of this paper, we assume the reward noise to be zero-mean Gaussian, with variance $\alpha^2$, meaning each arm's reward distribution is $\nu_a=\mathcal N(\theta_a,\alpha^2)$, where $\theta_a:=\mathbb{E}_{R\sim\nu_a}[R]$ denotes its expected reward. In our Bayesian formulation, the vector of arm means
\(\theta=(\theta_1,\ldots,\theta_K)\) may have a correlated Gaussian
prior; that is, its prior covariance matrix need not be diagonal. Conditional on \(\theta\), the observation noise is independent
across rounds.

The learner interacts with the environment over a horizon of $T$ rounds. At each round $t$, the learner selects an action $a_t\in[K]$, then the environment samples a reward $r_t\sim \nu_{a_t}$ and reveals it to the learner.

Define $N_a(t):=\sum_{j=1}^t\mathbbm{1}\{a_j=a\}$, i.e., the number of pulls of arm $a$ at time $t$. The optimal arm is defined as $a^*:=\argmax_{a\in[K]}\theta_a$.

A learning algorithm is said to be no-regret if its cumulative regret, $\mathbb{E}[\sum_{t=1}^T \theta_{a^*} - \theta_{a_t}]$, grows sub-linearly with the horizon $T$. Classical algorithms known to achieve sub-linear regret include \emph{Upper Confidence Bound} (UCB)~\citep{auer2002finite, srinivas2009gaussian} and TS~\citep{Thompson1933ONTL,russo2018tutorial}.\looseness-1

Throughout the paper, we use the terms \emph{learner} and
\emph{demonstrator} interchangeably to refer to the online learning
algorithm that sequentially selects actions, observes their rewards, and
aims to minimize regret relative to the best arm. The
\emph{observer}, by contrast, has access only to the resulting action
trajectory \(\{a_t\}_{t=1}^T\), together with any information permitted
by the specified observation protocol, and never observes the
corresponding rewards.\looseness-1

\subsection{TS and the Probability of Maximality}
TS~\citep{Thompson1933ONTL,pmlr-v31-agrawal13a,russo2018tutorial} is a Bayesian algorithm for sequential decision-making that balances exploration and exploitation through posterior sampling. While being arguably the oldest strategy for bandit play, it has remained popular, e.g., enabling analysis of coding agents~\citep{menet2026codingagentsenvironmentinteraction} or improving robustness of LLM post-training~\citep{menet2026poetsuncertaintyawarellmoptimization}. In TS, the learner maintains a posterior distribution $p_t$ over the unknown parameters $(\theta_a)_{a\in[K]}$ governing the reward distribution of all arms. At each round $t$, the learner samples $Z_{t}\sim p_t$ from the current posterior and selects the arm 
\begin{equation}\label{eq:TS}
a_t=\argmax_{a\in[K]} Z_{a,t},\end{equation}
observes the associated reward $r_t \sim \nu_{a_t}$, and updates the posterior via Bayes rule.
In particular, under a Gaussian prior over the rewards, $\theta\sim\mathcal{N}(\mu_{0},\Sigma_{0})$ and assuming known noise variance $\alpha^2$, after selecting arm $a_t=a$ the update has a closed form $\mathcal{N}(\mu_{t+1}, \Sigma_{t+1})$. Defining $H_t \in \mathbb{R}^{K \times K}$ as a diagonal matrix with $[H_t]_{a,a} = N_a(t)$, the covariance matrix and mean vector are updated as:
\begin{align}
    &\Sigma_{t+1}=(\Sigma_0^{-1}+\alpha^{-2} H_t)^{-1},\label{eq:var}\\
    &\mu_{t+1} = \Sigma_{t+1} \left(\Sigma_0^{-1} \mu_0 + \alpha^{-2} \textstyle\sum_{s=1}^t r_s e_{a_s}\right), \label{eq:mean-update}
\end{align}
where $e_{a_s}$ is the standard basis vector for the arm pulled at round $s$. We denote the marginals for each arm as $\mathcal{N}(\mu_{a,t}, \sigma^2_{a,t})$, where $\mu_{a,t} = [\mu_t]_a$ and $\sigma_{a,t}^2 = [\Sigma_t]_{a,a}$. Importantly, $\Sigma_t$ depends solely on the initial prior covariance and the action history, making the sequence of marginal uncertainties $\sigma_{a,t}$ perfectly deterministic and observable (\cref{eq:var}). In contrast, $\mu_{t}$ depends on the observed rewards $r_s$ (\cref{eq:mean-update}), which is observable to the learner but not to the inverse learner.

The randomized mechanism of~\cref{eq:TS} was originally designed by \citet{Thompson1933ONTL} to approximate
the likelihood that one arm's mean reward is larger than those of all other arms, the so called \textit{Probability of Maximality} (PoM).
Indeed, it can be shown\footnote{Ch.~36 (Ex.~36.3) of
\citet{Lattimore_Szepesvári_2020}.} that the conditional distribution of the
\emph{identity} of the optimal arm coincides exactly with that of the action chosen by TS:
\[
\underbrace{\mathbb{P}\big(\operatorname*{arg\,max}_{b \in [K]} \theta_b = \cdot \mid \mathcal{F}_{t-1}\big)}_{:=\mathrm{PoM}(t)}
=
\underbrace{\mathbb{P}(a_t=\cdot \mid \mathcal{F}_{t-1})}_{\text{TS sampling prob.}}
\]
where $\mathcal{F}_{t-1}$ denotes all observations up to round $t-1$.

\subsection{LITE sampling}\label{sec:LITE_sampling}
Although the TS mechanism already provides samples from the PoM, direct access to it opens up new possibilities such as Entropy Search~\citep{hennig2012entropy, menet2025lite}, Bayes-optimal batch selection~\citep{fromer2025batched, menet2025lite}, policies for zero-sum two-player games~\citep{o2021variational}, and scaling up Thompson sampling via fine-tuning of LLMs~\citep{menet2025thompsonsamplingfinetuningllms}.

Therefore, we introduce a family of algorithms whose sampling probability is directly connected to the PoM.

\begin{definition}[$v$-LITE]\label{def:v_lite}
 Let $v:\mathbb R \to (0,1)$ be a strictly increasing and continuously differentiable CDF, and let $\mu_a \in \mathbb R, \sigma_a > 0$ be given location and scale parameters for each arm $a \in [K]$. We define the $v$-LITE estimator of PoM as the probability vector $q$ with\looseness-1
    \begin{equation}\label{eq:q_a}
        q_a := v\Big(\frac{\mu_a-\kappa^*}{\sigma_a}\Big) \quad \forall a \in [K],
    \end{equation}
    where the threshold $\kappa^* \in \mathbb R$ is the unique constant ensuring normalization, i.e., that $\sum_{a=1}^K q_a = 1$.
\end{definition}
Given that this approximate PoM ($q_t$) can be obtained efficiently via binary search for $\kappa^*$ \citep{menet2025lite}, a natural approximation to TS is a randomized algorithm that samples arms directly according to $v$-LITE. 

\begin{definition}[$v$-LITE Sampling, $v$-LS]\label{def:v_LS} 
    At round $t$, $v$-LS computes $\q$ according to Definition~\ref{def:v_lite}, where $(\mu_{a,t},\sigma_{a,t})$ are the posterior mean and standard deviation of the Gaussian marginals $\theta_a$. Then, $v$-LS selects an action $a_t\sim\mathrm{Cat}(q_t)$ and updates the posterior parameters.
\end{definition}

The $v$-LS family is motivated by two prominent special cases that approximate TS:
\begin{itemize}
    \item \textbf{VBOS}: Setting $v(x) = \Psi(x) := \exp(-(\sqrt{x^2 + 4} - x)^2/8)$ recovers variational Bayesian optimistic sampling (VBOS), a popular and theoretically sound no-regret approximation of TS for subgaussian rewards \citep{o2021variational, tarbouriech2023probabilistic,menet2025thompsonsamplingfinetuningllms}.
    \item \textbf{F-LITE}: Setting $v=\Phi$, where $\Phi$ is the standard normal CDF, recovers the F-LITE estimator of Gaussian PoM \citep{menet2025lite}. F-LITE is a highly accurate TS approximation derived using extreme value theory, though no-regret guarantees are not established for bandit play under $\Phi$-LS.
\end{itemize}

\(v\)-LS applies under broad distributional assumptions; its no-regret
properties are established in \cref{app:gen_lite}. Particularly, for subgaussian $\theta$ and $v(x) \!=\! \Psi(x)\!$,
it achieves $\tilde {\mathcal O}(\sqrt{T K})$ regret~\citep{tarbouriech2023probabilistic}. In the correlated Gaussian setting,
this improves to \(\widetilde{\mathcal O}(\sqrt{T\gamma_T})\)
\citep{menet2025thompsonsamplingfinetuningllms}, where \(\gamma_T\) is
the maximum information gain \citep{srinivas2009gaussian} and scales as
\(\mathcal O(d\log T)\) for an effective dimension \(d\). We write LS when referring collectively to \(v\)-LS for all admissible choices of \(v\).\looseness-1

\section{Problem Formulation}\label{sec:inverse_learning}

We consider a randomized no-regret algorithm $\mathcal A$ as a demonstrator. In each round $t$, $\mathcal A$ samples $a_t \sim p_t$ conditional on its interaction history
\(\mathcal F_{t-1}\). 

The observer has access to the complete action trajectory
\(\{a_t\}_{t=1}^T\), but not to the corresponding rewards.
We assume that the observation protocol permits the construction of estimates
\(\hat p_t\) of the conditional action distributions \(p_t\), either
from the trajectory itself or from additional action-level observations. In \cref{sec:PoM_estimated}, we quantify how
the estimation error in \(\hat p_t\) affects inversion under suitable approximation
guarantees.\looseness-1

Access to \(\hat p_t\) is behavioral rather than white-box: the
observer does not see the realized rewards, the demonstrator's internal
beliefs, or the exact mechanism mapping its history to \(p_t\). We initially
impose no relationship between the demonstrator's internal reward
estimates and its sampling probabilities. The structural assumptions
that make inversion possible are introduced in \cref{sec:iLITE}.\looseness-1

\paragraph{Objective. } 
Our goal is to recover the \emph{internal gap estimates} that drive the
demonstrator's behavior, focusing on the rounds during which it is still
actively exploring. Because the observer does not observe rewards, it
cannot identify the true optimal arm. We therefore use the empirically
most-pulled arm as an observable reference: $\hat{a}^* := \argmax_a N_a(T)$, where ties are broken arbitrarily.
For each \(a\in[K]\) and
\(t\le T\), we define the demonstrator's unobserved internal gap estimate
relative to \(\hat a^*\) as
\begin{equation}\label{eq:gapA_definition}
\gapA=\mu_{\hat a^*,t}-\mu_{a,t},
\end{equation}
where $\mu_{a,t}$ is the demonstrator's internal mean estimate for arm $a$ at time $t$. 
These stepwise gaps act as a recentering of the internal estimates around the reference arm $\hat a^*$. \looseness-1

Once the demonstrator concentrates its play on a single arm, its behavior becomes near-deterministic and therefore contains little information about the magnitude of the remaining gaps: many different internal gap values induce essentially the same near-deterministic behavior. 
In that regime, continued inversion is neither statistically stable nor necessary for explaining the demonstrator’s decisions. 
To reflect this, we restrict inversion to a set of \emph{relevant} rounds, $M^\gamma_T$, where the demonstrator is still actively exploring, defined as: 

\begin{definition}[Relevant Rounds]\label{def:safe_round}
    For a fixed threshold $\gamma \in (0,1/2)$, a round $t \le T$ belongs to the set of relevant rounds $M_T^\gamma$ if 
    $a_t \neq \hat a^*$,
    $p_{a_t,t} \ge \gamma$ and $p_{\hat a^*,t} \ge \gamma$, where $a_t$ is the pulled arm and $\hat a^*$ is the reference arm.
\end{definition}
This set depends on the demonstrator and the generated trajectory, with more details in~\cref{sec:approx_ilite,app:conditioning}.
Using the concept of relevant rounds for inversion, we define the trajectory-level inversion error as follows:\looseness-1

\begin{definition}[Trajectory-Level Inversion Error]\label{def:trajectory_error}
    The trajectory-level decision-relevant inversion error is defined as the average gap reconstruction error on relevant rounds over the entire time horizon:
    \begin{equation}\label{eq:average_objective}
        \mathcal{E}_T^\gamma := \frac{1}{T}\sum_{t=1}^T \mathds{1}\{t\in M^\gamma_T\} \, \big| \hat \Delta_{a_t,t} - \Delta_{a_t,t}^\mathcal{A} \big|,
    \end{equation}
    where $\hat \Delta_{a_t,t}$ is the reconstructed gap of arm $a_t$ at time $t$.\looseness-1
\end{definition}

This objective can be interpreted as follows. 
Define the realized relevant coverage as
\begin{equation}
    C_{T}^\gamma:=\frac{|M_T^\gamma|}{T}, 
\end{equation}
and the average reconstruction error over relevant rounds as
\begin{equation}
    L_T^\gamma:=\frac{1}{\max\{|M_T^\gamma|, 1\}} \sum_{t=1}^T \mathds{1}\{t\in M^\gamma_T\} | \hat \Delta_{a_t,t} - \Delta_{a_t,t}^\mathcal{A} |.
\end{equation}
Hence, 
\[\mathcal{E}_T^\gamma=C_{T}^\gamma L_T^\gamma,\]
is a coverage-weighted error. 
Importantly, $C_T^\gamma$ is determined by the demonstrator and the realized trajectory, not by the observer. Therefore, for a fixed demonstrator, horizon, and threshold, the observer cannot improve $\mathcal{E}_T^\gamma$ by changing which rounds are inverted.\looseness-1

Coverage may decrease as the demonstrator concentrates on
\(\hat a^*\), and can vanish in the limiting case. This is
consistent with our objective: when \(a_t=\hat a^*\), both the
true and estimated relative to
\(\hat a^*\) are identically zero, while gaps involving
vanishing-probability actions cannot be stably recovered from behavior.\looseness-1

\paragraph{Related work}
Our demonstrator-focused metric \eqref{eq:average_objective} deliberately departs from standard Inverse Bandits (IB)~\citep{guo2022learning}, which seek to recover the true underlying means $\hat{\theta}_a$. As shown by \citet{guo2022learning}, reconstructing $\hat{\theta}_a$ suffers an information-theoretic lower bound of: 
\[\mathbb{E}[|\hat{\theta}_a-\theta_a|]\ge \frac{1}{16}\min\Bigg \{ \frac{1}{\sqrt{\mathbb{E}[N_{a}(T)]}}, 1 \Bigg \}.\]
For standard no-regret demonstrators, where a suboptimal arm is pulled only \(O(\log T)\) times, the lower bound
decreases only as \(\Omega(1/\sqrt{\log T})\).
\citet{guo2022learning} provide nearly matching estimators for
parameterized UCB~\citep{auer2002finite,srinivas2009gaussian} and successive-arm-elimination~\citep{JMLR:v7:evendar06a} demonstrators. Although
the logarithmic-regret case remains consistent, achieving a polynomially
decreasing estimation error requires deliberately inflated exploration,
resulting in \(O(T^\alpha)\) regret for some \(\alpha>0\).
Our objective is different: rather than recovering the stationary environment means, we reconstruct the demonstrator's evolving internal
gap estimates that govern its actions. Consequently, our method does not require modifying the demonstrator to induce additional exploration for
true-reward identification. It applies to standard randomized no-regret demonstrators using action-level behavioral observations and permits approximate inversion under policy misspecification (Section~\ref{sec:approx_ilite}).\looseness-1

\section{\texttt{\lowercase{i}LITE}: Inverting LS}\label{sec:iLITE}
We start by illustrating the idea of our method in an idealized setting to build intuition. 
\subsection{Inversion in an idealized setting}\label{sec:idealized}
We define \(v\)-LS (\cref{def:v_LS}) as our \emph{reference demonstrator}. Under the additional assumptions below, we show that it can be perfectly inverted; we relax these assumptions in subsequent sections.\looseness-1
    \begin{assumption}\label{ass:dem_v}
        The demonstrator is \emph{exactly} $v$-LS.\looseness-1 
    \end{assumption}
    \begin{assumption}\label{ass:full_knowledge}
        At each time $t$, the observer has full knowledge of the sampling distribution $q_t$ (\cref{eq:q_a}) of the demonstrator, i.e., $a_t \sim Cat(q_t)$.
    \end{assumption}
    \begin{assumption}\label{ass:parameters}
       The observer knows the initial prior covariance $\Sigma_0$, the reward noise variance $\alpha^2$.
    \end{assumption}

    The sampling probability $q_t$ reveals a directly computable scaled and shifted proxy for the mean posterior:\looseness-1
    \begin{align}\label{eq:zat}
        \z:=v^{-1}(\q)=\frac{\mu_{a,t}-\kappa^*_t}{\sigma_{a,t}}.
\end{align}
As a result, it holds that
\begin{equation}
    \mu_{a,t}=z_{a,t}\sigma_{a,t}+\kappa^*_t,
\end{equation}
where $\sigma_{a,t}^2$ is the reward variance of arm $a$ at time $t$ and can be computed, as it only depends on the pulled arms up to time $t$ and the prior, see~\cref{eq:var}. Even if 
$\kappa_t^*$ is unknown, it is only used internally by the demonstrator to normalize $q_t$. As a result, the internal gaps $\gapA$ (\cref{eq:gapA_definition}) w.r.t. the most-pulled arm $\hat a^*:=\argmax_a N_a(T)$ can be recovered exactly:
\begin{equation}\label{eq:exact_inv}
    \begin{split}
        \gaphat=\gapA&:=\zstar\sigma_{\hat a^*,t}-\z\sigma_{a,t}.
    \end{split}
\end{equation} 
In this idealized setting, $\mathcal{E}^{\gamma}_T=0$ $\forall\>\gamma\in(0,1)$, i.e., perfect inversion can be performed.

\subsection{Approximate iLITE}\label{sec:approx_ilite}
This perfect inversion relies on two major assumptions: the learner is \emph{exactly} the reference demonstrator (\cref{ass:dem_v}), and its sampling probabilities are known \emph{without approximations} (\cref{ass:full_knowledge}). In the following subsections, we relax these two assumptions. The proofs of this section are deferred to~\cref{app:proofs_ilite}.

\subsubsection{PoM estimated}\label{sec:PoM_estimated}
We first retain \cref{ass:dem_v} but relax
\cref{ass:full_knowledge}. The observer now has access only to estimates
\(\widehat q_t\) of the \(v\)-LS distributions \(q_t\).
For $\eta>0$, we define the uniform approximation event
\begin{equation}
    \mathcal G_\eta
    :=
    \left\{
        \max_{t\in[T]}\max_{a\in[K]}
        \left|
            \hat q_{a,t}-q_{a,t}
        \right|
        \leq\eta
    \right\}.
    \label{eq:uniform-approximation-event}
\end{equation}
Conditioned on this event occuring, we bound the inversion error
\(\mathcal E_T^\gamma\) from \cref{eq:average_objective} in terms of
\(\eta\). The result is independent of how \(\hat q_t\) is
constructed: any observation protocol that ensures
\(\mathbb P(\mathcal G_\eta)\geq 1-\delta\) yields the corresponding
high-probability inversion guarantee.\looseness-1

On \(\mathcal G_\eta\), inversion proceeds as in the idealized setting:
\begin{equation}
    \hat\Delta_{a,t}
    =
    \hat z_{\hat a^*,t}
    \sigma_{\hat a^*,t}
    -
    \hat z_{a,t}\sigma_{a,t},
    \qquad
    \hat z_{a,t}
    :=
    v^{-1}(\hat q_{a,t}),
    \label{eq:approximate-gap-inversion}
\end{equation}
where the variances $\sigma_{a,t}^2$ remain
computable from the observed action trajectory. We refer to this
inversion procedure as \texttt{iLITE} and summarize it in
\cref{algo:gap_estimation_2}.\looseness-1

The oracle relevance set $M_T^\gamma$ from
\cref{def:safe_round} depends on the unknown probabilities \(q_t\) and
is therefore unavailable to the observer. We instead define, for $\rho>0$, the corresponding empirical relevance
set
\begin{equation}
    \widehat M_T^\rho
    :=
    \left\{
        t\in[T]:
        a_t\neq\widehat a^*,\ 
        \widehat q_{a_t,t}\geq\rho,\ 
        \widehat q_{\widehat a^*,t}\geq\rho
    \right\}.
    \label{eq:empirical-relevance-set}
\end{equation}
On $\mathcal G_\eta$, the true and empirical relevance sets satisfy
\begin{equation}
    \widehat M_T^{\gamma+\eta}
    \subseteq
    M_T^\gamma
    \subseteq
    \widehat M_T^{\gamma-\eta}.
    \label{eq:relevance-set-sandwich}
\end{equation}

Thus, by computing gap estimates on
\(\widehat M_T^{\gamma-\eta}\), \texttt{iLITE} covers every oracle-relevant
round. This allows us to bound the oracle error
\(\mathcal E_T^\gamma\) without requiring the observer to know
\(M_T^\gamma\).

Restricting inversion to relevant rounds is necessary because
\(v^{-1}\) is not globally Lipschitz on \((0,1)\): for the reference
CDFs, \(\lvert(v^{-1})'(u)\rvert\) diverges as \(u\)
approaches \(0\) or \(1\), potentially amplifying small probability
errors. We refer to \cref{app:conditioning} for a detailed illustration of this non-Lipschitz behavior.

Therefore, we restrict gap recovery to a domain where $v^{-1}$ is guaranteed to be locally Lipschitz with a finite constant
\begin{equation}
    c_{\beta}
    :=
    \sup_{u\in[\beta,\,1-\beta]}
    \left|(v^{-1})'(u)\right|\text{, where } \beta\in(0,1/2)
    \label{eq:local-lipschitz}
\end{equation}

In particular, if \(t\in \widehat M_T^{\gamma-\eta}\), then, for each
\(j\in\{a_t,\widehat a^*\}\), it holds that $\hat q_{j,t}\in[\gamma-\eta,1-\gamma+\eta].$
On \(\mathcal G_\eta\), this implies
\[
    q_{j,t},\hat q_{j,t}
    \in[\gamma-2\eta,1-\gamma+2\eta] \> \text{and } \beta=\gamma-2\eta.
\]

In \cref{algo:gap_estimation_2}, we set
\(\rho=\gamma-\eta\), so that \texttt{iLITE} computes gap estimates on
\(\widehat M_T^{\rho}\), which contains \(M_T^\gamma\) on
\(\mathcal G_\eta\). \looseness-1

\begin{algorithm}[H]
\caption{\texttt{iLITE}}
\label{algo:gap_estimation_2}
\begin{algorithmic}[1]
\REQUIRE Trajectory $\{a_{t}\}_{t\le T}$, CDF $v$, $\Sigma_{0},\alpha^2$, $\rho \le 1/2$\looseness-1
\STATE $\hat a^* \gets \argmax_{a \in [K]} N_a(T)$  {\small\textit{// Empirical best arm}}
\FOR{$t\in \{1,2,\dots,T\}$}
    \STATE \small\textit{// Obtain PoM estimates}
    \STATE Compute $\hat q_{a_t,t},\qhatstar$
    \IF{
$a_t\neq\widehat a^*$ and
$ \hat q_{a_t,t},
\hat q_{\widehat a^*,t}
\geq\rho$
}
        \STATE \small\textit{//Compute posterior variances}
        \STATE Compute $\sigma_{a_t,t},\sigma_{\hat a^*,t}$ as in Equation~\eqref{eq:var} 
        \STATE $\hat z_{a_t,t}\gets v^{-1}(\hat q_{a_t,t})$ and  $\zhatstar\gets v^{-1}(\qhatstar)$
        \STATE $\hat\Delta_{a_t,t} \gets \zhatstar\sigma_{\hat a^*,t}-\hat z_{a_t,t}\sigma_{a_t,t}$
    \ENDIF
\ENDFOR
\STATE \textbf{Return}: $\hat\Delta_{a_t,t}$ for all relevant rounds
\end{algorithmic}
\end{algorithm}

The following lemma quantifies how an error in the estimated sampling
probabilities propagates through the inverse map $v^{-1}$.

\begin{restatable}{lemma}{gapeps}\label{lem:gaps_eps} 
Let 
$0<\eta<\rho\le1/2$. Suppose that
$t\in \widehat M_T^{\rho}$ is a relevant inversion round. On the event $\mathcal G_\eta$,\looseness-1
   \begin{equation}
       |\hat\Delta_{a_t,t}-\Delta^{LS}_{a_t,t}|\le (|\sigma_{\hat a^*,t}|+|\sigma_{a_t,t}|)c_{\rho-\eta} \eta,
    \end{equation}
where $c_{\rho-\eta}$ is the Lipschitz constant of $v^{-1}$ on the truncated domain $[\rho-\eta, 1-\rho+\eta]$, defined in \cref{eq:local-lipschitz}.\looseness-1
\end{restatable}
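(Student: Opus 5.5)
Under \cref{ass:dem_v} the true internal gap is given exactly by the idealized inversion formula \cref{eq:exact_inv}:
\[
\Delta^{LS}_{a_t,t}=z_{\hat a^*,t}\sigma_{\hat a^*,t}-z_{a_t,t}\sigma_{a_t,t},\qquad z_{j,t}=v^{-1}(q_{j,t}).
\]
The \texttt{iLITE} estimate \cref{eq:approximate-gap-inversion} has the same form with $\hat z_{j,t}=v^{-1}(\hat q_{j,t})$, and it uses the same posterior standard deviations. These are computed exactly from the observed trajectory via \cref{eq:var} and \cref{ass:parameters}, so they carry no estimation error. Subtracting the two expressions and applying the triangle inequality gives
\[
|\hat\Delta_{a_t,t}-\Delta^{LS}_{a_t,t}|\le |\sigma_{\hat a^*,t}|\,|\hat z_{\hat a^*,t}-z_{\hat a^*,t}|+|\sigma_{a_t,t}|\,|\hat z_{a_t,t}-z_{a_t,t}|.
\]
It therefore suffices to show that $|v^{-1}(\hat q_{j,t})-v^{-1}(q_{j,t})|\le c_{\rho-\eta}\,\eta$ for each $j\in\{a_t,\hat a^*\}$.

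The next step is to locate both probabilities in the truncated domain. Since $t\in\widehat M_T^\rho$, we have $\hat q_{j,t}\ge\rho$ for both $j$. For the upper bound, note that $a_t\neq\hat a^*$ and that the entries of $\hat q_t$ are estimates of a probability vector. The cleanest route is to assume, or to enforce by normalization, that $\hat q_t$ is itself a probability vector, as the paper implicitly does when it writes $\hat q_{j,t}\in[\gamma-\eta,1-\gamma+\eta]$. Then $\hat q_{j,t}\le 1-\hat q_{j',t}\le 1-\rho$, where $j'$ is the other index. On $\mathcal G_\eta$ this yields $q_{j,t}\in[\rho-\eta,1-\rho+\eta]$. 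Since $\eta<\rho\le 1/2$, this interval lies inside $(0,1)$, is nonempty, and $\beta=\rho-\eta\in(0,1/2)$, so $c_{\rho-\eta}$ in \cref{eq:local-lipschitz} is finite. The finiteness follows because $v$ is $C^1$ and strictly increasing: on the compact interval $v^{-1}$ is $C^1$ wherever $v'>0$. If $v'$ vanished somewhere, $c_\beta$ could be $+\infty$, in which case the bound holds trivially.

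The final step is the mean value theorem. The interval $[\rho-\eta,1-\rho+\eta]$ is convex and contains both $q_{j,t}$ and $\hat q_{j,t}$, so the segment between them lies inside it. The mean value theorem then gives $|v^{-1}(\hat q_{j,t})-v^{-1}(q_{j,t})|\le c_{\rho-\eta}|\hat q_{j,t}-q_{j,t}|\le c_{\rho-\eta}\eta$, where the last inequality uses $\mathcal G_\eta$. Substituting this into the triangle-inequality bound gives the claim.

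The main obstacle is the upper endpoint: justifying that $\hat q_{j,t}\le 1-\rho$, and hence $q_{j,t}\le 1-\rho+\eta$. I would handle it through the normalization of $\hat q_t$. The lower endpoint is immediate from membership in $\widehat M_T^\rho$.
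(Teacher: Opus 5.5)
Your proposal is correct and matches the paper's proof: the triangle inequality on the two $z\sigma$ terms, followed by the local Lipschitz (mean value theorem) bound $|v^{-1}(\hat q_{j,t})-v^{-1}(q_{j,t})|\le c_{\rho-\eta}\eta$ on $[\rho-\eta,1-\rho+\eta]$. The paper's appendix proof only asserts that both probabilities lie in this interval, so your explicit check is a useful addition. Note that $q_{j,t}\le 1-q_{j',t}\le 1-\rho+\eta$ holds with no assumption on $\hat q_t$, since $q_t$ is itself a probability vector; the normalization you invoke is needed only to place $\hat q_{j,t}$ in the interval, and it holds automatically for the batched frequency estimates.
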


Aggregating the local error over the relevant rounds gives the following
trajectory-level guarantee:

\begin{restatable}{theorem}{approxlite}
\label{thm:approx_lite}
    Assume that $\mathcal{A}$ satisfies~\cref{ass:dem_v}, that
$\sigma_{a,0}^2\leq 1$ for every $a\in[K]$, and
$0<2\eta<\gamma\le1/2$. Then, on
$\mathcal G_\eta$, every realized action trajectory satisfies
    \begin{align}\label{eq:approx-lite-bound}
    \mathcal{E}_T^\gamma&
    \le 
    c_{\gamma-2\eta}\cdot \eta \Big(\frac{(\alpha\sqrt{2}+1)\sqrt{\gamma_T}+2\alpha}{\sqrt{T}}+\frac{t^*+1}{T}\Big). 
    \end{align}
    Here, $c_{\gamma-2\eta}$ is the Lipschitz constant of $v^{-1}$ on the truncated domain $[\gamma-2\eta, 1-\gamma+2\eta]$, defined in \cref{eq:local-lipschitz}, $\gamma_T$ is the maximal information gain\footnote{This is an information-theoretic quantity defined in~\cref{lem:4}, that only depends on the posterior variances along the trajectory and problem-dependent quantities.}, and $t^*:=\sum_{a\neq\hat a^*}N_a(T)$ counts the number of exploratory rounds.\looseness-1
\end{restatable}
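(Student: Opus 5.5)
The plan is to combine \cref{lem:gaps_eps} with the sandwich \eqref{eq:relevance-set-sandwich}, and then bound the accumulated posterior standard deviations along the trajectory by an elliptical-potential / information-gain argument.

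\textbf{Step 1: pointwise error on oracle-relevant rounds.} On $\mathcal G_\eta$ we have $M_T^\gamma\subseteq \widehat M_T^{\gamma-\eta}$. Apply \cref{lem:gaps_eps} with $\rho=\gamma-\eta$. Since $0<2\eta<\gamma\le 1/2$, we get $0<\eta<\rho\le 1/2$, so the lemma applies. Its constant becomes $c_{\rho-\eta}=c_{\gamma-2\eta}$. Hence, for every $t\in M_T^\gamma$,
\[
|\hat\Delta_{a_t,t}-\Delta^{\mathcal A}_{a_t,t}|\le c_{\gamma-2\eta}\,\eta\,(\sigma_{\hat a^*,t}+\sigma_{a_t,t}).
\]
Summing over $t\in M_T^\gamma$ and dropping the indicator (all terms are nonnegative) gives
\[
T\mathcal E_T^\gamma\le c_{\gamma-2\eta}\,\eta\sum_{t\in M_T^\gamma}\big(\sigma_{a_t,t}+\sigma_{\hat a^*,t}\big).
\]
It then remains to show
\[
\sum_{t\in M_T^\gamma}(\sigma_{a_t,t}+\sigma_{\hat a^*,t})\le(\alpha\sqrt2+1)\sqrt{T\gamma_T}+2\alpha\sqrt T+t^*+1 .
\]

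\textbf{Step 2: the pulled-arm term.} Write $\sum_t\sigma_{a_t,t}\le\sqrt{T\sum_t\sigma^2_{a_t,t}}$ by Cauchy--Schwarz. Because $\sigma^2_{a,t}\le\sigma^2_{a,0}\le1$, we have $\sigma^2_{a_t,t}\le C\log(1+\alpha^{-2}\sigma^2_{a_t,t})$ with $C=1/\log(1+\alpha^{-2})\le \alpha^2+1$ (or a similar constant). The standard Gaussian-posterior identity (the one behind \cref{lem:4}) then gives
\[
\tfrac12\sum_t\log\big(1+\alpha^{-2}\sigma^2_{a_t,t}\big)\le\gamma_T .
\]
Combining these yields a bound of the form $\sqrt{2(\alpha^2+1)T\gamma_T}\le(\alpha\sqrt2+\sqrt2)\sqrt{T\gamma_T}$. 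I would tune the constant to match $(\alpha\sqrt2+1)\sqrt{\gamma_T}$, possibly splitting $\sigma^2\le \alpha^2\cdot\alpha^{-2}\sigma^2$ so that $x\le 2\log(1+x)$ holds on the appropriate range.

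\textbf{Step 3: the reference-arm term (main obstacle).} On relevant rounds $\hat a^*$ is \emph{not} pulled, so $\sigma_{\hat a^*,t}$ cannot be charged to the information gain of the pulled arms. I would split rounds according to whether $\hat a^*$ has been pulled yet.
\begin{itemize}
\item Rounds before $\hat a^*$ is ever played number at most $t^*+1$ (all such rounds pull other arms, plus one boundary round). Using $\sigma_{\hat a^*,t}\le1$ on these rounds contributes the $(t^*+1)/T$ term.
\item After $\hat a^*$ has been pulled, $\sigma_{\hat a^*,t}$ is non-increasing, and at round $t$ it is at most $\alpha/\sqrt{N_{\hat a^*}(t)}$, since Gaussian conditioning only shrinks variance. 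Evaluate $\sigma_{\hat a^*,t}$ through the pulls of $\hat a^*$ and sum over the $\hat a^*$-pulls: $\sum_{n\le T}\alpha/\sqrt n\le2\alpha\sqrt T$.
\end{itemize}
The delicate point is assigning each relevant round to the most recent preceding $\hat a^*$ pull without overcounting. Several relevant rounds can share the same value of $N_{\hat a^*}(t)$, so the direct pairing fails. If that happens, I would fall back on Cauchy--Schwarz combined with the fact that $\hat a^*$ is the most-pulled arm. Alternatively, I would bound $\sigma_{\hat a^*,t}\le\sigma_{\hat a^*,t}-\sigma_{a_t,t}+\sigma_{a_t,t}$ and reuse Step 2, which is where the extra $\sqrt{\gamma_T}$ coefficient (the "$+1$") would come from. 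Dividing the resulting total by $T$ gives \eqref{eq:approx-lite-bound}.
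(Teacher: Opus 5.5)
Your Steps 1 and 2 follow the paper's proof: \cref{lem:gaps_eps} with $\rho=\gamma-\eta$ on $M_T^\gamma\subseteq\widehat M_T^{\gamma-\eta}$, then Cauchy--Schwarz and \cref{lem:4}. Step 3, however, is never closed, and that is a genuine gap. You correctly see that pairing relevant rounds with pulls of $\hat a^*$ double-counts, but neither fallback repairs it. Writing $\sigma_{\hat a^*,t}\le(\sigma_{\hat a^*,t}-\sigma_{a_t,t})+\sigma_{a_t,t}$ is an identity that leaves an uncontrolled difference term, and ``Cauchy--Schwarz plus the most-pulled property'' is not a specified argument.

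Your accounting of the constant is also off. The ``$+1$'' in $(\alpha\sqrt2+1)\sqrt{\gamma_T}$ has nothing to do with the reference arm. It comes from the pulled-arm term via $2/\ln(1+\alpha^{-2})\le 2\alpha^2+1$ (i.e.\ $\ln(1+x)\ge 2x/(x+2)$) and $\sqrt{2\alpha^2+1}\le\alpha\sqrt2+1$. Your cruder estimate $1/\ln(1+\alpha^{-2})\le\alpha^2+1$ does not give this constant for small $\alpha$. The $x\le2\ln(1+x)$ idea also fails once $\alpha^{-2}\sigma^2$ exceeds about $2.5$.

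The missing idea is to count by rounds rather than by $\hat a^*$-pulls. The paper drops the indicator on the reference term and uses $\sigma_{\hat a^*,t}\le\alpha/\sqrt{N_{\hat a^*}(t-1)}$ together with the deterministic bound $N_{\hat a^*}(t-1)=(t-1)-\sum_{a\neq\hat a^*}N_a(t-1)\ge t-1-t^*$. The first $t^*+1$ rounds then contribute at most $1$ each. The remaining rounds contribute $\sum_{t=t^*+2}^{T}\alpha/\sqrt{t-1-t^*}\le2\alpha\sqrt T$. Each round receives its own index $t-1-t^*$, so nothing is double-counted.

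In fact, since you kept the restriction to $M_T^\gamma$, your own remark that $\hat a^*$ is not pulled on relevant rounds settles the step at once. We have $|M_T^\gamma|\le\sum_{a\neq\hat a^*}N_a(T)=t^*$ and $\sigma_{\hat a^*,t}\le\sigma_{\hat a^*,0}\le1$. Hence $\sum_{t\in M_T^\gamma}\sigma_{\hat a^*,t}\le t^*$, which is already below $t^*+1+2\alpha\sqrt T$. Your pairing could likewise be salvaged by charging every non-$\hat a^*$ round at $\sigma\le1$, since there are only $t^*$ of them in total.

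Minor: the posterior at round $t$ involves $N_{\hat a^*}(t-1)$, not $N_{\hat a^*}(t)$.
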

Consequently, if
$\mathbb P(\mathcal G_\eta)\geq 1-\delta$, then
\cref{eq:approx-lite-bound} holds with probability at least
$1-\delta$. If $\mathcal G_\eta$ holds almost surely, taking
expectations gives the following result.

\begin{corollary}
    Under the assumptions of \cref{thm:approx_lite}, if $\mathcal G_\eta$ holds almost surely, then
    \begin{align*}
    \mathbb{E}[\mathcal{E}_T^\gamma]&
    \le 
    c_{\gamma-2\eta}\cdot \eta \Big(\frac{(\alpha\sqrt{2}+1)\sqrt{\gamma_T}+2\alpha}{\sqrt{T}}+\frac{\mathbb{E}[t^*]+1}{T}\Big). 
    \end{align*}
\end{corollary}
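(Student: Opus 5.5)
The corollary follows by integrating the pathwise bound of \cref{thm:approx_lite}. The plan is first to turn the almost-sure occurrence of $\mathcal G_\eta$ into an almost-sure inequality, and then to use monotonicity and linearity of expectation.

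\textbf{Step 1 (pathwise bound).} \cref{thm:approx_lite} states that on $\mathcal G_\eta$ \emph{every} realized trajectory satisfies \cref{eq:approx-lite-bound}. Since $\mathbb P(\mathcal G_\eta)=1$, the random variable $\mathcal E_T^\gamma$ is therefore dominated almost surely by the right-hand side of \cref{eq:approx-lite-bound}. That right-hand side is itself a random variable, since it depends on the trajectory through $t^*=\sum_{a\neq\hat a^*}N_a(T)$. Both sides are nonnegative and bounded on the relevant rounds. Indeed, $\mathcal E_T^\gamma$ is a finite average of terms each bounded by \cref{lem:gaps_eps}, since $\sigma_{a,t}\le\sigma_{a,0}\le 1$. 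Also $t^*\le T$. Hence all expectations below are finite and well defined.

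\textbf{Step 2 (take expectations).} Monotonicity of expectation gives $\mathbb E[\mathcal E_T^\gamma]\le \mathbb E[\mathrm{RHS}]$. The prefactor $c_{\gamma-2\eta}\,\eta$ depends only on $v,\gamma,\eta$, and the terms $\alpha$ and $T$ are deterministic. Linearity then pulls these out, and $\frac{t^*+1}{T}$ becomes $\frac{\mathbb E[t^*]+1}{T}$.

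\textbf{Step 3 (the main obstacle: handling $\gamma_T$).} The only delicate point is whether $\sqrt{\gamma_T}$ is deterministic. I will take $\gamma_T$ as the \emph{maximal} information gain in the sense of \citet{srinivas2009gaussian}, i.e.\ the supremum over action sequences of length $T$ of the information gain defined in \cref{lem:4}. It then depends only on $\Sigma_0$, $\alpha$ and $T$, so it is a constant and passes through the expectation unchanged. This yields exactly the stated bound.

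If instead $\gamma_T$ were read as the trajectory-dependent gain along the realized posterior variances, two routes remain. The simplest is to bound it above by its supremum over trajectories and apply the argument above. Alternatively, Jensen's inequality gives $\mathbb E[\sqrt{\gamma_T}]\le\sqrt{\mathbb E[\gamma_T]}$, which yields the statement with $\gamma_T$ replaced by its expectation. I would present the first reading, which matches the notation of the statement.
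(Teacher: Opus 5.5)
Your proposal is correct and matches the paper's argument: the corollary is obtained by taking expectations of the pathwise bound of Theorem~\ref{thm:approx_lite}, and only $t^*$ is random on the right-hand side. Your first reading of $\gamma_T$ is the right one, because Lemma~\ref{lem:4} defines $\gamma_T:=\max_{L_T} I(X_{L_T};\theta)$ as a maximum over action sequences; it is therefore deterministic, and the Jensen fallback is unnecessary.
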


The bound separates two sources of error. The factor
\(c_{\gamma-2\eta}\eta\) captures the amplification of the
sampling-probability error by \(v^{-1}\). The remaining terms reflect
the demonstrator's learning dynamics: the information-gain contribution
vanishes as \(\sqrt{\gamma_T/T}\), while \(t^*/T\) is the fraction
of rounds on which the demonstrator selects an arm other than the
hindsight reference arm. In particular,
\(\gamma_T\in\mathcal O(d\log T)\) scales with the effective dimension of the correlated arm space for $d$ degrees of freedom.\looseness-1

To relate \(\mathbb{E}[t^*]/T\) to regret, suppose additionally that the
bandit instance has a unique optimal arm $a^*$ and define $\Delta_{\min}
    :=
    \min_{a\neq a^*}(\theta_{a^*}-\theta_a)>0$.
Let
\[
    S_T:=\sum_{a\neq a^*}N_a(T)
\]
be the total number of suboptimal pulls. Then $t^*\leq S_T$. Indeed, if
$\hat a^*=a^*$, the two quantities are equal. Otherwise,
$N_{\hat a^*}(T)\geq N_{a^*}(T)$ because $\hat a^*$ is the
most-pulled arm, and hence
\[
    t^*
    =
    N_{a^*}(T)+S_T-N_{\hat a^*}(T)
    \leq S_T.
\]
Therefore, if the demonstrator has expected regret bounded by
$R(T)$,
\begin{equation}
    \mathbb E[t^*]
    \leq
    \mathbb E[S_T]
    \leq
    \frac{R(T)}{\Delta_{\min}}.
\end{equation}
Thus, for any $v$ such that $v$-LS is a no-regret demonstrator satisfying $R(T) \in o(T)$, the
remainder $\mathbb E[t^*]/T$ vanishes asymptotically.

\paragraph{Batched Bandit Inversion} 
The preceding result is agnostic to how \(\hat q_t\) is constructed
and requires only the uniform approximation guarantee. One possibility
is to fit a parameterized temporal model \(q_t(\phi)\) to the observed
actions using maximum likelihood estimation (MLE), or maximum a
posteriori (MAP) estimation when a prior on \(\phi\) is available. These
methods share information across rounds through the assumed temporal
model, since a single action does not by itself identify \(q_t\).\looseness-1

A simpler and widely used in practice alternative arises under batched play, where the sampling
distribution remains fixed within each batch and can therefore be
estimated directly from empirical action frequencies
\citep{kalkanli2021batched}. We focus on this setting.\looseness-1

We introduce a batched version of \(v\)-LS in which the learner updates
its posterior, and hence its sampling distribution, only between
batches. The same estimation method applies to any randomized batched
demonstrator whose sampling distribution remains fixed within each
batch.\looseness-1

Suppose that $n$ divides $T$, and let $B=T/n$ denote the number of batches of size $n$. At the beginning of batch $b\in[B]$, the batched $v$-LS computes a sampling distribution $q_b$ from its current posterior.
Conditionally on the history available before the batch, it then samples
\[
    a_{b,1},\ldots,a_{b,n}
    \mid \mathcal F_{(b-1)n}
    \overset{\mathrm{i.i.d.}}{\sim}
    \operatorname{Cat}(q_b),
\]
and updates its posterior using the $n$ collected rewards only at the end of the batch.
Consequently, $q_b$ remains fixed throughout batch $b$, and the observer obtains $n$ conditionally independent samples from
$\operatorname{Cat}(q_b)$. The sampling probabilities can thus be estimated using the empirical action frequencies
\begin{equation}\label{eq:q_b}
    \hat q_{a,b}
    :=
    \frac{1}{n}
    \sum_{j=1}^{n}\mathds{1}\{a_{b,j}=a\},
    \qquad a\in[K].
\end{equation}
After the batch has been observed, the same estimate $\hat q_b$
can be used as $\hat q_t$ for every round $t$ belonging to batch
$b$.\looseness-1

Using Hoeffding's inequality together with a union bound, the next result provides an approximation guarantee that holds uniformly over all arms and batches

\begin{restatable}{lemma}{estimatedPoM}
\label{lem:estimate_p}
For any $\delta\in(0,1)$, define
\begin{equation}
    \eta_{n,\delta}
    :=
    \sqrt{
        \frac{\log(2KB/\delta)}{2n}
    }.
    \label{eq:batched_eta}
\end{equation}
Then
\begin{equation}
    \mathbb{P}\left(
        \max_{b\in[B]}
        \max_{a\in[K]}
        \left|
            \hat q_{a,b}-q_{a,b}
        \right|
        \leq \eta_{n,\delta}
    \right)
    \geq 1-\delta.
    \label{eq:uniform_batched_event}
\end{equation}
\end{restatable}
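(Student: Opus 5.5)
The plan is to prove the bound for a fixed batch $b$ and arm $a$, and then take a union bound over all $KB$ pairs. The only subtlety is that $q_b$ is random: it depends on the history $\mathcal F_{(b-1)n}$. So Hoeffding's inequality should be applied conditionally on that history.

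\textbf{Step 1 (conditional Hoeffding).} Fix $b\in[B]$ and $a\in[K]$. By construction of batched $v$-LS, conditionally on $\mathcal F_{(b-1)n}$ the distribution $q_b$ is fixed. The indicators $X_j:=\mathds{1}\{a_{b,j}=a\}$, $j=1,\dots,n$, are then i.i.d.\ Bernoulli$(q_{a,b})$ and take values in $[0,1]$. Their mean is $\hat q_{a,b}$ from \cref{eq:q_b}. Hoeffding's inequality therefore gives, almost surely,
\begin{equation*}
\mathbb P\left(|\hat q_{a,b}-q_{a,b}|>\epsilon \,\middle|\, \mathcal F_{(b-1)n}\right)\le 2\exp(-2n\epsilon^2).
\end{equation*}

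\textbf{Step 2 (remove the conditioning).} Taking expectations over $\mathcal F_{(b-1)n}$ (tower property) yields the same bound unconditionally:
\begin{equation*}
\mathbb P(|\hat q_{a,b}-q_{a,b}|>\epsilon)\le 2e^{-2n\epsilon^2}.
\end{equation*}
This step requires no independence across batches, so the adaptivity of the posterior updates between batches causes no difficulty.

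\textbf{Step 3 (union bound and choice of $\epsilon$).} Summing over the $KB$ pairs $(a,b)$ gives
\begin{equation*}
\mathbb P\left(\max_{b,a}|\hat q_{a,b}-q_{a,b}|>\epsilon\right)\le 2KB\,e^{-2n\epsilon^2}.
\end{equation*}
Now set $\epsilon=\eta_{n,\delta}$ from \cref{eq:batched_eta}. Then $2n\epsilon^2=\log(2KB/\delta)$, so the right-hand side equals $\delta$. Taking complements gives \cref{eq:uniform_batched_event}.

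The main obstacle is only the conditional application in Step 1. Before invoking Hoeffding, we must state explicitly that within-batch samples are conditionally i.i.d.\ given the pre-batch history; this is exactly what the batched protocol guarantees.
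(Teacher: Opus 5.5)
Your proposal is correct and follows the paper's proof essentially step for step. Both arguments apply Hoeffding to the within-batch indicators conditionally on the pre-batch history, take expectations to remove the conditioning, union-bound over the $KB$ arm--batch pairs, and set the deviation level to $\eta_{n,\delta}$ so that the failure probability equals $\delta$.
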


Define the uniform approximation event
\begin{equation}
    \mathcal G_{n,\delta}
    :=
    \left\{
        \max_{b\in[B]}
        \max_{a\in[K]}
        \left|
            \hat q_{a,b}-q_{a,b}
        \right|
        \leq \eta_{n,\delta}
    \right\}.
    \label{eq:batched_good_event}
\end{equation}

Combining \cref{lem:estimate_p} with the local stability of the inverse
map gives a trajectory-level guarantee.

\begin{restatable}{theorem}{batchederror}
    Let $\mathcal{A}$ be exactly batched $v$-LS,
$\sigma_{a,0}^2\leq1$ for every $a\in[K]$, and the Gaussian
observation-noise variance be $\alpha^2$. Let
$0<2\eta_{n,\delta}<\gamma\le1/2$.
    On the event $\mathcal G_{n,\delta}$, every realized action trajectory
satisfies
\begin{equation}
\label{eq:batched_inversion_bound}
\begin{aligned}
\mathcal E_T^\gamma
&\le
c_{\gamma-2\eta_{n,\delta}}\cdot\eta_{n,\delta}
\Biggl[\frac{
(\alpha\sqrt 2+\sqrt n)\sqrt{\gamma_B}+2\alpha}{\sqrt{T}}
\\[-0.2em]
&\hspace{7em}
+\frac{t^*+n(1+B_\alpha)}{T}
\Biggr].
\end{aligned}
\end{equation}
    Here, $B_\alpha:=\min\{1,\alpha\}$, $\gamma_B$ is the maximal information gain, $c_{\gamma-2\eta_{n,\delta}}$ is defined as in~\cref{eq:local-lipschitz}, and $t^*:=\sum_{a\neq\hat a^*}N_a(T)$. 
Consequently, by \cref{lem:estimate_p}, the bound in
\cref{eq:batched_inversion_bound} holds with probability at least
$1-\delta$.
\end{restatable}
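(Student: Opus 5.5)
The plan mirrors the proof of \cref{thm:approx_lite}, replacing the per-round posterior by the per-batch one.

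\textbf{Reduction to Lemma~\ref{lem:gaps_eps}.} On $\mathcal G_{n,\delta}$, the event $\mathcal G_\eta$ of \cref{eq:uniform-approximation-event} holds with $\eta=\eta_{n,\delta}$, because $\hat q_t=\hat q_b$ and $q_t=q_b$ for every round $t$ in batch $b$. By the sandwich \cref{eq:relevance-set-sandwich}, $M_T^\gamma\subseteq\widehat M_T^{\gamma-\eta}$. Applying \cref{lem:gaps_eps} with $\rho=\gamma-\eta$ (admissible since $2\eta<\gamma$) gives, for every $t\in M_T^\gamma$,
\[
|\hat\Delta_{a_t,t}-\Delta^{\mathcal A}_{a_t,t}|\le c_{\gamma-2\eta}\,\eta\,(\sigma_{\hat a^*,t}+\sigma_{a_t,t}).
\]
In the batched model, $\sigma_{a,t}$ is the posterior standard deviation at the start of the batch containing $t$. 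Dropping the relevance indicator, it then suffices to bound
\[
\frac1T\sum_{t:\,a_t\neq\hat a^*}\bigl(\sigma_{a_t,t}+\sigma_{\hat a^*,t}\bigr)
=\frac1T\sum_{b=1}^B\sum_{j:\,a_{b,j}\neq\hat a^*}\bigl(\sigma_{a_{b,j},b}+\sigma_{\hat a^*,b}\bigr).
\]

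\textbf{Bounding the pulled-arm term.} I would handle this with the standard elliptical-potential / information-gain argument, adapted to batches. Within batch $b$ all $n$ pulls use the same stale variance $\sigma_{a,b}$. By Cauchy--Schwarz,
\[
\sum_b\sum_j\sigma_{a_{b,j},b}\le\sqrt{T\sum_b\sum_j\sigma^2_{a_{b,j},b}}.
\]
The inner quantity $\sum_j\sigma^2_{a_{b,j},b}$ is related to the information gained from the batch update via $\log\det(I+\alpha^{-2}\cdot)$. This uses $x\le C\log(1+x)$ for $x\le \alpha^{-2}n$ together with $\sigma_{a,0}^2\le1$, and the paper's \cref{lem:4} as in the proof of \cref{thm:approx_lite}. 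The result should be $\sqrt{T}\,(\alpha\sqrt2+\sqrt n)\sqrt{\gamma_B}+2\alpha\sqrt T$.

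\textbf{Bounding the reference-arm term.} Terms with $a_t=\hat a^*$ vanish, so the number of remaining summands is at most $t^*$. For $\sigma_{\hat a^*,b}$, I would split the batches into those in which $\hat a^*$ has already been pulled and the rest. In the first kind, $\sigma_{\hat a^*,b}$ shrinks like $\alpha/\sqrt{N_{\hat a^*}}$. I would charge it against the $\hat a^*$ pulls, using that $\hat a^*$ is most pulled, and fold this into the information-gain sum. The batches before $\hat a^*$ is first observed contribute at most one batch of stale variance for $\hat a^*$. Since $\sigma\le1$, this yields $\sigma\le1$ times at most $t^*$, plus an extra $n(1+B_\alpha)$ for the batch in which the update lags, where $B_\alpha=\min\{1,\alpha\}$ bounds $\sigma$ after one observation.

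\textbf{Main obstacle.} The delicate step is the batched information-gain bound. Within a batch the variance is not updated, so the classical per-step inequality $\sigma^2_{a_t,t}\le C\log(1+\alpha^{-2}\sigma^2_{a_t,t})$ must be replaced by a batch-level comparison. This comparison is between the stale-variance sum and the log-determinant increase from $n$ simultaneous observations, and it is where the $\sqrt n$ factor and the $nB_\alpha$ boundary term arise. I would first try bounding the stale variance by $\alpha^2$ plus the post-batch variance times the number of pulls of that arm in the batch. I would then sum the two pieces separately, accepting the $n$-dependent constants.
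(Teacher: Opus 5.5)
Your proof is correct in outline, and it differs from the paper's in one place: the reference-arm term. The reduction step matches the paper. On $\mathcal G_{n,\delta}$ the event $\mathcal G_\eta$ holds with $\eta=\eta_{n,\delta}$, and \eqref{eq:relevance-set-sandwich} plus \cref{lem:gaps_eps} with $\rho=\gamma-\eta$ give the per-round bound. Your use of Cauchy--Schwarz followed by a batch-level log-determinant comparison for $\sum_t\sigma_{a_t,t}$ also matches.

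For the reference-arm term, the paper drops the relevance indicator, writes the sum over all $T$ rounds as $\sum_b n\,\sigma_{\hat a^*,b}$, and controls it by decay. With $t^*=\ell n+r$, the first $\ell+1$ batches cost at most $n(\ell+1)$, and batch $\ell+2$ costs at most $nB_\alpha$. The remaining batches are handled via $N_{\hat a^*}(n(b-1))\ge n(b-1)-t^*$ and an integral comparison, giving $2\alpha\sqrt T$. This is the only source of the $2\alpha\sqrt T$ and $n(1+B_\alpha)$ terms.

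You instead keep the restriction $a_t\neq\hat a^*$, which is legitimate because $M_T^\gamma$ contains only such rounds. Then there are exactly $t^*$ summands, each at most $\sigma_{\hat a^*,0}\le1$. This bounds the term by $t^*$ in one line, is strictly tighter, and shows both extra terms are slack. The paper's decay argument would only be needed if the sum ran over all rounds. Your further splitting by whether $\hat a^*$ has been observed, the charging against $\hat a^*$ pulls, and the ``folding into the information-gain sum'' are therefore unnecessary. The last of these is also unjustified as stated, since $\gamma_B$ only controls variances of arms actually pulled in a batch.

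For the pulled-arm term, cite the right tool. The per-step \cref{lem:4} does not apply: with stale variances, $\sigma^2_{a_t,t}$ exceeds the sequential predictive variance that its entropy decomposition relies on. What you need is \cref{lem:4_batched}, applied pathwise since $\Sigma_b$ depends only on the actions. The steps are:
\begin{itemize}
\item $\sum_j\sigma^2_{a_{b,j},b}=\operatorname{tr}\Sigma_b$ for the $n\times n$ start-of-batch covariance $\Sigma_b$.
\item The chain rule gives $I(Y_{1:B};\theta)=\tfrac12\sum_b\ln\det(I_n+\alpha^{-2}\Sigma_b)$.
\item $\det(I_n+\alpha^{-2}\Sigma_b)\ge1+\alpha^{-2}\operatorname{tr}\Sigma_b$, together with concavity of $\ln(1+z)$ on $[0,n/\alpha^2]$ (using $\operatorname{tr}\Sigma_b\le n$), yields $\sum_b\operatorname{tr}\Sigma_b\le 2n\gamma_B/\ln(1+n/\alpha^2)\le(2\alpha^2+n)\gamma_B$.
\end{itemize}
This gives $\sqrt T(\alpha\sqrt2+\sqrt n)\sqrt{\gamma_B}$ with no additive $2\alpha\sqrt T$ and no $nB_\alpha$ term, contrary to what your \emph{Main obstacle} paragraph anticipates.

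Drop the fallback of bounding each stale variance by $\alpha^2$ plus the post-batch variance times the pull count. With an independent prior, an arm with prior variance $1$ pulled once in a batch has stale variance $1$ and post-batch variance $\alpha^2/(1+\alpha^2)$. The proposed inequality therefore fails once $\alpha^2<1/2$.

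With these corrections, your pieces sum to at most the stated bound, and \cref{lem:estimate_p} supplies the probability-$(1-\delta)$ conclusion.
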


The bound reveals a trade-off in the batch size \(n\). Larger batches
provide more samples from each fixed sampling distribution, reducing the
estimation error as
\(\eta_{n,\delta}\in O(n^{-1/2})\), up to logarithmic factors. However,
delaying posterior updates introduces the explicit factors \(\sqrt n\)
and \(n/T\) in \cref{eq:batched_inversion_bound}. Thus, larger batches
improve probability estimation at the cost of less frequent learning
updates.\looseness-1

\subsubsection{$(\varepsilon,\tilde\varepsilon)$-LS-family}
We relax \cref{ass:dem_v} to allow demonstrators that differ from the
reference $v$-LS learner. For every \(t\in M_T^\gamma\), the
inversion error decomposes as 
\begin{equation}
  |\hat\Delta_{a_t,t}-\Delta^{\mathcal{A}}_{a_t,t}|\le|\hat\Delta_{a_t,t}-\Delta^{LS}_{a_t,t}|+|\Delta^{LS}_{a_t,t}-\Delta^{\mathcal{A}}_{a_t,t}|.  
\end{equation}
The two terms capture, respectively, inversion error relative to the
reference and discrepancy between the
internal gaps of the reference and actual demonstrators.
To quantify these discrepancies we introduce the following definitions.
\begin{definition}[Gap mismatch]\label{ass:same_gaps}
Let \(\mathcal A\) and the reference LS learner be evaluated on the same
realized history \(\mathcal F_{t-1}\). Define
\[
\varepsilon_{\mathrm{gap}}(\mathcal A;T)
:=
\max_{t\in[T]}
\left|
\Delta^{\mathrm{LS}}_{a_t,t}
-\Delta^{\mathcal A}_{a_t,t}
\right|.
\]
\end{definition}

\begin{definition}[Sampling-policy mismatch]
Let $p_t$ denote the sampling distribution of $\mathcal{A}$ and let
$q_t$ denote the sampling distribution of the LS reference, with both
policies evaluated conditionally on the same realized history
$\mathcal{F}_{t-1}$. Define
\[
    \varepsilon_{\mathrm{pol}}(\mathcal{A};T)
    :=
    \max_{t \leq T}\max_{a\in[K]}
    \left|p_{a,t}-q_{a,t}\right|.
\]
Since $p_{a,t},q_{a,t}\in[0,1]$, we always have $
    \varepsilon_{\mathrm{pol}}(\mathcal{A};T)\in[0,1]$.
Thus, this quantity is well-defined for every $\mathcal{A}$.\looseness-1
\end{definition}

\begin{definition}[\((\varepsilon,\tilde\varepsilon)\)-LS family]\label{def:demos_family}
An algorithm \(\mathcal A\) belongs to the
\((\varepsilon,\tilde\varepsilon)\)-LS family if, almost surely,
\[
\varepsilon_{\mathrm{pol}}(\mathcal A;T)\leq\varepsilon
\qquad\text{and}\qquad
\varepsilon_{\mathrm{gap}}(\mathcal A;T)\leq\tilde\varepsilon.
\]
\end{definition}

These families are nested in both tolerances. The choices
\(\varepsilon=0\) and \(\tilde\varepsilon=0\) correspond to exact
agreement with the reference sampling policy and internal gaps,
respectively, while \(\varepsilon=1\) imposes no policy restriction.
The resulting inversion guarantees are informative when both
tolerances are small, as shown below.

Conditional on the same realized history, the gap mismatch $\tilde \varepsilon$ is zero for
any algorithm using the same Gaussian prior and posterior-update rule
as the reference learner. This includes TS and other
posterior-based acquisition rules such as EI~\citep{EI}, ES~\citep{malakar2011entropy}, and MES~\citep{pmlr-v70-wang17e}. Consequently, TS belongs to an \((\varepsilon,0)\)-LS family, where the smallest
admissible \(\varepsilon\) measures the discrepancy between its sampling
probabilities and their \(v\)-LS approximation.

\begin{restatable}{theorem}{epsbound}
    Assume that $\mathcal{A}$ belongs to the $(\varepsilon,\tilde\varepsilon)$-LS-family, and $0<2\varepsilon<\gamma\le1/2$. Then, under the same variance and information-gain conditions as for the exact \(v\)-LS result, almost surely, it holds that
    \begin{equation}
        \mathcal{E}^\gamma_T\le c_{\gamma-2\varepsilon}\cdot \varepsilon \Big(\frac{(\alpha\sqrt{2}+1)\sqrt{\gamma_T}+2\alpha}{\sqrt{T}}+\frac{t^*+1}{T}\Big)+\tilde\varepsilon.
    \end{equation}
    Here, $c_{\gamma-2\varepsilon}$ is the constant defined in
\cref{eq:local-lipschitz}, evaluated at $\gamma-2\varepsilon$, $\gamma_T$ is the maximal information gain and $t^*:=\sum_{a\neq\hat a^*}N_a(T)$.
\end{restatable}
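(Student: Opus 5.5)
The plan is to reduce this result to \cref{thm:approx_lite} through the triangle-inequality decomposition stated just before the definitions. The key observation is that the estimated probabilities $\hat q_t$ enter only through the event $\mathcal G_\eta$. So we can let the observer use the demonstrator's actual sampling distribution, $\hat q_t := p_t$, as its estimate of the reference LS distribution $q_t$. Since $\mathcal A$ belongs to the $(\varepsilon,\tilde\varepsilon)$-LS family, $\max_{t,a}|p_{a,t}-q_{a,t}|\le\varepsilon$ holds almost surely. Hence $\mathcal G_\eta$ holds almost surely with $\eta=\varepsilon$, and the condition $0<2\varepsilon<\gamma\le 1/2$ matches the hypothesis $0<2\eta<\gamma$ of \cref{thm:approx_lite}.

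\textbf{Step 1: handle the relevance sets.} The relevant set $M_T^\gamma$ is defined through the demonstrator's probabilities $p_t$ (\cref{def:safe_round}). With $\hat q_t=p_t$, this is exactly the empirical set $\widehat M_T^\gamma$ built from $\hat q$. By the sandwich \eqref{eq:relevance-set-sandwich}, taken with respect to the reference $q$, we have $M_T^\gamma\subseteq\widehat M_T^{\gamma-\varepsilon}$ with respect to the estimates. So every relevant round is one on which \texttt{iLITE} inverts with $\rho=\gamma-\varepsilon$.

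\textbf{Step 2: bound the error on each relevant round.} For each $t\in M_T^\gamma$ we split
\[
|\hat\Delta_{a_t,t}-\Delta^{\mathcal A}_{a_t,t}|\le|\hat\Delta_{a_t,t}-\Delta^{LS}_{a_t,t}|+|\Delta^{LS}_{a_t,t}-\Delta^{\mathcal A}_{a_t,t}|.
\]
\cref{lem:gaps_eps} with $\rho=\gamma-\varepsilon$ and $\eta=\varepsilon$ bounds the first term by $(\sigma_{\hat a^*,t}+\sigma_{a_t,t})\,c_{\gamma-2\varepsilon}\,\varepsilon$. The second term is at most $\tilde\varepsilon$ by \cref{ass:same_gaps}.

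\textbf{Step 3: sum over rounds.} Summing over $t$ and dividing by $T$, the second term contributes at most $\tilde\varepsilon\,|M_T^\gamma|/T\le\tilde\varepsilon$. The first term reduces to the same variance-sum quantity
\[
\tfrac1T\sum_{t\in M_T^\gamma}(\sigma_{\hat a^*,t}+\sigma_{a_t,t})
\]
that is controlled in the proof of \cref{thm:approx_lite}. Its treatment is unchanged: use the elliptical-potential / information-gain argument for $\sum_t\sigma_{a_t,t}$ via Cauchy--Schwarz, yielding $\sqrt{\gamma_T T}$-type terms. For the $\sigma_{\hat a^*,t}$ sum, split according to pulls of $\hat a^*$ versus exploratory rounds, which gives the $t^*+1$ term. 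Because $\sigma_{a,t}$ is a deterministic function of the action history and the prior, and the trajectory is the one actually generated by $\mathcal A$, this bound applies verbatim.

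\textbf{Main obstacle.} The step needing most care is checking that the proof of \cref{thm:approx_lite} used \cref{ass:dem_v} only through the variance/information-gain bound, which depends solely on the action sequence, and not through any distributional property of $v$-LS sampling. If it also relied on the relation $\hat q\approx q$ to identify relevant rounds, Step 1 covers that. I expect no regret property of the demonstrator to be needed, since $t^*$ appears explicitly in the bound. The conclusion then holds almost surely because the family assumption holds almost surely.
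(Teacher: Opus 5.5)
Your proposal is correct and follows essentially the same route as the paper: you take $\hat q_t=p_t$ so that $\mathcal G_\varepsilon$ holds almost surely, split by the triangle inequality, apply \cref{lem:gaps_eps} with $\rho=\gamma-\varepsilon$ and $\eta=\varepsilon$, reuse the action-sequence-only variance-sum argument of \cref{thm:approx_lite}, and bound the gap-mismatch term by $\tilde\varepsilon$. One minor point: because $M_T^\gamma$ is defined through $p_t=\hat q_t$, it coincides with $\widehat M_T^\gamma$, so the inclusion $M_T^\gamma\subseteq\widehat M_T^{\gamma-\varepsilon}$ follows directly from monotonicity in the threshold, and the appeal to the sandwich \eqref{eq:relevance-set-sandwich} is unnecessary.
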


When only estimates \(\{\hat p_t\}_{t=1}^T\) of the demonstrator's
sampling distributions are available, this result can be combined with
the probability-estimation guarantees of the previous subsection.\looseness-1

\section{Experiments}
\label{sec:experiments}

We empirically investigate the two sources of inversion error predicted by our analysis: the
sampling probability estimation and the reference policy misspecification.
Additional experiments are deferred to
\cref{app:experiments}, where we also analyze misspecification of the prior $\Sigma_0$, i.e. we relax~\cref{ass:parameters}.

We use \(K=4\) Gaussian arms with means
\((0,1/3,2/3,1)\), observation variance \(\alpha^2=1\), independent
\(\mathcal N(0,1)\) priors, and \(100\) simulations.
\looseness-1

\noindent\textbf{Batched probability estimation.}
We use batched \(\Psi\)-LS to illustrate the conditioning--coverage
trade-off under an easily implementable observation protocol. The learner's
posterior and sampling distribution remain fixed within each batch, and
we estimate \(q\) using empirical action frequencies. Exact-\(q\)
inversion would recover the internal gaps up to numerical precision. The horizon considered here is $T=20000$.

\begin{figure}[t]
    \centering
    \includegraphics[width=\linewidth]
    {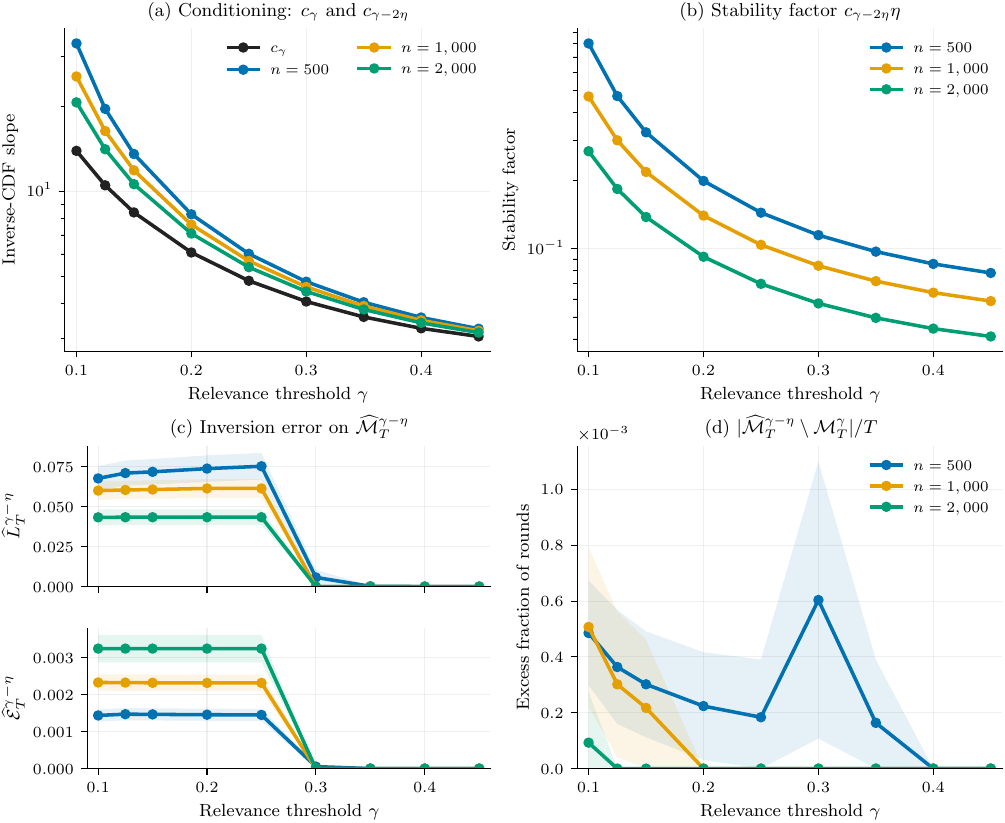}
    \caption{Batched $\Psi$-LS for
    $n\in\{500,1000,2000\}$, over $100$ trajectories. Here, $\eta$ is the
    realized uniform error of the within-batch frequency estimator.
    Panel~(a) compares $c_\gamma$ with $c_{\gamma-2\eta}$, and
    Panel~(b) reports the stability factor $c_{\gamma-2\eta}\eta$.
    Panel~(c) shows the conditional and coverage-weighted inversion errors,
    $\widehat L_T^{\gamma-\eta}$ and
    $\widehat{\mathcal E}_T^{\gamma-\eta}$, computed on the practical
    relevance set $\widehat{\mathcal M}_T^{\gamma-\eta}$.
    Panel~(d) reports the excess coverage
    $|\widehat{\mathcal M}_T^{\gamma-\eta}
    \setminus\mathcal M_T^\gamma|/T$.
    Quantities are reported only for trajectories satisfying
    $2\eta<\gamma$; shaded regions in Panels~(c)--(d) are pointwise
    $95\%$ confidence intervals.}
    \label{fig:batched-psi-ls}
\end{figure}

\Cref{fig:batched-psi-ls} illustrates the two effects predicted by our
analysis. Increasing \(\gamma\) improves conditioning by moving the
inverted probabilities away from the boundary, but reduces coverage.
Increasing the batch size \(n\) improves the frequency estimate, bringing
\(c_{\gamma-2\eta}\) closer to \(c_\gamma\) and reducing
\(c_{\gamma-2\eta}\eta\). The excess
coverage between exact and empirical relevance sets measures the resulting relevance-set error. Because \(n\)
also determines how often the learner updates, differences across batch
sizes additionally reflect the effect of delayed feedback.

\noindent\textbf{End-to-end inversion.}
Finally, we invert TS, Maillard sampling (MS)
\citep{maillard:tel-00845410,bian2022maillard}, and
Boltzmann-Gumbel exploration (BGE) \citep{cesa2017boltzmann} using their exact
sampling probabilities at $\gamma=0.4$ and horizon \(T=2000\) . For each trajectory, we replay the
$\Psi$-LS reference on the same history and retain trajectories satisfying
$2\varepsilon<\gamma$. We also invert a $\Phi$-LS demonstrator through
$\Psi^{-1}$. Because the two LS demonstrators have identical posterior gaps
on the same history, the resulting error isolates sampling-policy and
inverse-link misspecification.

\begin{figure}[t]
    \centering
    \includegraphics[width=\linewidth]
    {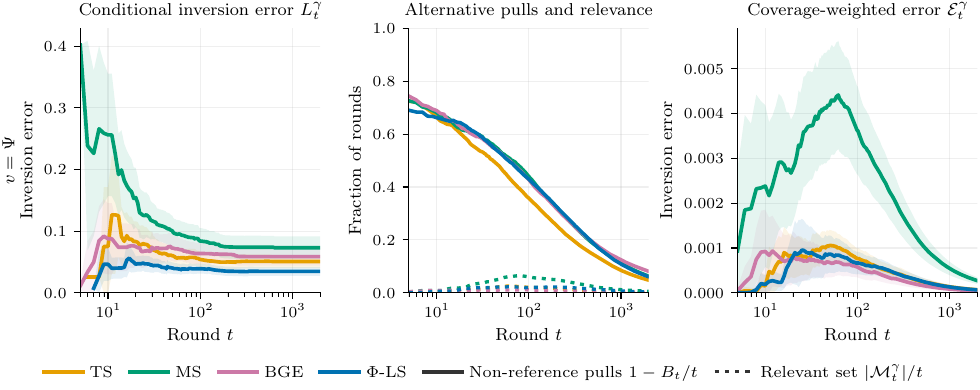}
    \caption{End-to-end inversion for $v=\Psi$, $K=4$, $T=2000$, and
    $\gamma=0.4$. The panels show conditional inversion error
    $L_t^\gamma$, the fractions of non-reference and relevant rounds, and
    coverage-weighted error $\mathcal E_t^\gamma$. Here
    $B_t=\sum_{s=1}^t\mathbbm{1}\{A_s=\widehat a^*\}$ counts pulls of
    the estimated best, or reference, arm. Solid and dotted curves in the
    middle panel denote $1-B_t/t$ and $|\mathcal M_t^\gamma|/t$,
    respectively. Shading gives pointwise $95\%$ confidence intervals over
    theorem-valid trajectories.}
    \label{fig:end-to-end-inversion-psi}
\end{figure}

The middle panel displays the start-learn-converge behavior described by
\citet{garivier2019explore}. Initially, the algorithms distribute pulls
across the arms; during learning, alternative pulls remain frequent and the
relevance set grows; finally, the algorithms concentrate on
$\widehat a^*$. In this last phase, both $1-B_t/t$ and
$|\mathcal M_t^\gamma|/t$ approach zero. Thus, vanishing coverage is mainly
explained by the algorithm repeatedly selecting its estimated best arm,
rather than only by alternative pulls being removed by the relevance
threshold. On these best-arm rounds the target gap is zero by definition.
Consequently, the conditional error stabilizes once few new relevant rounds
are added, while $\mathcal E_t^\gamma$ decreases. The $\Phi$-LS curve also
shows that using the wrong inverse link produces nonzero but bounded error
in this experiment, providing empirical evidence of robustness to moderate
misspecification of $v$. To complement these robustness results, in~\cref{sec:epsilon-ls-experiment} we report empirical estimates of $\varepsilon$ and $\tilde{\varepsilon}$ for each algorithm.\looseness-1

\section{Conclusion}

We introduce an inverse-bandit framework for recovering a demonstrator's
internal suboptimality gaps rather than the environment's true rewards.
We define the invertible \(v\)-LS class, extend it to the
\((\varepsilon,\tilde\varepsilon)\)-LS family, and provide trajectory-level
guarantees under approximate sampling probability estimation, policy misspecification, and
batched bandit play.\looseness-1

Our results expose an intrinsic exploration-inversion trade-off. While
the demonstrator explores, its sampling probabilities reveal its
internal gaps; as its policy concentrates, different gap magnitudes
produce nearly identical behavior, making stable inversion impossible.
This limitation is inherent to the information contained in the action
trajectory and reflected in the coverage $C_T^\gamma$, describing the fraction of invertible rounds.

\noindent\textbf{Limitations and future work.}
Future work includes exploiting temporal continuity between rounds, which
could enable probability estimation under weaker observation protocols. We empirically show one possibility in~\cref{app:sliding_windows}.
Other directions include extending the framework to other
structured policies, such as KL-UCB variants
\citep{garivier2011kl}, and studying its privacy implications. In
particular, recovered gaps may reveal sensitive preferences of the users, connecting
algorithm inversion to differential privacy
\citep{ou2024thompson} remains an open question\looseness-1.

\section*{Acknowledgments}
C. Vernade is funded by the Deutsche Forschungsgemeinschaft (DFG) under both the project 468806714 of the Emmy Noether Programme and under Germany’s Excellence Strategy – EXC number 2064/1 – Project number 390727645. CV also gratefully acknowledges funding from the European Union (ERC, ConSequentIAL, 101165883). Views and opinions expressed are however those of the author(s) only and do not necessarily reflect those of the European Union or the European Research Council. Neither the European Union nor the granting authority can be held responsible for them). CV and M. Petriconi also thank the international Max Planck Research School for Intelligent Systems (IMPRS-IS).

\bibliographystyle{plainnat}
\bibliography{inverse_bandits}


\clearpage
\appendix
\thispagestyle{empty}
\onecolumn
\section{Appendix}
\subsection{Stability of inverting $v$}\label[appendix]{app:conditioning}\label[appendix]{app:lite_sampling}

Because $v$ is a valid CDF, it must satisfy $v^{-1}(q)\rightarrow -\infty$ as $q\downarrow0$ and $v^{-1}(q)\rightarrow\infty$ as $q\uparrow 1$. Consequently, $v^{-1}$ cannot be globally Lipschitz on the open interval $(0,1)$. More precisely, suppose that $v$ is continuously differentiable, with $v'(x)>0$, and that $v'(x)\rightarrow0$ for $x\rightarrow\pm\infty$. By the inverse function theorem, the derivative of the inverse is given by
\[(v^{-1})'(q)=\frac{1}{v'(v^{-1}(q))},\]
and therefore $|(v^{-1})'(q)|\rightarrow\infty$ as $q\downarrow0$ or $q\uparrow1$. 

In the context of inverse bandits, the external observer only has access to an empirical estimate $\hat q$ of the demonstrator's true sampling probability $q$. While $\hat q$ typically concentrates around $q$ as the observation horizon grows, the non-Lipschitz nature of the inverse CDF means that even small estimation errors can be severely amplified during inversion. Near the boundary of the probability simplex, an arbitrarily small perturbation of $q$ may produce an arbitrarily large perturbation of $v^{-1}(q)$, as illustrated in~\cref{fig:error_amplification}.
\begin{figure}[h]
    \centering
    \includegraphics[width=0.6\linewidth]{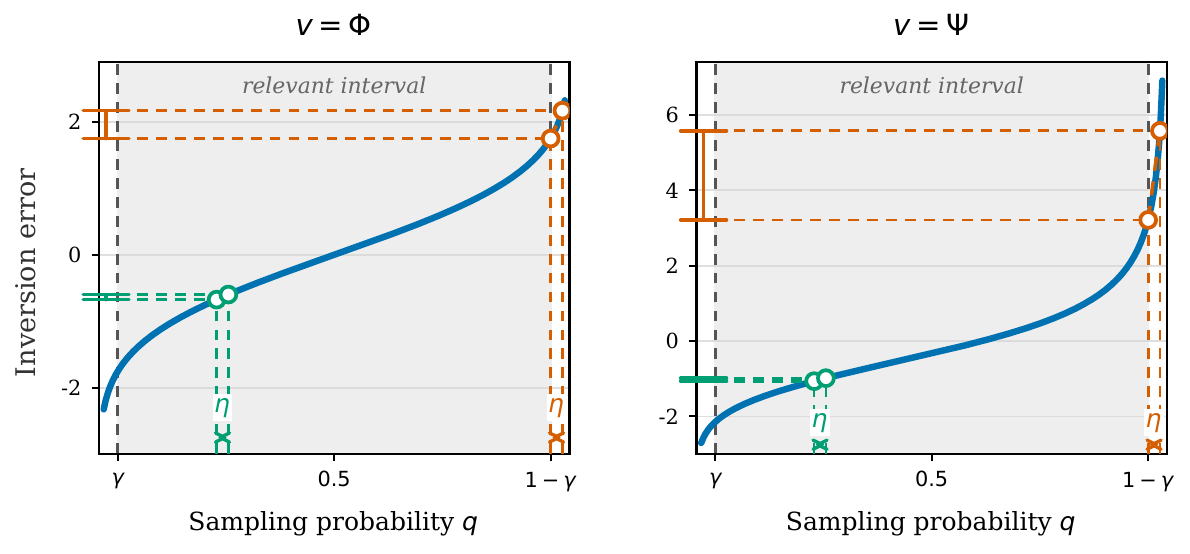}
    \caption{Error amplification under inversion for two reference CDFs, $v=\Phi$ (Gaussian) and $v=\Psi$. The same estimation error $|\hat q-q|<\eta$ is harmless inside the relevant interval $(\gamma,1-\gamma)$ but produces unbounded gap errors outside it.}
    \label{fig:error_amplification}
\end{figure}
The inverse mapping is nevertheless well-conditioned on every compact subinterval of $(0,1)$. For any margin parameter $\gamma\in(0,1)$, define the worst-case slope of $v^{-1}$ within an $\eta$-neighborhood of the relevant interval:
\begin{equation}\label{eq:c_gamma}
c_{\gamma-2\eta}:=\sup_{q\in[\gamma-2\eta,1-\gamma+2\eta]} |(v^{-1})'(q)|<\infty.\end{equation}
By the mean value theorem, the inversion error is then bounded linearly by the estimation error:
\[|v^{-1}(\hat q)-v^{-1}(q)|\le c_{\gamma-2\eta}\,|\hat q-q|.\]

This Lipschitz-type control motivates restricting the inversion to the set of \emph{relevant rounds} $M_T^\gamma$, on which every arm's sampling probability lies safely in $[\gamma, 1-\gamma]$. The cardinality $|M_T^\gamma|$ counts the rounds on which the inversion is uniformly well-conditioned. As the demonstrator learns and concentrates its policy on the optimal arm, the sampling probabilities assigned to suboptimal arms decay towards zero, pushing the inversion into the ill-conditioned boundary regime. The restriction to $M_T^\gamma$ is therefore not an artifact of our analysis, but an intrinsic requirement for any stable approximate inversion procedure. 

It suffices to verify the lower bounds
\(q_{a_t,t},q_{\hat a^*,t}\ge\gamma\). Indeed, since
\(a_t\neq\hat a^*\) and the probabilities sum to one, these
conditions automatically imply
\(q_{a_t,t},q_{\widehat a^*,t}\le1-\gamma\).
We then restrict $\gamma\in(0,1/2)$ in~\cref{def:safe_round} otherwise \(q_{a_t,t},q_{\hat a^*,t}\ge\gamma\) could not be simultaneously satisfied for $\gamma>1/2$. 

\section{Generalized LITE sampling}\label[appendix]{app:gen_lite}

While the main text focuses on the Gaussian reward setting, where the posterior uncertainties $\sigma$ are deterministically known to the observer, enabling exact closed-form inversion, extending LITE~\citep{menet2025lite} to arbitrary arm distributions is a central secondary contribution of this paper. Many real-world applications rely on reward distributions that are non-Gaussian: click-through rates are Bernoulli, waiting times are Exponential, and clinical outcomes can follow heavy-tailed distributions. To invert demonstrators operating in these settings, we first formalize a generic sampling strategy that is both no-regret and invertible for arbitrary reward posteriors. The key idea is to characterize the tail behavior of the marginal reward posteriors through a reference CDF $v$.

To this end, let $v$ be a continuous, strictly increasing CDF. The \emph{conditional tail expectation functional} $g_v:(0,1)\to\mathbb R$ is defined by 
\[g_v(p) := \mathbb E[Z | Z > v^{-1}(p)] \cdot \mathbb P[Z > v^{-1}(p)] = \int_p^1 v^{-1}(r) dr\]
Moreover, for $\mathbb E[Z] \in [0, \infty)$, it holds that $g_v(p) \in [0, \infty)\ \forall p$.

The associated \emph{quasi-surprisal} $i_v:(0,1)\to[0,\infty)$ is defined by
\begin{equation*}
    i_v(p) := \frac{1}{2}\left(\frac{g_v(p)}{p}\right)^2.
\end{equation*}
The quasi-surprisal measures how aggressively the tail of $v$ decays: heavy-tailed distributions yield large $i_v$ values for moderate $p$, while light-tailed distributions (such as the Gaussian) yield smaller values. When the reference function $v$ is chosen to match the actual marginal tail decay of the posterior $\theta$, the resulting $v$-LS policy is guaranteed to be no-regret.

The following proposition makes this precise. It shows that if the posterior's Cram\'{e}r rate function dominates the quasi-surprisal induced by $v$, then the expected cumulative regret is sublinear. The quantity $H_v(q) := \sum_a q_a\, i_v(q_a)$ is the \emph{adapted quasi-entropy}, which bounds the total exploration penalty of the randomized policy.
\begin{definition}\label{def:cramer_rate_function}
    The (Cramér) rate function $\Lambda^*$ of a random variable $X: \Omega \to \mathbb R$ is defined as the convex conjugate of the cumulant generating function $\Lambda$, i.e., 
    \begin{equation*}
        \Lambda^*(\alpha) := \sup_{\beta > 0} \alpha \beta - \Lambda(\beta) \text{ where } \Lambda(\beta) := \log \mathbb E[e^{\beta (X-\mathbb E[X])}].
    \end{equation*}
\end{definition}

\begin{restatable}{proposition}{cumregret}\label{prop:cum_regret}
    Let $\theta_t:\Omega \to \mathbb R^K$ be a sequence of random vectors with marginal (Cramér) rate function satisfying $(\Lambda_{a_t}^*)^{-1}(-\ln p) \leq \sigma_{a_t,t} \sqrt{2 i_v(p)}$ for all $p \in (0,1)$. Let $q_t$ be the $v$-LITE estimator at step $t$ and $a_t \sim q_t$. Assume the adapted quasi-entropy is bounded: $H_v(q) := \sum_{a\in[K]} q_a i_v(q_a) \le \sup_{q\in\Delta^K}H_v(q)=: I_v(K) < \infty$, and that the cumulative predictive variance is bounded by a sublinear function $V_T$, i.e., $\mathbb{E}\big[\sum_{t=1}^T \sigma_{a_t,t}^2\big] \le V_T$. Then it holds that
    \begin{align*}
        Reg(T) &= \textstyle \sum_t \mathbb E[ \max_a \theta_{a,t} - \theta_{a_t,t}] \\
        & \textstyle \leq \sum_t \mathbb E_{a \sim q_t}  [\sigma_{a,t} \sqrt{2 i_v(q_{a,t})}] \\
        & \le \sqrt{2 I_v(K) T V_T}.
    \end{align*}
\end{restatable}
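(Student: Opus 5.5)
The plan is to bound the per-round Bayesian regret through a variational upper bound on $\mathbb E[\max_a\theta_{a,t}\mid\mathcal F_{t-1}]$. I will then show that the $v$-LITE vector $q_t$ is exactly the maximizer of that bound, and finish with Cauchy--Schwarz over rounds.

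\textbf{Step 1 (variational bound on the expected maximum).} Fix $t$ and condition on $\mathcal F_{t-1}$, writing $\mu_a,\sigma_a$ for the posterior parameters. Let $A^\star:=\argmax_a\theta_{a,t}$ and $p_a:=\mathbb P(A^\star=a\mid\mathcal F_{t-1})$, i.e.\ the PoM. Then
\[
\mathbb E[\max_a\theta_{a,t}] = \textstyle\sum_a p_a\,\mathbb E[\theta_{a,t}\mid A^\star=a].
\]
For any event $E$ of probability $p$, the Chernoff/Donsker--Varadhan argument gives, for every $\beta>0$,
\[
\mathbb E[X-\mathbb E X\mid E]\le \bigl(\Lambda(\beta)+\ln(1/p)\bigr)/\beta .
\]
Taking the infimum over $\beta$ yields $(\Lambda^*)^{-1}(-\ln p)$. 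Combining this with the hypothesis on the rate function gives
\[
\mathbb E[\max_a\theta_{a,t}]\le \textstyle\sum_a p_a\bigl(\mu_a+\sigma_a\sqrt{2i_v(p_a)}\bigr)\le \sup_{q\in\Delta^K}F(q),
\]
where $F(q):=\sum_a\bigl(q_a\mu_a+\sigma_a g_v(q_a)\bigr)$. This uses the identity $q\sqrt{2i_v(q)}=g_v(q)$, which follows from the definition of $i_v$ together with $g_v\ge0$.

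\textbf{Step 2 ($v$-LITE maximizes $F$).} Since $g_v'(p)=-v^{-1}(p)$ is decreasing, $g_v$ is concave, so $F$ is concave on the simplex. The stationarity (KKT) condition with multiplier $\kappa$ for $\sum_a q_a=1$ reads $\mu_a-\sigma_a v^{-1}(q_a)=\kappa$, i.e.\ $q_a=v\bigl((\mu_a-\kappa)/\sigma_a\bigr)$. This is precisely Definition~\ref{def:v_lite} with $\kappa=\kappa^*$. Because $v$ takes values in $(0,1)$, the solution lies in the interior, so no boundary multipliers arise. Hence $\sup_q F(q)=F(q_t)$.

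\textbf{Step 3 (per-round and cumulative bound).} Since $a_t\sim q_t$, we have $\mathbb E[\theta_{a_t,t}\mid\mathcal F_{t-1}]=\sum_a q_{a,t}\mu_{a,t}$. Therefore the per-round regret is at most
\[
F(q_t)-\textstyle\sum_a q_{a,t}\mu_{a,t}=\sum_a\sigma_{a,t}g_v(q_{a,t})=\mathbb E_{a\sim q_t}\bigl[\sigma_{a,t}\sqrt{2i_v(q_{a,t})}\bigr],
\]
which is the middle line of the proposition. Summing over $t$ and applying Cauchy--Schwarz twice (over the randomness of $a_t$ and over rounds) gives
\[
\textstyle\sum_t\mathbb E\bigl[\sigma_{a_t,t}\sqrt{2i_v(q_{a_t,t})}\bigr]\le\sqrt{\mathbb E\sum_t\sigma_{a_t,t}^2}\;\sqrt{\mathbb E\sum_t 2i_v(q_{a_t,t})}.
\]
By assumption the first factor is at most $\sqrt{V_T}$. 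For the second, $\mathbb E_{a\sim q_t}[i_v(q_{a,t})]=H_v(q_t)\le I_v(K)$, so the second factor is at most $\sqrt{2I_v(K)T}$, and the final bound follows.

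\textbf{Main obstacle.} I expect the main obstacle to be Step 1. The conditional-expectation tail bound must be stated carefully via the rate function, where the supremum is only over $\beta>0$ and the inverse $(\Lambda^*)^{-1}$ must be well defined and monotone. The sign and integrability conventions on $g_v$ also need checking so that $\sqrt{2i_v}=g_v/q$ holds, which may require restricting attention to $v$ with $\mathbb E[Z]\ge0$. I would first try the standard proof of the Chernoff bound applied to the tilted measure restricted to $E$, as in the VBOS analysis of \citet{o2021variational}.
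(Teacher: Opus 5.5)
Your proposal is correct and follows essentially the same route as the paper: the Donsker--Varadhan/Chernoff bound on $\mathbb E[\theta_a\mid a^*=a]$ (Lemma~\ref{lem:upper_bound_conditional_expectation}), the chain $\mathbb E[\max_a\theta_a]\le\mathcal V(p^*)\le\mathcal V(q_t)$ (Lemma~\ref{lem:expected_maximum_upper_bound}), and Cauchy--Schwarz. The differences are minor. You prove that $v$-LITE maximizes $\mathcal V$ via concavity of $g_v$ and the KKT condition, which also makes explicit the $g_v\ge 0$ requirement behind $q\sqrt{2i_v(q)}=g_v(q)$, whereas the paper simply cites \citet{menet2025lite}. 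You also use a single Cauchy--Schwarz on $\Omega\times[T]$ in place of the paper's arm-then-round Cauchy--Schwarz followed by Jensen.
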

\begin{proof}[Proof of Proposition~\ref{prop:cum_regret}]
    The first inequality follows immediately from Lemma~\ref{lem:expected_maximum_upper_bound}:
    \begin{align*}
        \textstyle \sum_{t=1}^T \mathbb E[ \max_a \theta_{a,t} - \theta_{a_t,t}] & \leq \textstyle\sum_{t=1}^T \mathbb E \Big[\mathcal V(q_t) - \sum_{a=1}^K q_{a,t}\mu_{a,t}\Big] = \textstyle \sum_{t=1}^T \mathbb E \Big[\sum_{a=1}^K q_{a,t} \sigma_{a,t} \sqrt{2 i_v(q_{a,t})}\Big].
    \end{align*}
    For the second inequality, we apply the Cauchy-Schwarz inequality first over the arms, then over the time steps:
    \begin{align*}
        \sum_{t=1}^T \sum_{a=1}^K q_{a,t} \sigma_{a,t} \sqrt{2 i_v(q_{a,t})} 
        &\le \sum_{t=1}^T \sqrt{\sum_{a=1}^K 2 q_{a,t} i_v(q_{a,t})} \sqrt{\sum_{a=1}^K q_{a,t} \sigma_{a,t}^2} \\
        &\le \sqrt{2 I_v(K)} \sum_{t=1}^T \sqrt{\sum_{a=1}^K q_{a,t} \sigma_{a,t}^2} \\
        &\le \sqrt{2 I_v(K)} \sqrt{T} \sqrt{\sum_{t=1}^T \sum_{a=1}^K q_{a,t} \sigma_{a,t}^2}.
    \end{align*}
    Taking the expectation over the trajectory and applying Jensen's inequality alongside the assumption on the cumulative variance yields:
    \begin{align*}
        \mathbb{E}\Bigg[\sqrt{2 I_v(K) T \sum_{t=1}^T \sum_{a=1}^K q_{a,t} \sigma_{a,t}^2}\Bigg] \le \sqrt{2 I_v(K) T \mathbb{E}\Bigg[\sum_{t=1}^T \sum_{a=1}^K q_{a,t} \sigma_{a,t}^2\Bigg]} = \sqrt{2 I_v(K) T \mathbb{E}\Bigg[\sum_{t=1}^T \sigma_{a_t,t}^2\Bigg]} \le \sqrt{2 I_v(K) T V_T}.
    \end{align*}
\end{proof}

In practical deployments, including large-scale A/B tests and federated learning systems, updating the posterior after every observation is often computationally prohibitive or impossible due to delayed feedback pipelines. Batched algorithms address this by freezing the sampling policy over windows of $n$ consecutive rounds and updating only at batch boundaries. The following theorem extends the regret analysis of \cref{prop:cum_regret} to this batched setting and provides the theoretical foundation for inverting batch-constrained demonstrators.

\begin{restatable}[Batched LS regret]{theorem}{batchedregret}\label{thm:batchedregret}
    Assume that $v$ satisfies the assumptions of~\cref{prop:cum_regret} and that the adapted quasi-entropy is bounded: $H_v(q) := \sum_{a\in[K]} q_a i_v(q_a) \le \sup_{q\in\Delta^K}H_v(q)=: I_v(K) < \infty$. 
    If we assume Gaussian reward noise with variance $\alpha^2$ and Gaussian priors, then batched $v$-LS with batch size $n$ and horizon $T = nB$ satisfies
\begin{align*}
  Reg(T) \le 2\sqrt{I_v(K) T \frac{n}{C(n,\alpha)} \gamma_B},
\end{align*}
where $C(n,\alpha) = \ln(1+n/\alpha^2)$ and $\gamma_B$ is the maximal information gain over $B$ batches.
\end{restatable}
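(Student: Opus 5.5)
We follow the proof of \cref{prop:cum_regret} and change only the bound on the cumulative predictive variance, since batching is what breaks the standard information-gain argument. Within batch $b$ the posterior $(\mu_b,\Sigma_b)$ and the policy $q_b$ are frozen, so the per-round Bayesian regret is controlled exactly as before. Lemma~\ref{lem:expected_maximum_upper_bound} gives
\[
\mathbb E\big[\max_a\theta_a-\theta_{a_t}\big]\le \mathbb E\Big[\textstyle\sum_a q_{a,b}\,\sigma_{a,b}\sqrt{2i_v(q_{a,b})}\Big]
\]
for every round $t$ in batch $b$. Summing over the $n$ rounds of the batch and then over $b\in[B]$, and applying Cauchy--Schwarz over arms and then over the $T=nB$ rounds, yields $Reg(T)\le\sqrt{2I_v(K)\,T\,\mathbb E[\sum_t\sigma^2_{a_t,b(t)}]}$. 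Here $b(t)$ is the batch containing $t$, and the variance is the frozen, batch-start variance. The remaining task is to bound $\sum_t\sigma^2_{a_t,b(t)}$ deterministically by roughly $\frac{2n}{C(n,\alpha)}\gamma_B$; the constant $2$ in the statement should come out of matching $\sqrt{2\cdot 2}$.

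The key step is bounding $\sum_b\sum_{j=1}^n\sigma^2_{a_{b,j},b}$ with stale variances. The plan is to relate the $n$ stale variances in a batch to the information gained by the whole batch. With $x_b:=\sum_{j}\sigma^2_{a_{b,j},b}/\alpha^2$, the batch update multiplies the posterior precision by adding $\alpha^{-2}H^{(b)}$, where $H^{(b)}$ is the diagonal count matrix of the batch. The information gain of batch $b$ is $\tfrac12\log\det(I+\alpha^{-2}\Sigma_b^{1/2}H^{(b)}\Sigma_b^{1/2})$. Since $\mathrm{tr}(\alpha^{-2}\Sigma_b^{1/2}H^{(b)}\Sigma_b^{1/2})=x_b$, we need a lower bound of $\log\det(I+A)$ by a concave function of $\mathrm{tr}A$ when $A\succeq0$. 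We would use $\log\det(I+A)\ge\log(1+\lambda_{\max})\ge\frac{\mathrm{tr}A}{\mathrm{rank}\cdot\ldots}$; more simply, we would exploit $\sigma^2_{a,b}\le\sigma^2_{a,0}\le1$. Each eigenvalue of $A$ is at most $n/\alpha^2$, because the counts sum to $n$ and the variances are at most $1$. On $[0,n/\alpha^2]$ the concavity of $\log(1+s)$ gives $\log(1+s)\ge s\,\frac{C(n,\alpha)}{n/\alpha^2}$. Summing over eigenvalues gives $\log\det(I+A)\ge \mathrm{tr}A\cdot\frac{\alpha^2 C(n,\alpha)}{n}$, hence $\sum_j\sigma^2_{a_{b,j},b}\le\frac{n}{C(n,\alpha)}\log\det(I+A_b)$. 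Summing over batches telescopes the log-determinants into $2\gamma_B$, the information gain of $B$ batched queries.

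Plugging this in gives $Reg(T)\le\sqrt{2I_v(K)T\cdot \frac{2n}{C(n,\alpha)}\gamma_B}=2\sqrt{I_v(K)T\frac{n}{C(n,\alpha)}\gamma_B}$, as claimed. We expect the main obstacle to be the eigenvalue bound $\lambda_{\max}(A_b)\le n/\alpha^2$ and the exact normalization convention for $\gamma_B$; the $\tfrac12$ in the definition of the information gain is what determines the final constant. If $\mathrm{tr}A$ alone is too crude for $\lambda_{\max}$, we would use $\lambda_{\max}(A_b)\le\mathrm{tr}A_b=x_b\le n/\alpha^2$, which holds because each $\sigma^2\le1$. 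This suffices for the concavity argument, since it only needs every eigenvalue in $[0,n/\alpha^2]$. A secondary point is that the Bayesian regret decomposition of Lemma~\ref{lem:expected_maximum_upper_bound} must be applied with the batch-start posterior. This is legitimate because, conditional on $\mathcal F_{(b-1)n}$, each within-batch action is drawn from $\mathrm{Cat}(q_b)$ independently of $\theta$.
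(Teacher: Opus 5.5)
Your proposal is correct and follows the paper's proof. The shared steps are the batch-start application of Lemma~\ref{lem:expected_maximum_upper_bound}, the double Cauchy--Schwarz giving $\sqrt{2I_v(K)T\,\mathbb E[\sum_t\sigma^2_{a_t,b(t)}]}$, and a pathwise batched elliptical-potential bound $\sum_b\sum_j\sigma^2_{a_{b,j},b}\le\frac{2n}{C(n,\alpha)}\gamma_B$ from the chain rule plus concavity of $\ln(1+s)$ on $[0,n/\alpha^2]$, which is the paper's \cref{lem:4_batched}. The only cosmetic difference is that you use the $K\times K$ matrix $\alpha^{-2}\Sigma_b^{1/2}H^{(b)}\Sigma_b^{1/2}$ with concavity applied eigenvalue by eigenvalue, while the paper uses the $n\times n$ covariance of the queried values with $\det(I_n+M)\ge 1+\mathrm{tr}M$; the two determinants coincide by Sylvester's identity and give the same constant.
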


\begin{remark}
    For $v=\Psi$, we have $I_\Psi(K)\le \log(K)$, and the bound simplifies to $\mathcal{O}(\sqrt{T \gamma_B \frac{n}{C(n,\alpha)} \log K})$. The ratio $n / \ln(1+n/\alpha^2)$ grows almost linearly in $n$, reflecting the structural penalty of delayed posterior updates. Note that the bound recovers the non-batched rate when $n=1$.
\end{remark}
\begin{proof}[Proof of \cref{thm:batchedregret}]
    Since batched $v$-LS updates its posterior only at batch boundaries, the sampling probabilities $q_b$ and posterior standard deviations $\sigma_{a,b}$ remain constant throughout each batch. Let $a_{i,b}$ denote the $i$-th arm pulled in batch $b$. By \cref{lem:expected_maximum_upper_bound}, applied with the posterior at the start of batch $b$, we have
    \begin{align*}
        \mathrm{Reg}(T) = \sum_{b=1}^B\sum_{i=1}^{n} \mathbb{E}[\max_a \theta_{a}-\theta_{a_{i,b}}] \le \sum_{b=1}^B\sum_{i=1}^{n} \mathbb{E}_{a_{i,b}\sim q_b}[\sigma_{a_{i,b},b}\sqrt{2i_v(q_b(a_{i,b}))}].
    \end{align*}
    This is identical in structure to the first inequality in the proof of \cref{prop:cum_regret}. Applying the same double Cauchy--Schwarz argument (first over arms within each batch, then over batches) yields
    \begin{equation*}
        \mathrm{Reg}(T) \le \sqrt{2 I_v(K) T \,\mathbb{E}\Bigg[\sum_{b=1}^B \sum_{i=1}^{n} \sigma_{a_{i,b},b}^2\Bigg]}.
    \end{equation*}
    Under the Gaussian prior and Gaussian noise assumptions, $\sum_{i=1}^{b} \sigma_{a_{i,b},b}^2 = \mathrm{tr}(\Sigma_b(X_{1:b}))$, where $\Sigma_b(X_{1:b})$ is the $n \times n$ posterior covariance matrix of the arms pulled in batch $b$ under the realized actions. Our batched generalization of the Elliptical Potential Lemma (\cref{lem:4_batched}) bounds this for any realized action sequence $X_{1:B}$ by the maximum information gain over fixed designs $\gamma_B$, giving
    \begin{align*}
        \mathbb{E}\Bigg[\sum_{b=1}^B \mathrm{tr}(\Sigma_b(X_{1:b}))\Bigg] \le \frac{2n}{C(n,\alpha)} \gamma_B,
    \end{align*}
    where $C(n,\alpha) = \ln(1+n/\alpha^2)$. Substituting completes the proof.
\end{proof}

\section{Proofs of Section \ref{sec:approx_ilite}}
\label[appendix]{app:proofs_ilite}

\gapeps*
\begin{proof}
    \begin{align*}
        & |\hat\Delta_{a_t,t,}-\Delta^{LS}_{a_t,t,}|:= |\hat z_{\hat a^*,t}\sigma_{\hat a^*,t}-\hat z_{a_t,t}\sigma_{a_t,t}-(z_{\hat a^*,t}\sigma_{\hat a^*,t}-z_{a_t,t}\sigma_{a_t,t})|\\
        &\le |\hat z_{\hat a^*,t}\sigma_{\hat a^*,t}-z_{\hat a^*,t}\sigma_{\hat a^*,t}|+|z_{a_t,t}\sigma_{a_t,t}-\hat z_{a_t,t}\sigma_{a_t,t}|\\
        &\le |\sigma_{\hat a^*,t}||\hat z_{\hat a^*,t}-z_{\hat a^*,t}|+|\sigma_{a_t,t}||\hat z_{a_t,t}-z_{a_t,t}|
    \end{align*}
    Observe that, for $t\in\widehat M^{\rho}_T$
    \begin{align*}
        |\hat z_{a_t,t}-z_{a_t,t}|:=|v^{-1}(\hat q_{a_t,t})-v^{-1}(q_{a_t,t})|\le c_{\rho-\eta}|\hat q_{a_t,t}-q_{a_t,t}|\le c_{\rho-\eta} \eta,
    \end{align*}
    since $v^{-1}:[\rho-\eta, 1-\rho+\eta] \to \mathbb R$ is Lipschitz, with Lipschitz constant $ c_{\rho-\eta}$. The same inequality holds for $\hat a^*$.
    Combining the inequalities concludes the proof.
\end{proof}

\approxlite*
\begin{proof}
Using the fact that $M_T^\gamma\subseteq\widehat M_T^{\gamma-\eta}$ and~\cref{lem:gaps_eps} with $\rho=\gamma-\eta$, we can write
\begin{align*}
    &\mathcal{E}_T^\gamma\le\frac{1}{T}\sum_{t=1}^T \mathds{1}\{t\in \widehat M_T^{\gamma-\eta}\}| \hat \Delta_{a_t,t} - \Delta_{a_t,t}^{LS}|\\
    &\le \frac{c_{\gamma-2\eta} \eta}{T}\sum_{t=1}^T \mathds{1}\{t\in\widehat M_T^{\gamma-\eta}\}(|\sigma_{a
    _t,t}|+|\sigma_{\hat{a}^*,t}|)
\end{align*}

Using the Cauchy-Schwarz inequality for the first sum, we get: 
\begin{align}
&\frac{c_{\gamma-2\eta} \eta}{T}\sum_{t=1}^T \mathds{1}\{t\in \widehat M_T^{\gamma-\eta}\}(|\sigma_{a
    _t,t}|+|\sigma_{\hat{a}^*,t}|)\\
    & \le \frac{c_{\gamma-2\eta} \eta}{T} \Big( \sqrt{\sum_{t=1}^T \sigma_{a_t,t}^2}\times\sqrt{T}+\sum_{t=1}^T \mathds{1}\{t\in \widehat M_T^{\gamma-\eta}\}\sigma_{\hat a^*,t}\Big)\\
    &\le\frac{c_{\gamma-2\eta} \eta}{T} \Big( \sqrt{T} \sqrt{\frac{C_{\alpha}\gamma_T}{2}}+\sum_{t=1}^T \sigma_{\hat a^*,t}\Big).\label{eq:tau_star_sum}
\end{align}
Note that the last inequality applied~\cref{lem:4}. It remains to bound 
\[
\sum_{t=1}^T \sigma_{\hat a^*,t}.
\]
First, observe that: 
\[\sigma_{\hat a^*,t}\le\sqrt{\Big(\tfrac{1}{\sigma_{\hat a^*,0}^2}+\tfrac{N_{\hat a^*(t-1)}}{\alpha^2}\Big)^{-1}}\le \frac{\alpha}{\sqrt{N_{\hat a^*}(t-1)}}.\]
We know that, 
\begin{equation}\label{eq:na_larger}
N_{\hat a^*}(t-1)=(t-1)-\sum_{a\neq\hat a^*}N_a(t-1)\ge (t-1)-\sum_{a\neq\hat a^*}N_a(T),
\end{equation}
here we should be careful because $(t-1)-\sum_{a\neq\hat a^*}N_a(T)$ could be negative. Therefore, denote $t^*:=\sum_{a\neq\hat a^*}N_a(T)<T$, hence:
\begin{align}
    &\sum_{t=1}^T \sigma_{\hat a^*,t}= \sum_{t=1}^{t^*+1} \sigma_{\hat a^*,t}+\sum_{t=t^*+2}^T \sigma_{\hat a^*,t}\\
    &\le t^*+1+\sum_{t=t^*+2}^{T}\frac{\alpha}{\sqrt{N_{\hat a^*}(t-1)}}\\
    & \le t^*+1 + \alpha\Bigg(\sum_{t=t^*+2}^{T}\frac{1}{\sqrt{t-1-t^*}}\Bigg)\le t^* +1 + 2\alpha\sqrt{T}
\end{align}
\begin{remark}
    Since $\hat a^*$ is a most-pulled arm and $T\ge1$,
$N_{\hat a^*}(T)\ge1$. Hence
$t^*=T-N_{\widehat a^*}(T)\le T-1$, so
$t^*+1\le T$.
\end{remark}

Where we upper bounded $\sigma_{\hat a^*,t}\le \sigma_{\hat a^*,0}\le1$ in the first sum, and we use~\cref{eq:na_larger} in the second one together with an integral comparison. Plugging this into~\cref{eq:tau_star_sum} we get: 
\begin{equation}
    \frac{1}{T}\sum_{t=1}^T \mathds{1}\{t\in M_{T}^\gamma\}| \hat \Delta_{a_t,t} - \Delta_{a_t,t}^{LS}|\le\frac{c_{\gamma-2\eta} \eta}{T} \Big( \sqrt{T} \sqrt{\frac{C_{\alpha}\gamma_T}{2}}+t^*+1+2\alpha\sqrt{T}\Big),
\end{equation}
where $C_{\alpha}:=\frac{4}{\ln{(1+\alpha^{-2})}}$.
Finally, observing that $\frac{2}{\ln(1+\alpha^{-2})} \le 2\alpha^2 + 1$ (since $\ln(1+x) \ge \frac{2x}{x+2}$ for $x>0$) and $\sqrt{2\alpha^2+1} \le \alpha\sqrt{2}+1$, concludes the proof.
\end{proof}

\estimatedPoM*
\begin{proof}
Fix an arm $a\in[K]$ and a batch $b\in[B]$. Conditionally on the
history $\mathcal F_{(b-1)n}$ available before batch $b$, define
\[
    X_{b,j}^a:=\mathbf{1}\{a_{b,j}=a\},
    \qquad j\in[n].
\]
Because the sampling distribution remains fixed within batch $b$, the
variables $X_{b,1}^a,\ldots,X_{b,n}^a$ are conditionally independent
Bernoulli random variables with conditional mean $q_{a,b}$. Moreover,
\[
    \hat q_{a,b}
    =
    \frac{1}{n}\sum_{j=1}^{n}X_{b,j}^a.
\]
Therefore, Hoeffding's inequality gives, for every $x>0$,
\begin{equation}
\begin{aligned}
    \mathbb{P}\left(
        \left|
            \hat q_{a,b}-q_{a,b}
        \right|>x
        \,\middle|\,
        \mathcal F_{b-1}
    \right)
    \leq 2\exp(-2nx^2).
\end{aligned}
\end{equation}
Taking expectations over $\mathcal F_{b-1}$ yields
\begin{equation}
    \mathbb{P}\left(
        \left|
            \hat q_{a,b}-q_{a,b}
        \right|>x
    \right)
    \leq 2\exp(-2nx^2).
\end{equation}
Applying a union bound over the $KB$ arm-batch pairs gives
\begin{equation}
\begin{aligned}
    &\mathbb{P}\left(
        \max_{b\in[B]}
        \max_{a\in[K]}
        \left|
            \hat q_{a,b}-q_{a,b}
        \right|>x
    \right) \\
    &\hspace{4em}
    \leq
    2KB\exp(-2nx^2).
\end{aligned}
\end{equation}
Setting
\[
    x
    =
    \sqrt{\frac{\log(2KB/\delta)}{2n}}
    =
    \eta_{n,\delta}
\]
concludes the proof.
\end{proof}

\batchederror*
\begin{proof}
On the event \(\mathcal G_{n,\delta}\), Lemma~\ref{lem:gaps_eps}
gives
    \begin{align*}
        &\mathcal{E}_T^\gamma\le\frac{1}{T}\sum_{t=1}^T \mathds{1}\{t\in \widehat M_T^{\gamma-{\eta_{n,\delta}}}\}| \hat \Delta_{a_t,t} - \Delta_{a_t,t}^\mathcal{A}| =\\
        &=\frac{1}{T}\sum_{b=1}^B\sum_{t=n(b-1)+1}^{nb}\mathds{1}\{t\in \widehat M_T^{\gamma-{\eta_{n,\delta}}}\}| \hat \Delta_{a_t,t} - \Delta_{a_t,t}^\mathcal{A}| \\
        &\le \frac{c_{\gamma-2\eta_{n,\delta}} \eta_{n,\delta}}{T}\sum_{b=1}^B\sum_{t=n(b-1)+1}^{nb}\mathds{1}\{t\in \widehat M_T^{\gamma-{\eta_{n,\delta}}}\} (|\sigma_{a_t,t}|+|\sigma_{\hat a^*,t}|).
    \end{align*}
    Using Cauchy-Schwarz inequality we get 
    \begin{align}\label{eq:bound}
        &\frac{c_{\gamma-2\eta_{n,\delta}} \eta_{n,\delta}}{T}\sum_{b=1}^B\sum_{t=n(b-1)+1}^{nb}\mathds{1}\{t\in \widehat M_T^{\gamma-{\eta_{n,\delta}}}\} (|\sigma_{a_t,t}|+|\sigma_{\hat a^*,t}|)\le\\
        &\frac{c_{\gamma-2\eta_{n,\delta}} \eta_{n,\delta}}{T}\left(
        \sqrt{\sum_{b=1}^B\sum_{t=n(b-1)+1}^{nb}|\sigma_{a_t,t}|^2}\times\sqrt{nB}
        +
        \sum_{b=1}^B\sum_{t=n(b-1)+1}^{nb}|\sigma_{\hat a^*,t}|
        \right)
    \end{align}
    Since the posterior is constant within each batch, $\sum_{t=n(b-1)+1}^{nb}\sigma_{a_t,t}^2=tr(\Sigma_b)$, with $\Sigma_b$ covariance matrix as defined in~\cref{lem:4_batched} for any sequence of $a_t$. Lemma~\ref{lem:4_batched} holds for every deterministic batched design
and can therefore be applied pathwise to the realized action sequence:
    \begin{equation}
        \sqrt{\sum_{b=1}^B\sum_{t=n(b-1)+1}^{nb}|\sigma_{a_t,t}|^2}\times\sqrt{nB} = \sqrt{nB}
        \sqrt{\sum_{b=1}^B tr(\Sigma_b)}\le \sqrt{nB} \sqrt{\frac{2n\gamma_B}{C(n,\alpha)}},
    \end{equation}
where $C(n,\alpha):=\ln(1+n/\alpha^2).$
Note that, $\frac{2n}{\ln(1+n/\alpha^2)}\le2\alpha^2+n$ and $\sqrt{2\alpha^2+n}\le \alpha\sqrt{2}+\sqrt{n}$, hence: 
\begin{equation}\label{eq:bound_with_lem44}
\sqrt{nB} \sqrt{\frac{2n\gamma_B}{C(n,\alpha)}}\le\sqrt{nB}(\alpha\sqrt{2}+\sqrt{n})\sqrt{\gamma_B}. 
\end{equation}
It remains to bound: 
\begin{equation}\label{eq:bound_ahat}
    \sum_{b=1}^B\sum_{t=n(b-1)+1}^{nb}|\sigma_{\hat a^*,t}| = \sum_{b=1}^B n \sigma_{\hat a^*,b}
\end{equation}
since $\sigma_{\hat a^*,t}$ gets updated only at the beginning of each batch, hence it stays constant for $n$ steps (batch size). 
Here we proceed analogously to the proof of~\cref{thm:approx_lite}. 
Write
\[
t^*=\sum_{a\neq\hat a^*}
N_a(T)=\ell n+r,
\qquad
\ell:=\left\lfloor\frac{t^*}{n}\right\rfloor,
\qquad
r:=t^*\bmod n\in\{0,\ldots,n-1\}.
\]
At the beginning of batch $b$,
\[
N_{\hat a^*}(n(b-1))
=
n(b-1)
-
\sum_{a\neq\hat a^*}
N_a(n(b-1))
\ge
n(b-1)-t^*,
\]
the first \(\ell+1\) batches contribute at most \(n(\ell+1)\), using
\(\sigma_{\hat a^*,t}\le1\).

If \(\ell+2\le B\), we treat batch \(\ell+2\) separately and use
\[
n\sigma_{\hat a^*,n(\ell+2)}
\le
n\min\left\{
1,\frac{\alpha}{\sqrt{n-r}}
\right\}
\le
n\min\{1,\alpha\}=:nB_\alpha
\]
For the remaining batches, the summand is decreasing, so an integral
comparison gives
\begin{align}
\sum_{b=\ell+3}^B
\frac{n\alpha}
{\sqrt{n(b-1)-t^*}}
&\le
n\alpha
\int_{\ell+2}^{B}
\frac{dx}{\sqrt{n(x-1)-t^*}}\\
&=
2\alpha
\left(
\sqrt{n(B-1)-t^*}
-
\sqrt{n-r}
\right)\\
&\le
2\alpha\sqrt{T}.
\end{align}
Consequently,
\begin{align}
\sum_{b=1}^B n \sigma_{\hat a^*,nb}
&\le
n(\ell+1)
+
nB_\alpha
+
2\alpha\sqrt T\\
&=
t^*-r+n
+
nB_\alpha
+
2\alpha\sqrt T\\
&\le
t^*
+
n\bigl(1+B_\alpha\bigr)
+
2\alpha\sqrt T.
\label{eq:batched_reference_bound}
\end{align}
If \(\ell+2>B\), the corresponding tail sum is empty and the same
upper bound continues to hold.

Plugging~\cref{eq:batched_reference_bound} and ~\cref{eq:bound_with_lem44} into~\cref{eq:bound} we get: 
\begin{align}
    &\frac{c_{\gamma-2\eta_{n,\delta}} \eta_{n,\delta}}{T}\left(
        \sqrt{\sum_{b=1}^B\sum_{t=n(b-1)+1}^{nb}|\sigma_{a_t,t}|^2}\times\sqrt{nB}
        +
        \sum_{b=1}^B\sum_{t=n(b-1)+1}^{nb}|\sigma_{\hat a^*,t}|
        \right)\\
    &\le\frac{c_{\gamma-2\eta_{n,\delta}} \eta_{n,\delta}}{T}\left(
        \sqrt{nB}(\alpha\sqrt{2}+\sqrt{n})\sqrt{\gamma_B}
        +
        t^*+n\bigl(1+\min\{1,\alpha\}\bigr)+2\alpha\sqrt{T})
        \right)
\end{align}
Since \(\mathbb P(\mathcal G_{n,\delta})\ge1-\delta\) by
Lemma~\ref{lem:estimate_p}, the result holds with probability at least
\(1-\delta\).
\end{proof}

\epsbound*
\begin{proof}
\begin{align}
    \mathcal{E}_T^\gamma&\le \frac{1}{T}\sum_{t=1}^T \mathds{1}\{t\in \widehat M_T^{\gamma-\varepsilon}\}| \hat \Delta_{a_t,t} - \Delta_{a_t,t}^{\mathcal{A}}| \\
    &\le \frac{1}{T}\sum_{t=1}^T \mathds{1}\{t\in \widehat M_T^{\gamma-\varepsilon}\}\Big(| \hat \Delta_{a_t,t} - \Delta_{a_t,t}^{LS}|+ | \Delta_{a_t,t}^{LS}- \Delta_{a_t,t}^{\mathcal{A}}|\Big)
\end{align}
We first bound $\sum_{t=1}^T \mathds{1}\{t\in \widehat M_T^{\gamma-\varepsilon}\}| \hat \Delta_{a_t,t} - \Delta_{a_t,t}^{LS}|$. The proof is similar to the one of~\cref{thm:approx_lite}. We apply~\cref{lem:gaps_eps}, by observing that $|p_{a_t,t}-q_{a_t,t}|\le\varepsilon$ and replacing $\eta$ with $\varepsilon$, hence:
\[\sum_{t=1}^T \mathds{1}\{t\in \widehat M_T^{\gamma-\varepsilon}\}| \hat \Delta_{a_t,t} - \Delta_{a_t,t}^{LS}|\le c_{\gamma-2\varepsilon}\varepsilon\sum_{t=1}^T \mathds{1}\{t\in \widehat M_T^{\gamma-\varepsilon}\}(|\sigma_{a
    _t,t}|+|\sigma_{\hat{a}^*,t}|)\] 
Then, the proof is analogous to the one of~\cref{thm:approx_lite} and 
\[ \frac{1}{T}\sum_{t=1}^T \mathds{1}\{t\in \widehat M_T^{\gamma-\varepsilon}\}| \hat \Delta_{a_t,t} - \Delta_{a_t,t}^{LS}|\le c_{\gamma-2\varepsilon} \varepsilon \Big(\frac{(\alpha\sqrt{2}+1)\sqrt{\gamma_T}+2\alpha}{\sqrt{T}}+\frac{t^*+1}{T}\Big). \]
Then we bound,  $\sum_{t=1}^T \mathds{1}\{t\in\widehat M_T^{\gamma-\varepsilon}\}|\Delta_{a_t,t}^{LS}- \Delta_{a_t,t}^{\mathcal{A}}|$, here we know by~\cref{ass:same_gaps} that $|\Delta_{a_t,t}^{LS}- \Delta_{a_t,t}^{\mathcal{A}}|\le\tilde\varepsilon$ hence:
\[\sum_{t=1}^T \mathds{1}\{t\in \widehat M_T^{\gamma-\varepsilon}\}|\Delta_{a_t,t}^{LS}- \Delta_{a_t,t}^{\mathcal{A}}|\le\tilde\varepsilon T.\]
Combining the bounds concludes the proof.
\end{proof}

\subsection{Technical Lemmas}

\begin{lemma}[Variational form of KL-divergence]\label{lem:variational_form_of_KL_divergence}
    Fix two probability distributions \(p:\Sigma \to [0,1]\) and \(q:\Sigma \to [0,1]\) over the measurable space \((\Omega, \Sigma)\) such that \(p\) is absolutely continuous with respect to \(q\) (\(p \ll q\)). Then
    \begin{equation*}
        D_{KL}[p||q] = \sup_{X}\{\mathbb E_{p} [X] - \ln \mathbb E_{q} [\exp X]\},
    \end{equation*}
    where the supremum is taken over all measurable \(X:\Omega \to \mathbb R\) such that \(\mathbb E_{p} [X]\) and \(\mathbb E_{q} [\exp X]\) are well-defined.
\end{lemma}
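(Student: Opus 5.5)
The plan is to prove the two inequalities separately via the Gibbs/change-of-measure argument (the Donsker--Varadhan formula). Throughout, write $f:=\frac{dp}{dq}$ for the Radon--Nikodym derivative, which exists because $p\ll q$, and recall $D_{KL}[p\|q]=\mathbb E_p[\ln f]\in[0,\infty]$.

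\emph{Step 1 (the supremum is at most the KL divergence).} Fix an admissible $X$ with $Z:=\mathbb E_q[e^X]\in(0,\infty)$. If $Z=\infty$ the objective is $-\infty$ and there is nothing to show; $Z=0$ is impossible. Define the tilted probability measure $dq_X:=e^X\,dq/Z$. Then $q_X$ is equivalent to $q$, so $p\ll q_X$ with $\frac{dp}{dq_X}=f\,Z\,e^{-X}$. Hence, formally,
\begin{equation*}
D_{KL}[p\|q_X]=\mathbb E_p[\ln f]-\mathbb E_p[X]+\ln Z = D_{KL}[p\|q]-\big(\mathbb E_p[X]-\ln\mathbb E_q[e^X]\big).
\end{equation*}
Nonnegativity of $D_{KL}[p\|q_X]$ (Gibbs' inequality, via Jensen applied to $-\ln$) then gives the bound. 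When $D_{KL}[p\|q]=\infty$ the inequality is trivial. When it is finite, both $\mathbb E_p[\ln f]$ and $\mathbb E_p[X]$ are well defined, so splitting the expectation is legitimate.

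\emph{Step 2 (the supremum is at least the KL divergence).} The natural choice is $X=\ln f$: then $\mathbb E_p[X]=D_{KL}[p\|q]$ and $\mathbb E_q[e^X]=q(f>0)\le 1$, so the objective is at least $D_{KL}[p\|q]$. However, $X$ takes the value $-\infty$ on $\{f=0\}$, and it need not be real valued or have a well-defined expectation. I will therefore use truncations $X_{n,m}:=\max\{\min\{\ln f, m\},-n\}$, which are bounded and hence admissible. For these, $\mathbb E_q[e^{X_{n,m}}]\le \mathbb E_q[f]+e^{-n}\le 1+e^{-n}$. Moreover, $\mathbb E_p[X_{n,m}]\to \mathbb E_p[\ln f]$ as $n,m\to\infty$, handling the negative part by monotone convergence and the positive part by monotone convergence (using $p(f=0)=0$). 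This yields $\sup\ge D_{KL}[p\|q]$, including the case $D_{KL}[p\|q]=\infty$.

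The main obstacle is Step 2's limiting argument, specifically getting the integrability bookkeeping right. The negative part of $\ln f$ under $p$ is always integrable, since $\mathbb E_p[(\ln f)^-]=\mathbb E_q[f(\ln f)^-]\le 1/e$. So only the upper truncation $m\to\infty$ requires monotone convergence, and the lower truncation costs at most $\ln(1+e^{-n})\to 0$. I will take $n\to\infty$ first and then $m\to\infty$.
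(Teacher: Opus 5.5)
Your proof is correct. Step 1 is exactly the paper's argument: the paper introduces the same tilted measure $\lambda$, with density $e^X/\mathbb E_q[e^X]$ with respect to $q$, and writes
\[
D_{KL}[p\|q]-\bigl(\mathbb E_p[X]-\ln\mathbb E_q[e^X]\bigr)=D_{KL}[p\|\lambda]\ge 0 .
\]
The difference is that the paper stops there. It therefore only establishes $D_{KL}[p\|q]\ge\sup_X\{\mathbb E_p[X]-\ln\mathbb E_q[e^X]\}$. It never exhibits test functions that approach the supremum, even though the lemma asserts equality.

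Your Step 2 supplies the missing reverse inequality, using the truncations $X_{n,m}$ of $\ln(dp/dq)$. The details hold up:
\begin{itemize}
\item the bound $\mathbb E_q[e^{X_{n,m}}]\le 1+e^{-n}$ is valid;
\item the negative part $(\ln f)^-$ is $p$-integrable, with bound $1/e$;
\item the order of limits works: $n\to\infty$ by decreasing monotone convergence from a bounded first term, then $m\to\infty$ by increasing monotone convergence with an integrable minorant;
\item all of this covers the case $D_{KL}[p\|q]=\infty$.
\end{itemize}
You also justify splitting $\mathbb E_p[\ln f-X+\ln Z]$ into separate expectations, since finite KL makes $\ln f$ $p$-integrable; the paper does this silently.

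The paper only needs the one-sided bound: Lemma~\ref{lem:upper_bound_conditional_expectation} uses only $D_{KL}\ge\sup_{\lambda\ge 0}\{\ldots\}$. So your extra step gives a complete proof of the identity as stated, rather than anything used downstream.
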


\begin{proof}
    The proof generalizes Theorem 3.2 in \cite{gray2011entropy} from discrete spaces to arbitrary probability spaces.
    Since \(p \ll q\), there exists a Radon-Nikodym derivative \(\tfrac{d p}{dq}(\omega)\), i.e., $\textstyle p(\mathcal A) = \int_{\mathcal A} \tfrac{d p}{dq}(\omega)\, dq(\omega)$ for a function $\tfrac{d p}{dq}(\omega)$ uniquely defined $q$-a.e.
    Now, let \(X:\Omega \to \mathbb R\) be any random variable for which \(\mathbb E_{p} [X]\) and \(\mathbb E_{q} [\exp X]\) are well-defined. Then 
    \begin{align*}
        D_{KL}[p||q] - (\mathbb E_p [X] - \ln(\mathbb E_q [\exp X])) & = \mathbb E_p [\ln \frac{dp}{dq}(\omega) ] - \mathbb E_p [\ln\frac{\exp X}{\mathbb E_q [\exp X]}]\\
        = \mathbb E_p [\ln \left(\frac{d p}{d q} (\omega) \frac{\mathbb E_q [\exp X]}{\exp (X)}\right)] & = \mathbb E_p [\ln \frac{d p}{d \lambda}] = D_{KL}[p || \lambda ]\geq 0,
    \end{align*}
    where $\textstyle \lambda(\mathcal A) := \int_{\mathcal A} \exp (X(\omega)) / \mathbb E_q [\exp X]\, dq(\omega)$ is the tilted measure, and we used $\tfrac{\mathbb E_q [\exp X]}{\exp (X)} = \tfrac{dq}{d\lambda}$.
\end{proof}

\begin{lemma}[Conditioned KL-divergence]\label{lem:conditioned_KL_divergence}
    Consider the probability space \((\Omega, \Sigma, \mathbb P)\) and an event \(\mathcal B \in \Sigma\) of non-zero probability, i.e. \(\mathbb P[\mathcal B] > 0\). Then
    \begin{equation*}
        D_{KL}[\mathbb P[\ \cdot\ | \mathcal B]\ ||\ \mathbb P] = - \ln \mathbb P[\mathcal{B}].
    \end{equation*}
\end{lemma}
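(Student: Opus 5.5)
Let $\mathbb{P}_{\mathcal B}:=\mathbb P[\,\cdot\,|\mathcal B]$. The plan is to compute the Radon--Nikodym derivative of $\mathbb{P}_{\mathcal B}$ with respect to $\mathbb P$ explicitly and then integrate its logarithm.

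The first step is to check absolute continuity, $\mathbb{P}_{\mathcal B}\ll\mathbb P$. For any $\mathcal A\in\Sigma$ with $\mathbb P[\mathcal A]=0$,
\[
\mathbb{P}_{\mathcal B}[\mathcal A]=\frac{\mathbb P[\mathcal A\cap\mathcal B]}{\mathbb P[\mathcal B]}\le \frac{\mathbb P[\mathcal A]}{\mathbb P[\mathcal B]}=0 .
\]
The KL divergence is therefore defined, and there is no need to go through the variational characterization of \cref{lem:variational_form_of_KL_divergence}.

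The second step identifies the density. The candidate is $\tfrac{d\mathbb{P}_{\mathcal B}}{d\mathbb P}(\omega)=\mathbbm 1_{\mathcal B}(\omega)/\mathbb P[\mathcal B]$, and this is verified directly:
\[
\int_{\mathcal A}\frac{\mathbbm 1_{\mathcal B}}{\mathbb P[\mathcal B]}\,d\mathbb P=\frac{\mathbb P[\mathcal A\cap \mathcal B]}{\mathbb P[\mathcal B]}=\mathbb{P}_{\mathcal B}[\mathcal A]
\]
for every $\mathcal A\in\Sigma$. By uniqueness of the Radon--Nikodym derivative, this formula holds $\mathbb P$-a.e.

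The third step computes
\[
D_{KL}[\mathbb{P}_{\mathcal B}\,\|\,\mathbb P]=\mathbb E_{\mathbb{P}_{\mathcal B}}\Big[\ln\tfrac{d\mathbb{P}_{\mathcal B}}{d\mathbb P}\Big].
\]
This is the only point that needs care: the density vanishes on $\mathcal B^c$, so its logarithm equals $-\infty$ there. That set has $\mathbb{P}_{\mathcal B}$-measure zero, however, so it does not contribute to the expectation. (Equivalently, with the convention $0\ln 0=0$, one can write the divergence as $\mathbb E_{\mathbb P}[f\ln f]$ with $f$ the density.)

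On $\mathcal B$, the integrand is the constant $\ln(1/\mathbb P[\mathcal B])$, and $\mathbb{P}_{\mathcal B}[\mathcal B]=1$. The expectation therefore equals $-\ln\mathbb P[\mathcal B]$, which is the claimed identity.
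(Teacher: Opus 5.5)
Your proof is correct and follows essentially the same route as the paper: both identify $\tfrac{d\mathbb P[\,\cdot\,|\mathcal B]}{d\mathbb P}=\mathbbm 1_{\mathcal B}/\mathbb P[\mathcal B]$ from the definition of conditional probability and then integrate its logarithm against $\mathbb P[\,\cdot\,|\mathcal B]$. You also note explicitly that the $\ln 0$ on $\mathcal B^c$ is harmless because $\mathcal B^c$ is $\mathbb P[\,\cdot\,|\mathcal B]$-null, which makes precise the $0\ln 0=0$ convention the paper uses implicitly.
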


\begin{proof} According to the definition of conditional expectation it holds
\begin{equation*}
    \mathbb P[\mathcal A\ | \mathcal B] = \frac{\mathbb P[\mathcal A \cap \mathcal B]}{\mathbb P[\mathcal B]} = \int_{\mathcal A} \frac{\mathds{1}_{\omega \in \mathcal B}}{\mathbb P[\mathcal B]}\ d\mathbb P(\omega) \qquad \forall \mathcal A \in \Sigma,
\end{equation*}
where we recognize absolute continuity \(\mathbb P[\ \cdot\ | \mathcal B] \ll \mathbb P\) and identify the Radon-Nikodym derivative $\tfrac{d \mathbb P[\ \cdot\ |\mathcal B]}{d \mathbb P}(\omega) = \tfrac{\mathds{1}_{\omega \in \mathcal B}}{\mathbb P[\mathcal B]}$. Plugging the derivative into the definition of Kullback-Leibler divergence results in the desired expression:
\begin{align*}
    D_{KL}[\mathbb P[\ \cdot\ | \mathcal B]\ ||\ \mathbb P] := & \int_\Omega \ln \frac{d \mathbb P[\ \cdot\  |\mathcal B]}{d \mathbb P} d \mathbb P[\ \cdot\ | \mathcal B] = \int_\Omega \ln (\frac{d \mathbb P[\ \cdot\  |\mathcal B]}{d \mathbb P}) \frac{d \mathbb P[\ \cdot\  |\mathcal B]}{d \mathbb P} d \mathbb P\\
    & = \int_\Omega \ln (\frac{\mathds{1}_{\omega \in \mathcal B}}{\mathbb P[\mathcal B]}) \frac{\mathds{1}_{\omega \in \mathcal B}}{\mathbb P[\mathcal B]} d \mathbb P = - \ln \mathbb P[\mathcal B].
\end{align*}
\end{proof}

\begin{lemma}[Upper bound on conditional expectation]\label{lem:upper_bound_conditional_expectation}

    Let \(X:\Omega \to \mathbb R\) be a random variable on \((\Omega, \Sigma, \mathbb P)\) such that the (restricted) cumulant generating function $\Lambda: \mathbb R^+ \to [0,\infty)\quad \beta \mapsto \ln \mathbb E [\exp(\beta(X-\mathbb E [X]))]$ exists. Assume further that \(\mathbb P[\mathcal B] > 0\), ensuring \(\mathbb P[\cdot | \mathcal B]\) is well-defined. Then with \(\Lambda^*\) the Cramer rate function of $X$ (see Definition~\ref{def:cramer_rate_function}) and $(\Lambda^*)^{-1}(y) := \inf\{s : \Lambda^*(s) \ge y\}$ its generalized inverse, it holds that
    \begin{equation*}
        \mathbb E[X | \mathcal B] \leq \mathbb E [X] + (\Lambda^*)^{-1} (- \ln \mathbb P[\mathcal B]).
    \end{equation*}
    
\end{lemma}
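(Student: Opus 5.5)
The plan is to combine two earlier results: the conditioning identity of \cref{lem:conditioned_KL_divergence} and the variational characterization of \cref{lem:variational_form_of_KL_divergence}. Throughout, write $m:=\mathbb E[X]$ and let $\mathbb Q:=\mathbb P[\,\cdot\mid\mathcal B]$. Absolute continuity $\mathbb Q\ll\mathbb P$ was verified in the proof of \cref{lem:conditioned_KL_divergence}, so the variational formula applies with $p=\mathbb Q$ and $q=\mathbb P$.

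\textbf{Step 1 (Donsker--Varadhan inequality).} For any $\beta>0$, plug the test function $\beta(X-m)$ into \cref{lem:variational_form_of_KL_divergence}. This is admissible because $\mathbb E_{\mathbb P}[\exp(\beta(X-m))]=e^{\Lambda(\beta)}$ is finite by assumption. Combining with \cref{lem:conditioned_KL_divergence} gives
\[
\beta\,\big(\mathbb E[X\mid\mathcal B]-m\big)-\Lambda(\beta)\le D_{KL}[\mathbb Q\,\|\,\mathbb P]=-\ln\mathbb P[\mathcal B].
\]
We still need $\mathbb E_{\mathbb Q}[X]$ to be well-defined. Since $\Lambda(\beta)<\infty$, the positive part of $X$ has an exponential moment, so $\mathbb E[X^+\mid\mathcal B]\le\mathbb E[X^+]/\mathbb P[\mathcal B]<\infty$. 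If instead $\mathbb E[X\mid\mathcal B]=-\infty$, the claim is trivial.

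\textbf{Step 2 (pass to the rate function).} Set $s:=\mathbb E[X\mid\mathcal B]-m$. Taking the supremum over $\beta>0$ in Step 1 gives $\Lambda^*(s)\le y$, where $y:=-\ln\mathbb P[\mathcal B]\ge 0$.

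\textbf{Step 3 (invert).} We need to show that $\Lambda^*(s)\le y$ implies $s\le(\Lambda^*)^{-1}(y)=\inf\{s':\Lambda^*(s')\ge y\}$. This is the step I expect to be the main obstacle, because of how the generalized inverse is defined.

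$\Lambda^*$ is a supremum of affine functions of $s$ with positive slopes $\beta$, so it is convex and nondecreasing. It is also nonnegative, since $\Lambda(\beta)\ge 0$ by Jensen. Consider any $s'$ with $\Lambda^*(s')\ge y$.

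\begin{itemize}
\item If $s'<s$, then one of two cases holds. Either $\Lambda^*(s')<\Lambda^*(s)$, which forces $y\le\Lambda^*(s')<\Lambda^*(s)\le y$, a contradiction. Or $\Lambda^*$ is constant on $[s',s]$.
\item In the constant case, convexity together with monotonicity shows $\Lambda^*$ is constant on $(-\infty,s]$. Since $\Lambda^*(0)=0$ (because $\Lambda\ge0$), it equals $0$ there, and hence $y=0$.
\end{itemize}

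So the only problematic case is $\mathbb P[\mathcal B]=1$. There $\mathbb Q=\mathbb P$ and $s=0$. The infimum is over $\{s':\Lambda^*(s')\ge 0\}=\mathbb R$, which is $-\infty$, so the inequality as stated would fail in this degenerate case. I would handle it by interpreting $(\Lambda^*)^{-1}(0)$ as $0$. Equivalently, I would restrict the infimum to $s'\ge 0$, which costs nothing because $\Lambda^*$ vanishes on $(-\infty,0]$.

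In every other case every admissible $s'$ satisfies $s'\ge s$, hence $s\le(\Lambda^*)^{-1}(y)$. Rearranging gives $\mathbb E[X\mid\mathcal B]\le\mathbb E[X]+(\Lambda^*)^{-1}(-\ln\mathbb P[\mathcal B])$.
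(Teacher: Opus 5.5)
Your proposal is correct and follows the paper's route. Like the paper, you apply the Donsker--Varadhan bound of \cref{lem:variational_form_of_KL_divergence} to test functions $\beta(X-\mathbb E[X])$ with $\beta>0$, combine it with \cref{lem:conditioned_KL_divergence}, and invert $\Lambda^*$. Your Step~3 is more careful than the paper's one-line appeal to strict monotonicity of $\Lambda^*$. You are also right that for $\mathbb P[\mathcal B]=1$ the generalized inverse as literally defined gives $(\Lambda^*)^{-1}(0)=-\infty$, since $\Lambda^*\ge 0$ everywhere, so the statement fails there and that degenerate case genuinely needs the convention you propose.
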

\begin{proof}
    The proof adapts Lemma 11 of \citet{tarbouriech2023probabilistic}.
    We apply Lemma~\ref{lem:variational_form_of_KL_divergence} with \(p(\mathcal E) = \mathbb P[\mathcal E| \mathcal B]\) and \(q(\mathcal E) = \mathbb P[\mathcal E]\), and restrict the supremum over just the random variables \(\{\lambda(X-\mathbb E X)\}_{\lambda \in \mathbb R_+}\). This gives
    \begin{align*}
        D_{KL}[\mathbb P[\cdot| \mathcal B] || \mathbb P] & \geq \sup_{\lambda \in \mathbb R_+} \{ \lambda \mathbb E [X-\mathbb E[X] \ |\ \mathcal B] - \ln \mathbb E [\exp(\lambda(X-\mathbb E [X]))]\}\\
        & = \sup \{ \lambda (\mathbb E [X | \mathcal B] - \mathbb E[X]) - \Lambda(\lambda) : \lambda \in \mathbb R_+\}\\
        & = \Lambda^*(\mathbb E [X | \mathcal B] - \mathbb E[X])
    \end{align*}
    
    Furthermore, since the free parameter \(\lambda\) is restricted to positive values \(\mathbb R^+\), the one-sided Cramér transform \(\Lambda^*\) is strictly monotonically increasing on its strictly positive upper-tail region. Thus, applying the generalized inverse $(\Lambda^*)^{-1}(y) := \inf\{s : \Lambda^*(s) \ge y\}$ yields
    \begin{equation*}
        (\Lambda^*)^{-1}(D_{KL}[\mathbb P[\cdot| \mathcal B]\ ||\ \mathbb P]) \geq \mathbb E [X | \mathcal B] - \mathbb E[X].
    \end{equation*}
    As a final step, we use Lemma~\ref{lem:conditioned_KL_divergence}, which states that $D_{KL}[\mathbb P[\cdot | \mathcal B]\ ||\ \mathbb P] = - \ln \mathbb P[\mathcal B]$.
\end{proof}

\begin{lemma}\label{lem:expected_maximum_upper_bound}
    Let $\theta:\Omega \to \mathbb R^K$ be a random vector with marginal (Cramér) rate function satisfying $(\Lambda_a^*)^{-1}(-\ln p) \leq \sigma_{a} \sqrt{2 i_v(p)}$ for all $p \in (0,1)$. Define the variational value $\mathcal V(p) := \sum_{a} p_a (\mu_a + \sigma_a \sqrt{2 i_v(p_a)})$, the probability of maximality $p_a^* := \mathbb P[a^* = a]>0$, and let $q$ be the $v$-LITE estimator. Then
    \begin{equation*}
        \mathbb E[\theta_{a^*}] \leq \mathcal V(p^*) \leq \mathcal V(q).
    \end{equation*}
\end{lemma}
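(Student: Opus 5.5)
The plan is to prove the two inequalities separately. The first follows from the conditional-expectation bound (\cref{lem:upper_bound_conditional_expectation}) applied arm by arm. The second follows by rewriting $\mathcal V$ as a concave function on the simplex and checking that the $v$-LITE vector $q$ of \cref{def:v_lite} satisfies its first-order optimality conditions.

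\textbf{First inequality.} Decompose over the identity of the maximizer:
\[
\mathbb E[\theta_{a^*}]=\sum_{a} p^*_a\,\mathbb E[\theta_a\mid a^*=a].
\]
This decomposition is well defined because $p^*_a>0$ for every $a$. For each $a$, apply \cref{lem:upper_bound_conditional_expectation} to the random variable $X=\theta_a$ and the event $\mathcal B=\{a^*=a\}$. This gives
\[
\mathbb E[\theta_a\mid a^*=a]\le \mu_a+(\Lambda_a^*)^{-1}(-\ln p^*_a).
\]
The standing assumption $(\Lambda_a^*)^{-1}(-\ln p)\le\sigma_a\sqrt{2i_v(p)}$, evaluated at $p=p^*_a$, then bounds each term by $\mu_a+\sigma_a\sqrt{2i_v(p^*_a)}$. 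Summing with weights $p^*_a$ gives exactly $\mathcal V(p^*)$.

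\textbf{Second inequality.} I will show that $q$ maximizes $\mathcal V$ over the simplex $\Delta^K$.
\begin{itemize}
\item Since $g_v\ge 0$, the definition of $i_v$ gives $p\sqrt{2i_v(p)}=g_v(p)$. Hence
\[
\mathcal V(p)=\sum_a\bigl(p_a\mu_a+\sigma_a g_v(p_a)\bigr).
\]
\item From $g_v(p)=\int_p^1 v^{-1}(r)\,dr$ we get $g_v'(p)=-v^{-1}(p)$. Moreover,
\[
g_v''(p)=-\frac{1}{v'(v^{-1}(p))}<0,
\]
so $\mathcal V$ is a separable, strictly concave function on $\Delta^K$. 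This uses that $v$ is $C^1$ and strictly increasing.
\item The Lagrangian stationarity condition for $\max_{p\in\Delta^K}\mathcal V(p)$ with multiplier $\kappa$ on the constraint $\sum_a p_a=1$ reads $\mu_a-\sigma_a v^{-1}(p_a)=\kappa$ for all $a$. Equivalently, $p_a=v\bigl((\mu_a-\kappa)/\sigma_a\bigr)$.
\item With $\kappa=\kappa^*$ this is precisely the $v$-LITE vector $q$, which lies in the interior of the simplex because $v$ takes values in $(0,1)$.
\item For a concave function on a convex set, an interior point at which the gradient is orthogonal to the constraint set (here, all partial derivatives equal) is a global maximizer. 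Hence $\mathcal V(p^*)\le\mathcal V(q)$.
\end{itemize}
One can also note that $g_v'(p)=-v^{-1}(p)\to+\infty$ as $p\downarrow0$, so no boundary point can be a maximizer. This is consistent with interiority, though it is not strictly needed once stationarity and concavity are in hand.

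\textbf{Main obstacle.} I expect the delicate step to be the regularity bookkeeping for $g_v$ rather than the optimization itself. Specifically:
\begin{itemize}
\item finiteness of $g_v$ on $(0,1)$, which requires $\int_0^1|v^{-1}(r)|\,dr<\infty$, i.e.\ that $Z\sim v$ has a finite mean, as implicitly assumed in the paper;
\item nonnegativity of $g_v$, needed for the identity $p\sqrt{2i_v(p)}=g_v(p)$, which uses $\mathbb E[Z]\ge0$.
\end{itemize}
I would state these as implicit hypotheses inherited from the definition of $i_v$ in \cref{app:gen_lite}. The concavity-plus-KKT argument then goes through directly.
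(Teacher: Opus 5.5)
Your proof is correct and follows the paper's argument. The first inequality is obtained exactly as in the paper: you decompose over $\{a^*=a\}$, apply \cref{lem:upper_bound_conditional_expectation}, and substitute the rate-function assumption at $p=p^*_a$. For the second inequality the paper only cites \citet{menet2025lite} for the fact that $q$ maximizes $\mathcal V$ over the simplex, whereas you prove it directly via $p\sqrt{2i_v(p)}=g_v(p)$, concavity of $g_v$, and the stationarity condition recovering $q_a=v((\mu_a-\kappa^*)/\sigma_a)$. Concavity holds because $g_v'=-v^{-1}$ is strictly decreasing, so you do not need $v'>0$ everywhere. This makes the step self-contained under the $\mathbb E[Z]\ge 0$ hypothesis you correctly flag.
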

\begin{proof}
    First, note that an application of Lemma~\ref{lem:upper_bound_conditional_expectation} gives
    \begin{gather*}
        \mathbb E[\theta_{a^*} | a^* = a] \leq \mu_a + (\Lambda_a^*)^{-1}(- \ln \mathbb P[a^* = a])
        \implies
        \textstyle \mathbb E[\theta_{a^*}] = \sum_a p_a^*\ \mathbb E[\theta_{a^*} | a^* = a] \leq \sum_a p_a^* ( \mu_a + (\Lambda_a^*)^{-1}(- \ln p_a^*)).
    \end{gather*}
    By substituting $p = p_a^*$ directly into our assumed upper bound on the inverse Cramér transform, we obtain $(\Lambda_a^*)^{-1}(-\ln p_a^*) \le \sigma_a \sqrt{2 i_v(p_a^*)}$. Plugging this in, we obtain $\mathbb E[\theta_{a^*}] \leq \mathcal V(p^*)$. Finally, we note that $\textstyle \mathcal V(p^*) \leq \max_{p \in \Delta^{K-1}} \mathcal V(p) = \mathcal V(q)$, where the last equality follows from the characterization of the $v$-LITE estimator as the maximizer of $\mathcal V$ over the simplex~\citep{menet2025lite}.
\end{proof}

\begin{lemma}[Restatement of Lemma 4 in~\citet{menet2025thompsonsamplingfinetuningllms}]\label{lem:4}
   Let $\theta \sim \mathcal{N}(\mu_0, \Sigma_0)$ be a multivariate Gaussian prior over $K$ arms, such that the marginal prior variances satisfy $\sigma_{a,0}^2 \le 1$ for all $a \in [K]$.
    Let $L_T:=(a_t)_{t=1}^T$ be a sequence of arm pulls with observations $X_{a_t}=\theta_{a_t}+\varepsilon_{a_t}$. The independent observation noise is distributed as $\varepsilon_{a_t}\sim\mathcal{N}(0,\alpha^2)$ at each step. Then the maximal information gain $\gamma_T$ upper bounds the aggregated predictive marginal variances at the locations (i.e., $\sigma_{a_t,t}^2$),
    \begin{equation*}
        \gamma_T:=\max_{L_T} I(X_{L_T};\theta)\ge \frac{2}{C_{\alpha}}\sum_{t=1}^T \sigma_{a_t,t}^2,
    \end{equation*}
    where $1/C_\alpha:=\frac{\ln(1+\alpha^{-2})}{4}$.
\end{lemma}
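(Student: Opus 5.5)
The plan is to compute the mutual information $I(X_{L_T};\theta)$ exactly for the given pull sequence via the chain rule. I would then lower bound each per-step term by a multiple of $\sigma_{a_t,t}^2$, and finally use that $\gamma_T$ is a maximum over sequences, so it dominates the value at the realized sequence $L_T$.

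First, fix a deterministic sequence $L_T=(a_t)_{t=1}^T$; for a random trajectory, condition on it pathwise, since the posterior covariance depends only on the actions. By the chain rule,
\[
I(X_{L_T};\theta)=\sum_{t=1}^T I(X_{a_t};\theta\mid X_{a_1},\dots,X_{a_{t-1}}).
\]
Conditionally on the past observations, $\theta$ is Gaussian with covariance $\Sigma_t$ given by \cref{eq:var}. Hence $X_{a_t}$ is conditionally Gaussian with variance $\sigma_{a_t,t}^2+\alpha^2$, and its variance given $\theta$ is $\alpha^2$. Taking the difference of Gaussian entropies, each term equals $\tfrac12\ln(1+\alpha^{-2}\sigma_{a_t,t}^2)$. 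Because this depends only on the action sequence, the conditional mutual information is deterministic, and
\[
I(X_{L_T};\theta)=\tfrac12\sum_{t=1}^T\ln\bigl(1+\alpha^{-2}\sigma_{a_t,t}^2\bigr).
\]

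Second, bound each logarithm below linearly in $\sigma^2_{a_t,t}$. Posterior variances only shrink under Bayesian updating, since $\Sigma_t\preceq\Sigma_0$ by \cref{eq:var}. Therefore $\sigma_{a_t,t}^2\le\sigma_{a_t,0}^2\le1$. For $s\in[0,1]$ and $x>0$, concavity of $s\mapsto\ln(1+xs)$ together with its value $0$ at $s=0$ gives $\ln(1+xs)\ge s\ln(1+x)$. Applying this with $x=\alpha^{-2}$ and $s=\sigma_{a_t,t}^2$ yields
\[
I(X_{L_T};\theta)\ge\frac{\ln(1+\alpha^{-2})}{2}\sum_{t=1}^T\sigma_{a_t,t}^2=\frac{2}{C_\alpha}\sum_{t=1}^T\sigma_{a_t,t}^2.
\]
Taking the maximum over $L_T$ on the left then gives the claim.

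The main care point is the first step. I need to justify that the conditional mutual information equals the entropy difference and does not depend on the realized rewards. This follows because the Gaussian conditional covariance is reward-independent. With that in hand, the rest is the elementary concavity inequality.
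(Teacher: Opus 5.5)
Your proof is correct and follows essentially the same route as the paper. You compute $I(X_{L_T};\theta)=\tfrac12\sum_{t}\ln(1+\alpha^{-2}\sigma_{a_t,t}^2)$ exactly and bound each term by concavity; the paper uses the entropy chain rule $H[X_{L_t}]=H[X_{L_{t-1}}]+H[X_{a_t}\mid X_{L_{t-1}}]$ where you use the mutual-information chain rule, which is the same computation. Your version is slightly tidier in two places: you justify $\sigma_{a_t,t}^2\le 1$ via $\Sigma_t\preceq\Sigma_0$ rather than ``by assumption,'' and your sharp bound $\ln(1+xs)\ge s\ln(1+x)$ gives exactly the stated constant $2/C_\alpha$. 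By contrast, the paper's displayed per-term inequality $\sigma^2\le\tfrac12\ln(1+\alpha^{-2}\sigma^2)\,C_\alpha$ is loose by a factor of two (it is not tight at $\sigma^2=1$ and, summed, yields only $1/C_\alpha$), even though the concavity argument it cites is exactly yours.
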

\begin{proof}
    We restate the proof from \citet{menet2025thompsonsamplingfinetuningllms}. The right-hand side is well-defined because the predictive variances $\sigma_{a_t,t}^2$ depend only on the observation locations $L_T$, not on the observed values.
    Next, note that $X_{L_T} | \theta$ is a multivariate normal with independent components of variance $\alpha^2$, thus
    \begin{align*}
        I(X_{L_T}; \theta) & = H[X_{L_T}] - H[X_{L_T} | \theta]\\
        & \textstyle = H[X_{L_T}] - \tfrac{1}{2} \sum_{t=1}^T \ln (2 \pi e \alpha^2).
    \end{align*}
    Furthermore, one may decompose 
    \begin{align*}
        H[X_{L_t}] & = H[X_{L_{t-1}}] + H[X_{a_t}|X_{L_{t-1}}]\\
        & = H[X_{L_{t-1}}] + \tfrac{1}{2}\ln(2\pi e (\alpha^2 + \sigma_{a_t,t}^2)),
    \end{align*}
    using that $X_{a_t} | X_{L_{t-1}}$ is Gaussian with variance $\alpha^2 + \sigma_{a_t,t}^2$. Recursively expanding then results in 
    $$\textstyle I(X_{L_T}; \theta) = \frac{1}{2} \sum_{t=1}^T \ln(1+\alpha^{-2} \sigma_{a_t,t}^2).$$
    Finally, by assumption $\sigma_{a_t,t}^2 \in [0,1]$, allowing us to lower bound each summand:
    \begin{align*}
        \sigma_{a_t,t}^2 \leq \tfrac{1}{2}\ln(1+\alpha^{-2}\sigma_{a_t,t}^2) \underbrace{\tfrac{4}{\ln(1+\alpha^{-2})}}_{=:C_{\alpha}}.
    \end{align*}
    Indeed, the inequality is tight for $\sigma_{a_t,t}^2 \in \{0,1\}$ and holds in-between due to concavity of $$\sigma_{a_t,t}^2 \mapsto \ln(1+\alpha^{-2} \sigma_{a_t,t}^2) - \sigma_{a_t,t}^2 \ln(1+\alpha^{-2}).$$
\end{proof}

We generalize the previous lemma to the case of a batched algorithm.
\begin{lemma}[Batched generalization of \cref{lem:4}]
\label{lem:4_batched}
Let \(\theta\sim\mathcal N(\mu_0,\Sigma_0)\) be a multivariate Gaussian prior over
\(K\) arms, with marginal prior variances
\(\sigma_{a,0}^2\le 1\) for every \(a\in[K]\).
Split the \(T=nB\) rounds into \(B\) batches of size \(n\).

Fix a deterministic batched action sequence
\[
x_{1:B}=(x_1,\ldots,x_B),
\qquad
x_b=(a_{b,1},\ldots,a_{b,n})\in[K]^n.
\]
Under this fixed design, define
\[
Y_{b,i}^{x}
=
\theta_{a_{b,i}}+\varepsilon_{b,i},
\qquad
\varepsilon_{b,i}\overset{\mathrm{i.i.d.}}{\sim}
\mathcal N(0,\alpha^2),
\]
and let
\[
Y_b^x=(Y_{b,1}^x,\ldots,Y_{b,n}^x),
\qquad
\theta_b^x
=
(\theta_{a_{b,1}},\ldots,\theta_{a_{b,n}}).
\]
Define the posterior covariance matrix at the beginning of batch \(b\) by
\[
\Sigma_b(x_{1:b})
:=
\operatorname{Cov}
\left(
\theta_b^x
\,\middle|\,
Y_{1:b-1}^x
\right).
\]
Let
\[
\gamma_B
:=
\max_{x_{1:B}}
I\left(Y_{1:B}^x;\theta\right)
\]
be the maximal information gain over \(B\) fixed batch designs. Then,
for every deterministic design \(x_{1:B}\),
\[
\gamma_B
\ge
\frac{C(n,\alpha)}{2n}
\sum_{b=1}^B
\operatorname{tr}\!\left(\Sigma_b(x_{1:b})\right),
\]
where
\[
C(n,\alpha)
:=
\ln\left(1+\frac{n}{\alpha^2}\right).
\]
\end{lemma}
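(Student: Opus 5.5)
We follow the proof of \cref{lem:4}, replacing single observations with whole batches. Fix a deterministic design $x_{1:B}$. We apply the chain rule for mutual information over batches,
\[
I(Y_{1:B}^x;\theta)=\sum_{b=1}^B I\big(Y_b^x;\theta\,\big|\,Y_{1:b-1}^x\big),
\]
and compute each term in closed form. Conditional on $Y_{1:b-1}^x$, the vector $Y_b^x$ is Gaussian with covariance $\Sigma_b(x_{1:b})+\alpha^2 I_n$. Conditional on $\theta$, it is Gaussian with covariance $\alpha^2 I_n$. Since Gaussian conditional covariances do not depend on the observed values, each term equals
\[
\tfrac12\ln\det\big(I_n+\alpha^{-2}\Sigma_b(x_{1:b})\big)=\tfrac12\sum_{i=1}^n\ln(1+\alpha^{-2}\lambda_{b,i}),
\]
where $\lambda_{b,1},\dots,\lambda_{b,n}\ge 0$ are the eigenvalues of $\Sigma_b(x_{1:b})$.

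It therefore suffices to prove, for each batch $b$, the per-batch inequality
\[
\sum_{i=1}^n\ln(1+\alpha^{-2}\lambda_{b,i})\ \ge\ \frac{\ln(1+n/\alpha^2)}{n}\sum_{i=1}^n\lambda_{b,i}.
\]
Summing this over $b$ and multiplying by $\tfrac12$ gives $I(Y_{1:B}^x;\theta)\ge \frac{C(n,\alpha)}{2n}\sum_b\operatorname{tr}(\Sigma_b)$. Finally, $\gamma_B$ is a maximum over designs, so $\gamma_B\ge I(Y_{1:B}^x;\theta)$.

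The per-batch inequality is where the batch structure enters, and it is the main obstacle. We first bound each eigenvalue. The matrix $\Sigma_b$ is an $n\times n$ covariance matrix whose diagonal entries are posterior marginal variances. Posterior variances never exceed prior variances, and the prior variances are at most $1$, so each diagonal entry is at most $1$. Hence $\lambda_{b,i}\le\operatorname{tr}(\Sigma_b)\le n$. Next we use concavity of $f(\lambda)=\ln(1+\alpha^{-2}\lambda)$ with $f(0)=0$: on $[0,n]$, $f$ lies above the chord through $(0,0)$ and $(n,f(n))$, so
\[
f(\lambda)\ge \frac{\ln(1+n/\alpha^2)}{n}\,\lambda\qquad\text{for }\lambda\in[0,n].
\]
Applying this to every eigenvalue and summing over $i$ gives the per-batch inequality. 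We have to use the chord over $[0,n]$, not over $[0,1]$ as in \cref{lem:4}, because a batch may pull the same arm $n$ times, so an eigenvalue can be as large as $n$. This larger interval is exactly why the constant degrades to $C(n,\alpha)/n$.

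For the equality $I(Y_b;\theta\mid Y_{1:b-1})=\tfrac12\ln\det(I+\alpha^{-2}\Sigma_b)$, we use that $Y_b$ depends on $\theta$ only through $\theta_b^x$. Then $H(Y_b\mid\theta,Y_{1:b-1})=H(\varepsilon_b)$, and $H(Y_b\mid Y_{1:b-1})$ is the Gaussian entropy with covariance $\Sigma_b+\alpha^2 I$. Repeated arms within a batch make $\Sigma_b$ singular, but this causes no problem because $\alpha^2 I$ keeps the predictive covariance nondegenerate.
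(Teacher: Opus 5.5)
Your proof is correct and follows the paper's argument: the chain rule over batches, the Gaussian formula $\tfrac12\ln\det(I_n+\alpha^{-2}\Sigma_b)$ for each term, and a chord bound for the concave logarithm on $[0,n]$ using $\operatorname{tr}(\Sigma_b)\le n$. The only cosmetic difference is that the paper first uses $\det(I_n+\alpha^{-2}\Sigma_b)\ge 1+\operatorname{tr}(\Sigma_b)/\alpha^2$ and applies the chord once to the trace, whereas you apply it eigenvalue by eigenvalue; both routes give the same constant $C(n,\alpha)/(2n)$.
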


\begin{proof}
Fix an arbitrary deterministic design \(x_{1:B}\). By the chain rule
for mutual information,
\begin{align}
I\left(Y_{1:B}^x;\theta\right)
&=
\sum_{b=1}^B
I\left(
Y_b^x;\theta
\,\middle|\,
Y_{1:b-1}^x
\right) \nonumber\\
&=
\frac12
\sum_{b=1}^B
\ln\det\left(
I_n+\alpha^{-2}\Sigma_b(x_{1:b})
\right).
\label{eq:IX}
\end{align}
The second equality follows because, conditionally on
\(Y_{1:b-1}^x\), the vector \(Y_b^x\) is Gaussian with covariance
\[
\Sigma_b(x_{1:b})+\alpha^2I_n,
\]
whereas, conditionally on both \(\theta\) and \(Y_{1:b-1}^x\), its
covariance is \(\alpha^2I_n\).

Since \(\Sigma_b(x_{1:b})\) is positive semidefinite,
\[
\det\left(
I_n+\alpha^{-2}\Sigma_b(x_{1:b})
\right)
\ge
1+
\frac{\operatorname{tr}(\Sigma_b(x_{1:b}))}{\alpha^2}.
\]
Moreover, posterior marginal variances do not exceed their corresponding
prior variances, and hence
\[
0
\le
\operatorname{tr}\!\left(\Sigma_b(x_{1:b})\right)
\le n.
\]
By concavity of \(z\mapsto\ln(1+z)\) on
\([0,n/\alpha^2]\),
\begin{align}
\ln\det\left(
I_n+\alpha^{-2}\Sigma_b(x_{1:b})
\right)
&\ge
\ln\left(
1+
\frac{\operatorname{tr}(\Sigma_b(x_{1:b}))}{\alpha^2}
\right)\\
&\ge
\frac{\operatorname{tr}(\Sigma_b(x_{1:b}))}{n}
\ln\left(1+\frac{n}{\alpha^2}\right).
\end{align}
Substituting this inequality into \cref{eq:IX} gives
\[
I\left(Y_{1:B}^x;\theta\right)
\ge
\frac{C(n,\alpha)}{2n}
\sum_{b=1}^B
\operatorname{tr}\!\left(\Sigma_b(x_{1:b})\right).
\]
Finally, since
\[
\gamma_B
=
\max_{x_{1:B}}I\left(Y_{1:B}^x;\theta\right),
\]
we have
\[
\gamma_B
\ge
I\left(Y_{1:B}^x;\theta\right)
\]
for every fixed design \(x_{1:B}\), which proves the result.
\end{proof}

\section{Additional Experiments}
\label[appendix]{app:experiments}

\subsection{Role of relevance threshold}
\paragraph{Inversion for $v=\Phi$.}
We evaluate the effect of estimating the sampling probabilities in three
Gaussian bandit environments: $K=2$ with
$\mu=(1/2,1)$, $K=4$ with
$\mu=(0,1/3,2/3,1)$, and $K=8$ with
\[
    \mu_a=1-0.1(a-1),\qquad a\in[K].
\]
For every trajectory and round, we generate a sequence of $500$ auxiliary
categorical samples from the exact $\Phi$-LS policy and construct
$\widehat q_t$ from nested prefixes of size
$m\in\{100,200,500\}$. These samples neither update the learner nor affect
its trajectory. We measure the realized uniform estimation error
\[
    \eta=\max_{t\leq T}\max_{a\in[K]}
    \left|\widehat q_{a,t}-q_{a,t}\right|.
\]
Following the formulation of~\cref{thm:approx_lite},
inversion is performed on the enlarged practical relevance set
$\widehat{\mathcal M}_T^{\gamma-\eta}$, and quantities involving the inverse
CDF are reported only for trajectories satisfying $2\eta<\gamma$.
Accordingly, the relevant conditioning constant is
$c_{\gamma-2\eta}$.

Figure~\ref{fig:qhat-small-m} illustrates the resulting
conditioning--coverage trade-off. Increasing $m$ brings
$c_{\gamma-2\eta}$ closer to $c_\gamma$, reduces the stability factor
$c_{\gamma-2\eta}\eta$, and generally decreases both inversion errors and
the enlargement of the relevance set. Missing segments correspond to
thresholds for which fewer than ten trajectories satisfy
$2\eta<\gamma$. They become more frequent as $K$ increases because a
uniformly accurate estimate of all sampling probabilities becomes harder.
All results use $T=2000$ and are averaged over $100$ independent
trajectories.

\begin{figure*}[t]
    \centering
    \begin{minipage}[t]{0.325\textwidth}
        \centering
        \includegraphics[width=\linewidth]{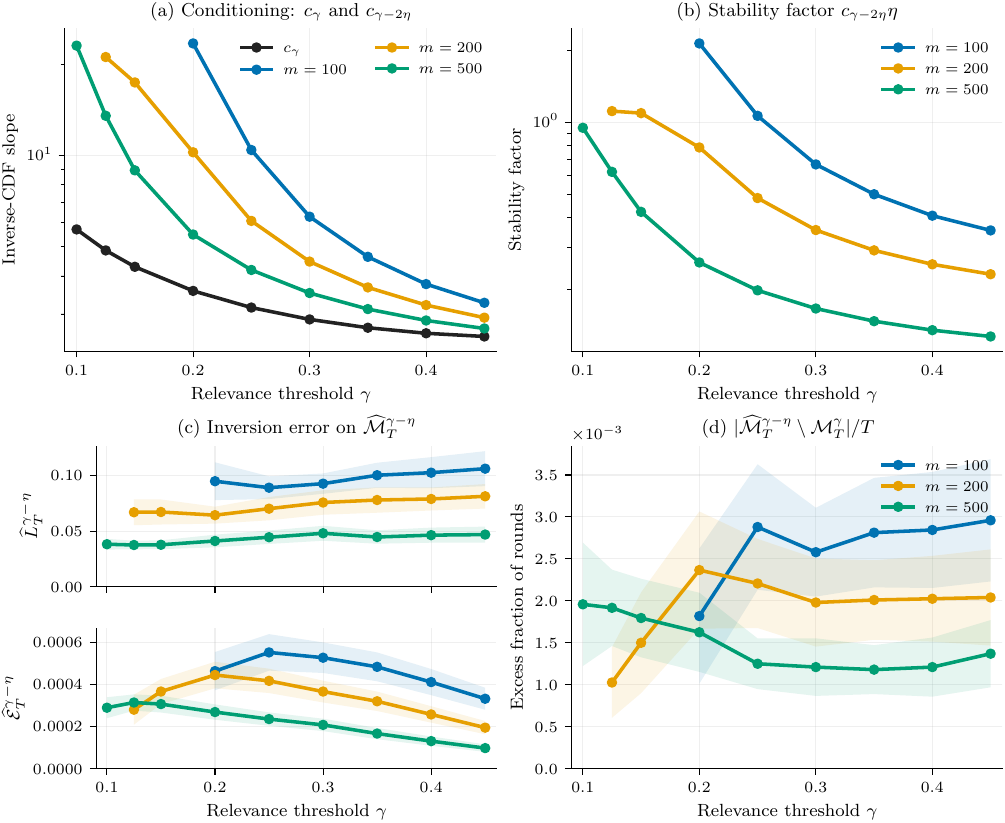}
        \smallskip

        {\small $K=2$}
    \end{minipage}
    \hfill
    \begin{minipage}[t]{0.325\textwidth}
        \centering
        \includegraphics[width=\linewidth]{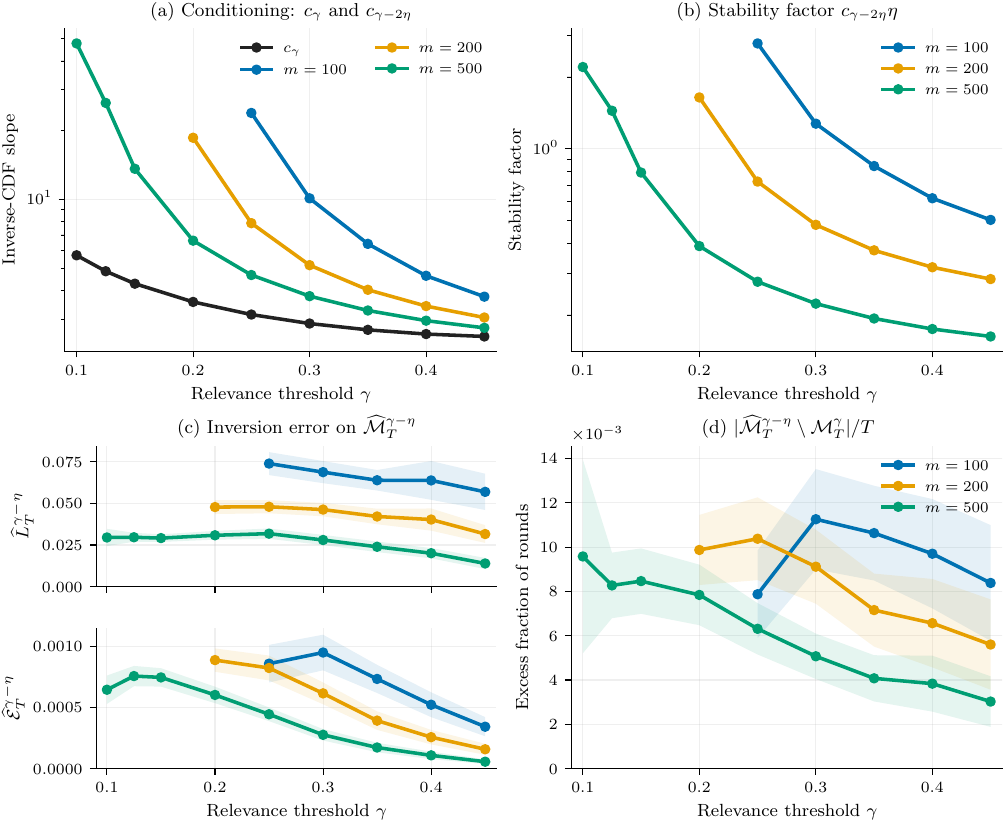}
        \smallskip

        {\small $K=4$}
    \end{minipage}
    \hfill
    \begin{minipage}[t]{0.325\textwidth}
        \centering
        \includegraphics[width=\linewidth]{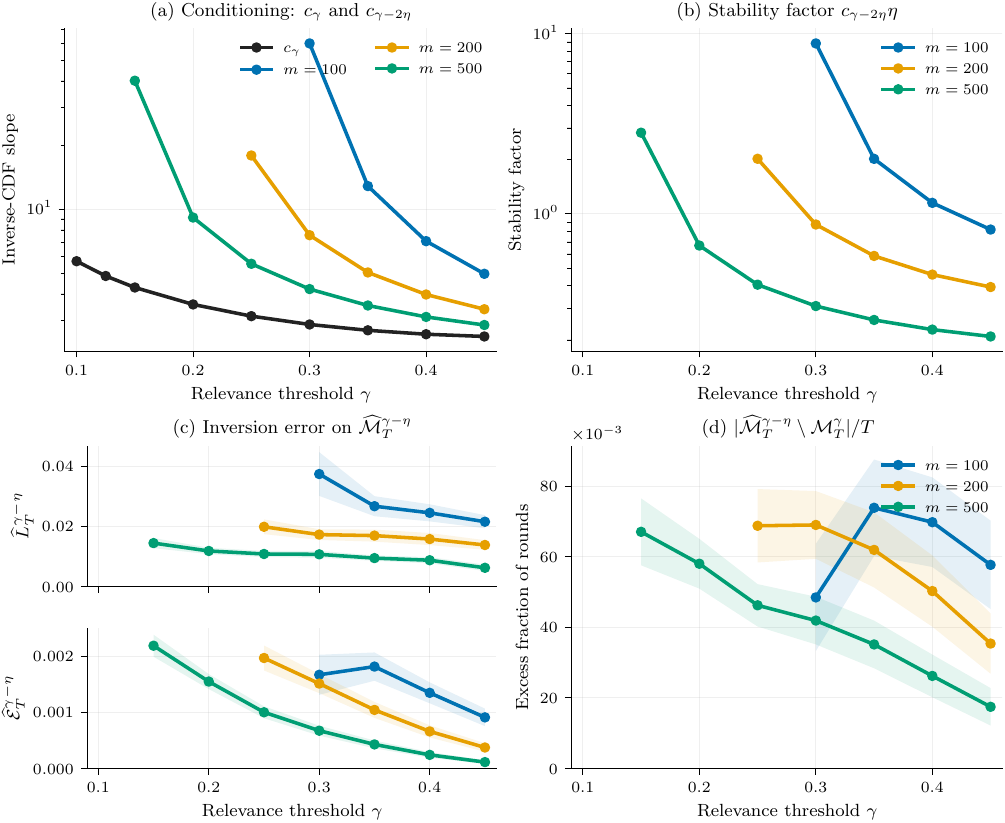}
        \smallskip

        {\small $K=8$}
    \end{minipage}

    \caption{
        Conservative non-batched inversion for $\Phi$-LS with
        $K\in\{2,4,8\}$ (left to right) and
        $m\in\{100,200,500\}$. Panel~(a) compares the intrinsic
        inverse-CDF slope $c_\gamma$ with the median effective slope
        $c_{\gamma-2\eta}$. Panel~(b) reports the median stability factor
        $c_{\gamma-2\eta}\eta$. Panel~(c) reports the practical conditional
        and coverage-weighted inversion errors,
        $\widehat L_T^{\gamma-\eta}$ and
        $\widehat{\mathcal E}_T^{\gamma-\eta}$, computed on
        $\widehat{\mathcal M}_T^{\gamma-\eta}$. Panel~(d) reports the excess
        coverage
        $|\widehat{\mathcal M}_T^{\gamma-\eta}
        \setminus\mathcal M_T^\gamma|/T$.
        Curves are shown only when at least ten of the $100$ trajectories
        satisfy $2\eta<\gamma$; shaded regions in Panels~(c)--(d) are
        pointwise $95\%$ confidence intervals.
    }
    \label{fig:qhat-small-m}
\end{figure*}

\paragraph{Inversion for $v=\Psi$.}
We repeat the same experiment using $\Psi$-LS trajectories on the three
environments above. At every round, the estimates with
$m\in\{100,200,500\}$ are constructed from nested prefixes of the same
sequence of $500$ auxiliary categorical samples. The practical relevance
set and recovered gaps are computed using $\widehat q$, whereas the exact
sampling probabilities are used only to evaluate $\eta$, construct the
oracle set $\mathcal M_T^\gamma$, and verify the exact inversion identity.

Figure~\ref{fig:psi-qhat-small-m} shows the stronger sensitivity of
$\Psi^{-1}$ near the boundary of the probability simplex. Consequently,
$c_{\gamma-2\eta}$ and $c_{\gamma-2\eta}\eta$ are generally larger than
their $\Phi$ counterparts, and probability-estimation errors undergo greater
amplification. Increasing $m$ improves conditioning and reduces both the
practical inversion error and the excess coverage. As for $\Phi$-LS,
missing low-threshold segments indicate that fewer than ten trajectories
satisfy $2\eta<\gamma$; this effect is most pronounced for $K=8$ and
$m=100$. Exact-probability inversion recovers the internal $\Psi$-LS gaps
up to numerical precision.

\begin{figure*}[t]
    \centering
    \begin{minipage}[t]{0.325\textwidth}
        \centering
        \includegraphics[width=\linewidth]{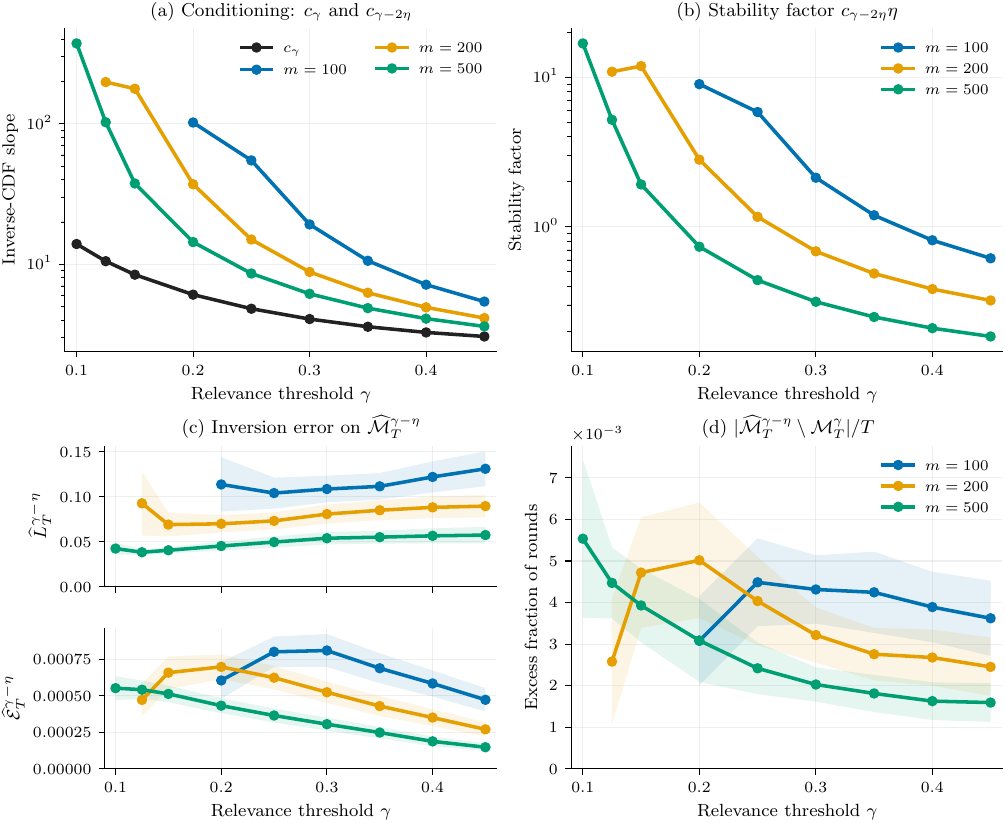}
        \smallskip

        {\small $K=2$}
    \end{minipage}
    \hfill
    \begin{minipage}[t]{0.325\textwidth}
        \centering
        \includegraphics[width=\linewidth]{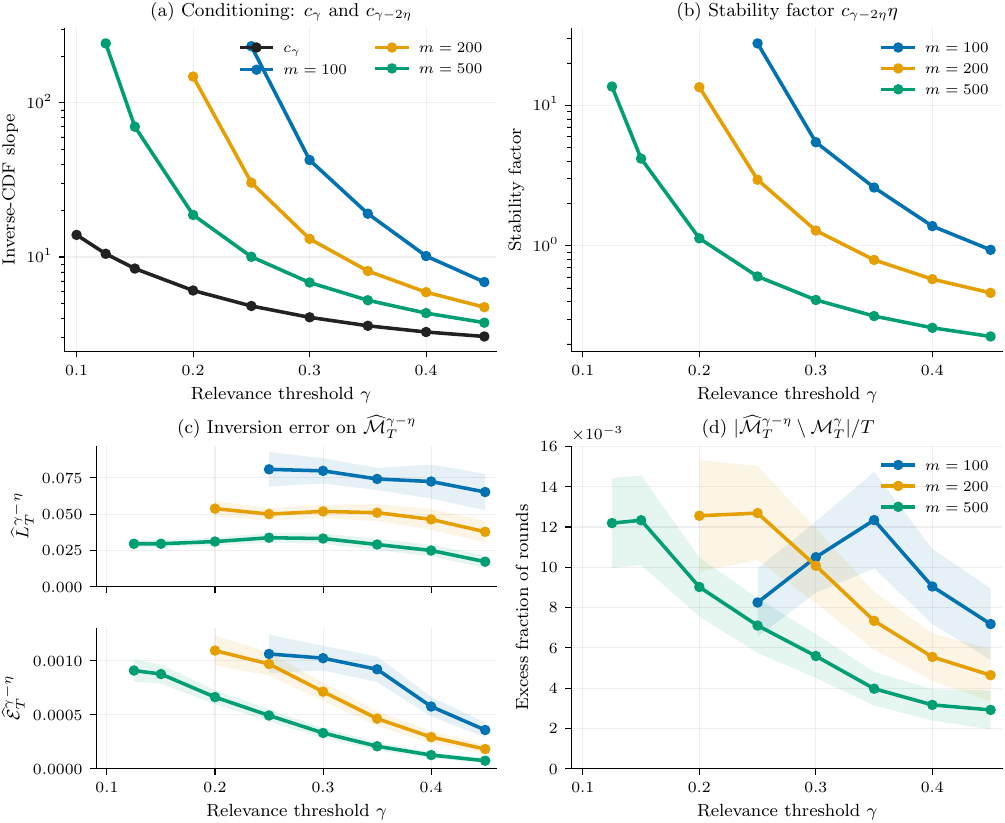}
        \smallskip

        {\small $K=4$}
    \end{minipage}
    \hfill
    \begin{minipage}[t]{0.325\textwidth}
        \centering
        \includegraphics[width=\linewidth]{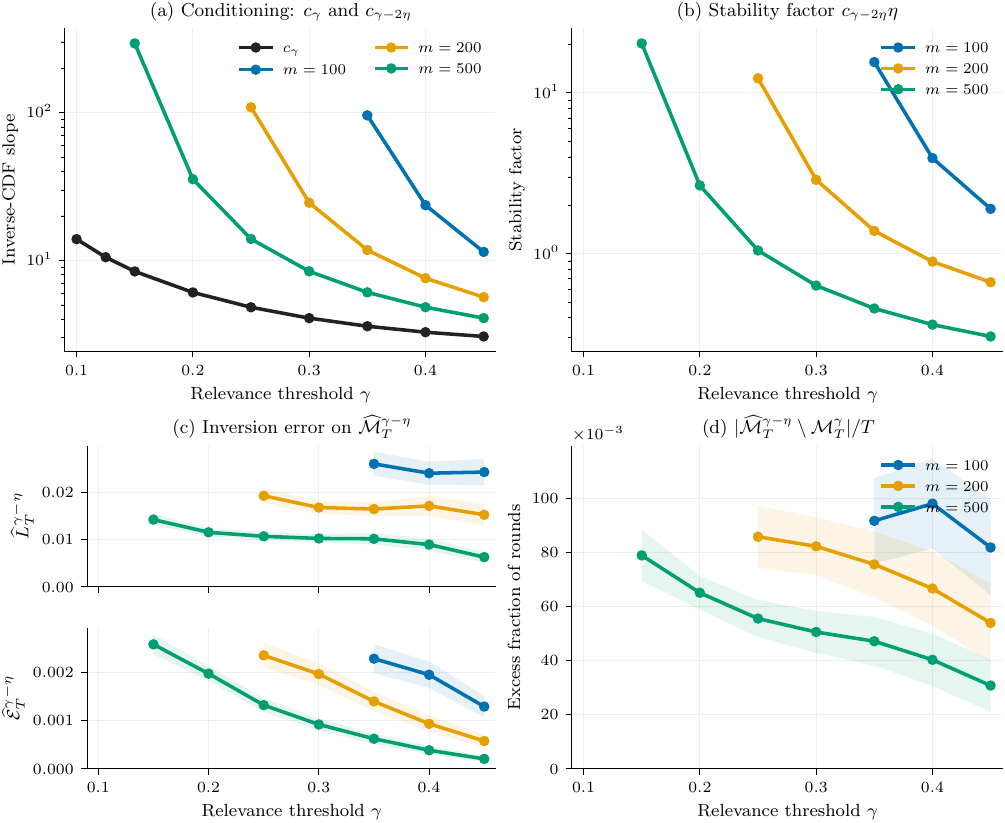}
        \smallskip

        {\small $K=8$}
    \end{minipage}

    \caption{%
        Conservative non-batched inversion for $\Psi$-LS with
        $K\in\{2,4,8\}$ (left to right) and
        $m\in\{100,200,500\}$. Panels~(a)--(d) report, respectively,
        $c_\gamma$ and the median $c_{\gamma-2\eta}$, the median stability
        factor $c_{\gamma-2\eta}\eta$, the practical inversion errors
        $\widehat L_T^{\gamma-\eta}$ and
        $\widehat{\mathcal E}_T^{\gamma-\eta}$ on
        $\widehat{\mathcal M}_T^{\gamma-\eta}$, and the excess coverage
        $|\widehat{\mathcal M}_T^{\gamma-\eta}
        \setminus\mathcal M_T^\gamma|/T$.
        Curves are shown only when at least ten trajectories satisfy
        $2\eta<\gamma$; shaded regions in Panels~(c)--(d) are pointwise
        $95\%$ confidence intervals over the valid trajectories.
    }
    \label{fig:psi-qhat-small-m}
\end{figure*}

\subsection{Batched algorithms}
\paragraph{Batched $\Phi$-LS.}
We also evaluate the role of the threshold when the demonstrator updates its
sampling policy only at batch boundaries. Within a batch of size $n$, the
sampling distribution is fixed, and the inverse learner estimates it directly
from the observed action frequencies:
\[
\hat q_{a,b}
=
\frac{1}{n}
\sum_{t\in\mathcal B_b}
\mathbbm{1}\{A_t=a\}.
\]
We consider $n\in\{500,1000,2000\}$ and use $\widehat q$ both to determine the
practical relevance set and to perform the inversion. The exact frozen policy
$q$ is used only to compute
\[
\eta=\max_{t,a}\left|\widehat q_{a,t}-q_{a,t}\right|
\]
and to provide an oracle coverage comparison.
Figure~\ref{fig:batched-phi} reports the resulting conditioning quantities,
inversion errors, and coverage. Quantities involving $c_{\gamma-2\eta}$ are
displayed only when $2\eta<\gamma$, so missing segments correspond to
configurations outside the stability regime of the theoretical guarantee.
Increasing the batch size improves the frequency estimate, generally reduces
the inversion error, and brings the $\widehat q$-based relevance set closer
to the exact-$q$ set.All results are
averaged over $100$ independent trajectories of length $T=20000$.

\begin{figure*}[t]
    \centering
    \begin{minipage}[t]{0.325\textwidth}
        \centering
        \includegraphics[width=\linewidth]{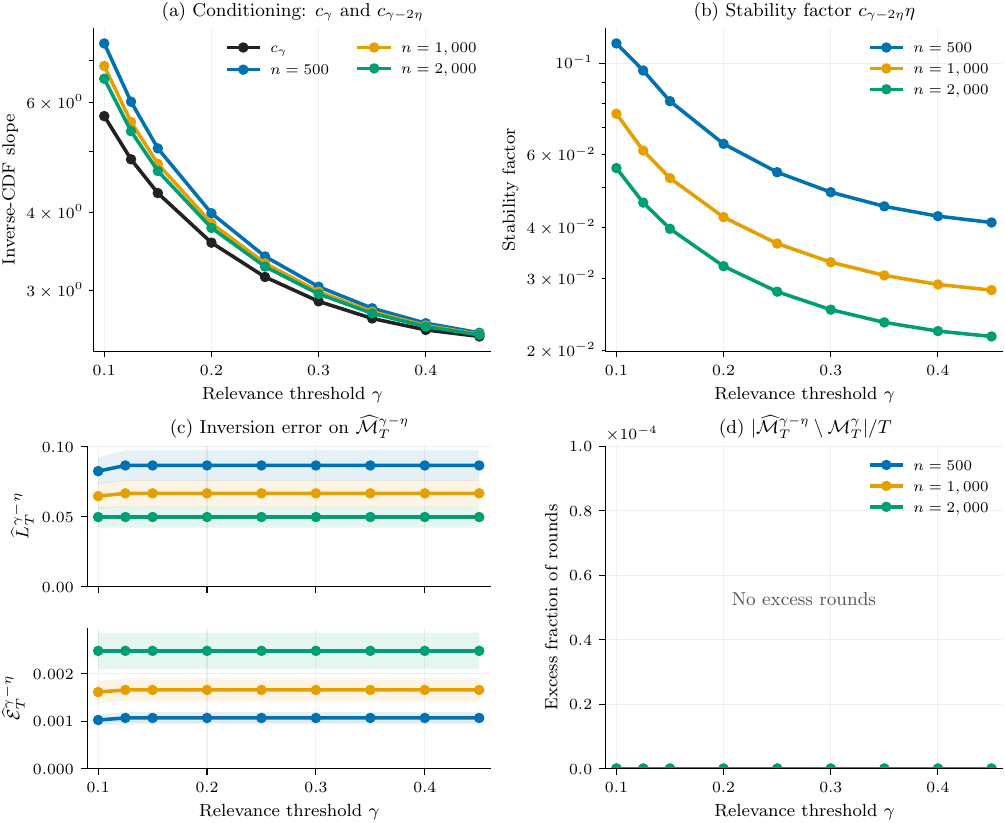}

        \medskip
        {\normalsize\bfseries $K=2$}
    \end{minipage}
    \hfill
    \begin{minipage}[t]{0.325\textwidth}
        \centering
        \includegraphics[width=\linewidth]{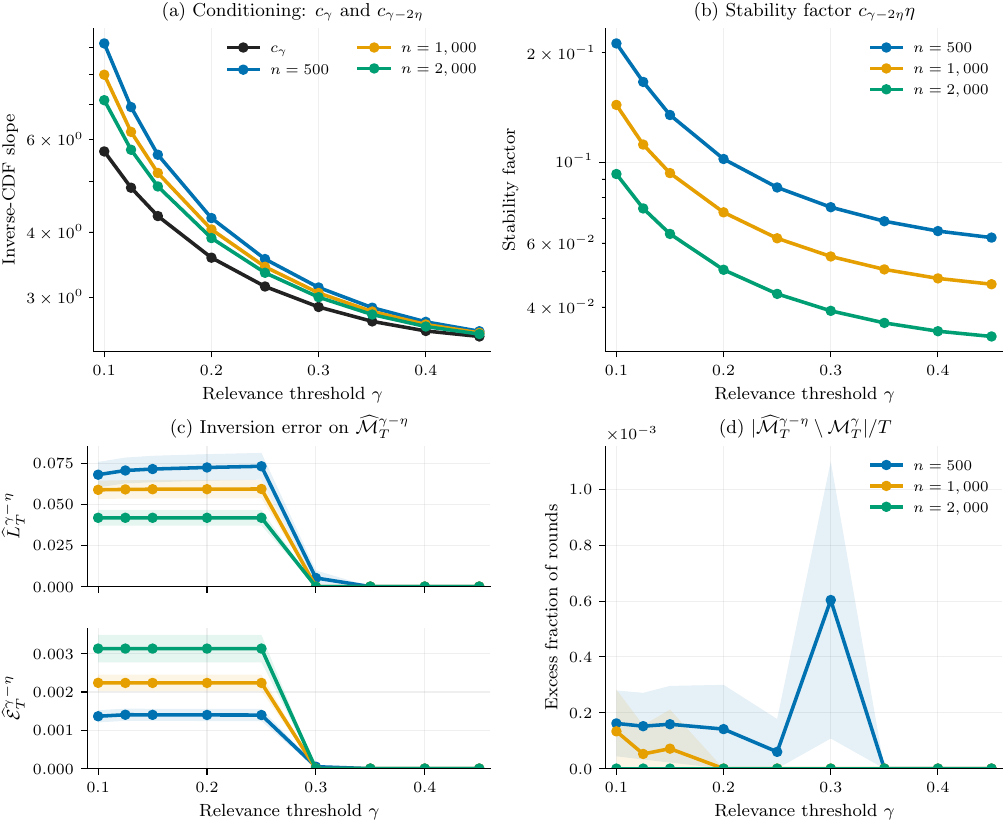}

        \medskip
        {\normalsize\bfseries $K=4$}
    \end{minipage}
    \hfill
    \begin{minipage}[t]{0.325\textwidth}
        \centering
        \includegraphics[width=\linewidth]{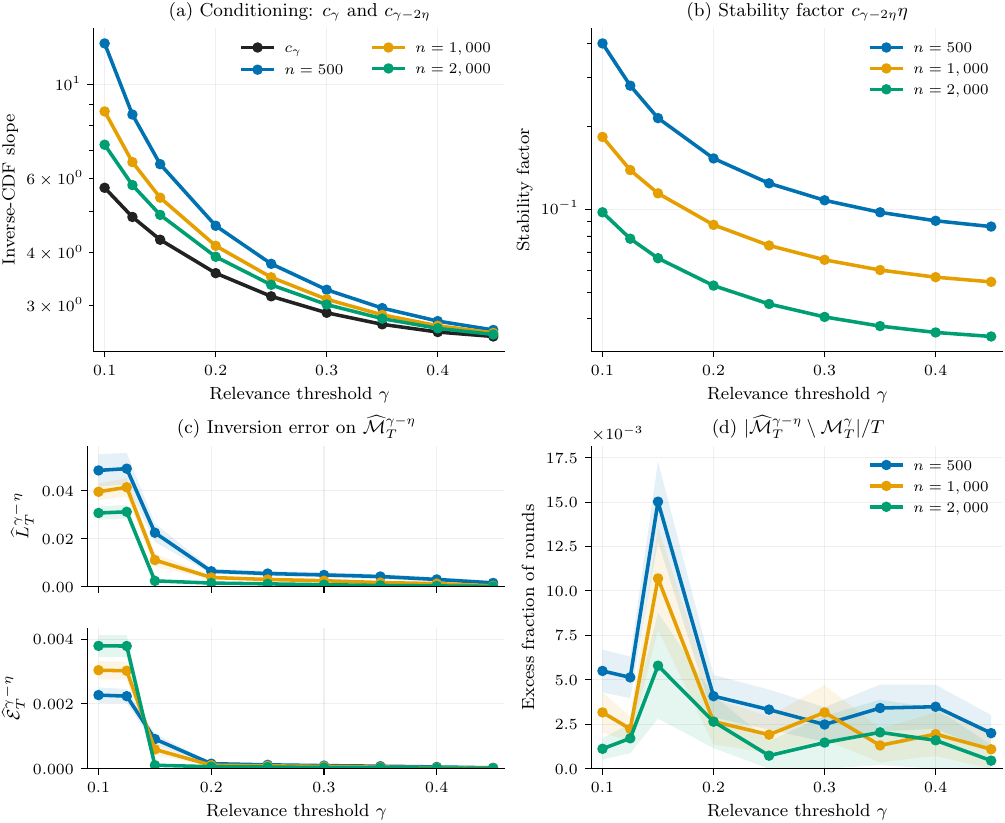}

        \medskip
        {\normalsize\bfseries $K=8$}
    \end{minipage}

    \caption{%
        Batched iLITE-$\Phi$ conditioning and inversion results for
        $K\in\{2,4,8\}$ (left to right), $T=20000$, and batch sizes
        $n\in\{500,1000,2000\}$. Panel~(a) compares $c_\gamma$ with $c_{\gamma-2\eta}$, and
    Panel~(b) reports the stability factor $c_{\gamma-2\eta}\eta$.
    Panel~(c) shows the conditional and coverage-weighted inversion errors,
    $\widehat L_T^{\gamma-\eta}$ and
    $\widehat{\mathcal E}_T^{\gamma-\eta}$, computed on the practical
    relevance set $\widehat{\mathcal M}_T^{\gamma-\eta}$.
    Panel~(d) reports the excess coverage
    $|\widehat{\mathcal M}_T^{\gamma-\eta}
    \setminus\mathcal M_T^\gamma|/T$.
    Quantities are reported only for trajectories satisfying
    $2\eta<\gamma$; shaded regions in Panels~(c)--(d) are pointwise
    $95\%$ confidence intervals.
    }
    \label{fig:batched-phi}
\end{figure*}

\paragraph{Batched $\Psi$-LS.}
Figure~\ref{fig:batched-psi} repeats the conditioning experiment for batched
$\Psi$-LS. The learner updates its posterior only at batch boundaries, while
the observer estimates the sampling probabilities using the empirical action
frequencies within each batch.

\begin{figure*}[t]
    \centering
    \begin{minipage}[t]{0.325\textwidth}
        \centering
        \includegraphics[width=\linewidth]{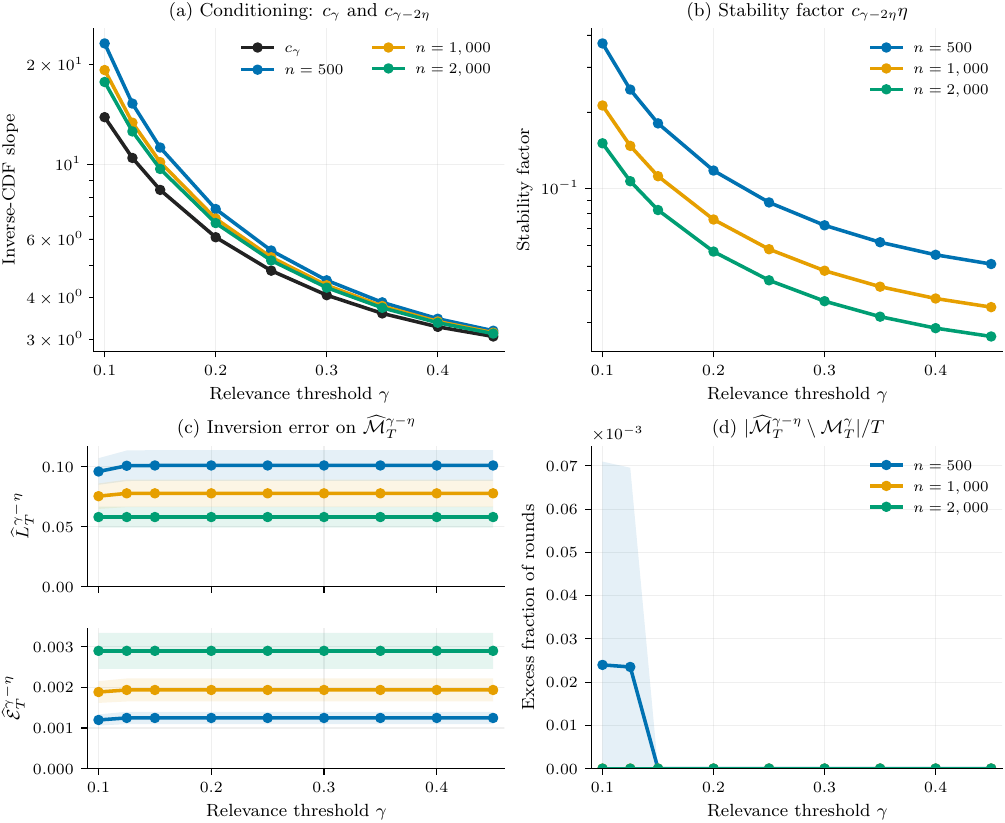}

        \medskip
        {\normalsize\bfseries $K=2$}
    \end{minipage}
    \hfill
    \begin{minipage}[t]{0.325\textwidth}
        \centering
        \includegraphics[width=\linewidth]{Plots/batched_psi_ls_conditioning_4arms_T20000.pdf}

        \medskip
        {\normalsize\bfseries $K=4$}
    \end{minipage}
    \hfill
    \begin{minipage}[t]{0.325\textwidth}
        \centering
        \includegraphics[width=\linewidth]{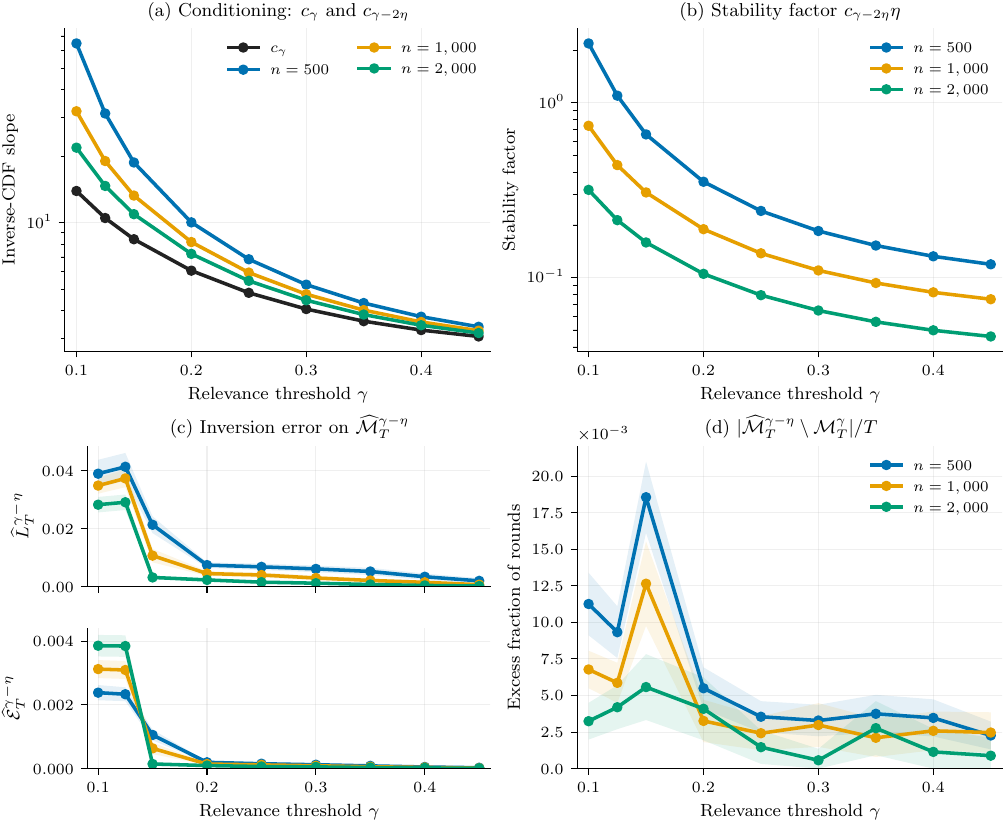}

        \medskip
        {\normalsize\bfseries $K=8$}
    \end{minipage}

    \caption{%
        Batched $\Psi$-LS conditioning experiment for $K\in\{2,4,8\}$
        (left to right), $T=20000$, and batch sizes
        $n\in\{500,1000,2000\}$. Panel~(a) compares $c_\gamma$ with $c_{\gamma-2\eta}$, and
    Panel~(b) reports the stability factor $c_{\gamma-2\eta}\eta$.
    Panel~(c) shows the conditional and coverage-weighted inversion errors,
    $\widehat L_T^{\gamma-\eta}$ and
    $\widehat{\mathcal E}_T^{\gamma-\eta}$, computed on the practical
    relevance set $\widehat{\mathcal M}_T^{\gamma-\eta}$.
    Panel~(d) reports the excess coverage
    $|\widehat{\mathcal M}_T^{\gamma-\eta}
    \setminus\mathcal M_T^\gamma|/T$.
    Quantities are reported only for trajectories satisfying
    $2\eta<\gamma$; shaded regions in Panels~(c)--(d) are pointwise
    $95\%$ confidence intervals.
    }
    \label{fig:batched-psi}
\end{figure*}

\subsection{Sliding windows}
\label[appendix]{app:sliding_windows}
We also study inversion when the observer has access only to the action
trajectory and estimates the current sampling probabilities from a causal
sliding window. For window length $n\in\{20,50,100\}$, we use
\[
\widehat q_{a,t}^{(n)}
=
\frac{1}{n}\sum_{s=t-n}^{t-1}\mathbbm{1}\{A_s=a\},
\qquad t>n.
\]
Thus, $\widehat q_{a,t}^{(n)}$ uses the frequencies of arm $a$ in the
previous $n$ actions and does not use the current action. This provides a
local, trajectory-based way to reflect temporal dependence and changes in
the policy over time. Since the policy is not frozen within a sliding
window, the estimate can reflect both finite-window variation and policy
drift. Because actions are generated adaptively and the sampling policy may change within each window, we do not claim a theoretical guarantee for the accuracy of this estimator. We evaluate its empirical accuracy in simulation by comparing \(\widehat q_t\) with the known \(q_t\); deriving guarantees for this setting is left to future work.

We evaluate all window sizes on the common interval $t\geq101$, and report
results for $T=20{,}000$ over 100 independent trajectories. At each evaluated
round, we compute
$\eta_t=\max_{a\in[K]}|\widehat q_{a,t}-q_{a,t}|$ and check the local
condition $2\eta_t<\gamma$, using $\gamma=0.20$. The action-frequency
estimate $\widehat q$ is used for practical relevance and gap inversion;
the exact probabilities $q$ are used only to evaluate $\eta_t$ and to
compute the oracle coverage curve. In each figure, panel (a) shows the
pointwise probability error and the threshold $\gamma/2$. Panel (b) compares
the fraction of trajectories satisfying the local condition at round $t$
with the stricter running condition
$2\max_{101\leq s\leq t}\eta_s<\gamma$. Panel (c) reports the running
conditional inversion error and its coverage-weighted counterpart on
locally valid, empirically relevant rounds. Panel (d) compares relevance
coverage computed from $\widehat q$ with oracle coverage computed from $q$,
both restricted to locally valid rounds. The pointwise error generally
shrinks as the policies settle, while the running condition remains more
restrictive because an early large error continues to affect the running
maximum. Coverage-weighted error falls as relevant rounds become less
frequent. Figures~\ref{fig:sliding-phi-k2}--\ref{fig:sliding-phi-k8}
show the results for $\Phi$-LS, and
Figures~\ref{fig:sliding-psi-k2}--\ref{fig:sliding-psi-k8} show those for
$\Psi$-LS.

\begin{figure*}[t]
    \centering
    \includegraphics[width=\textwidth]
    {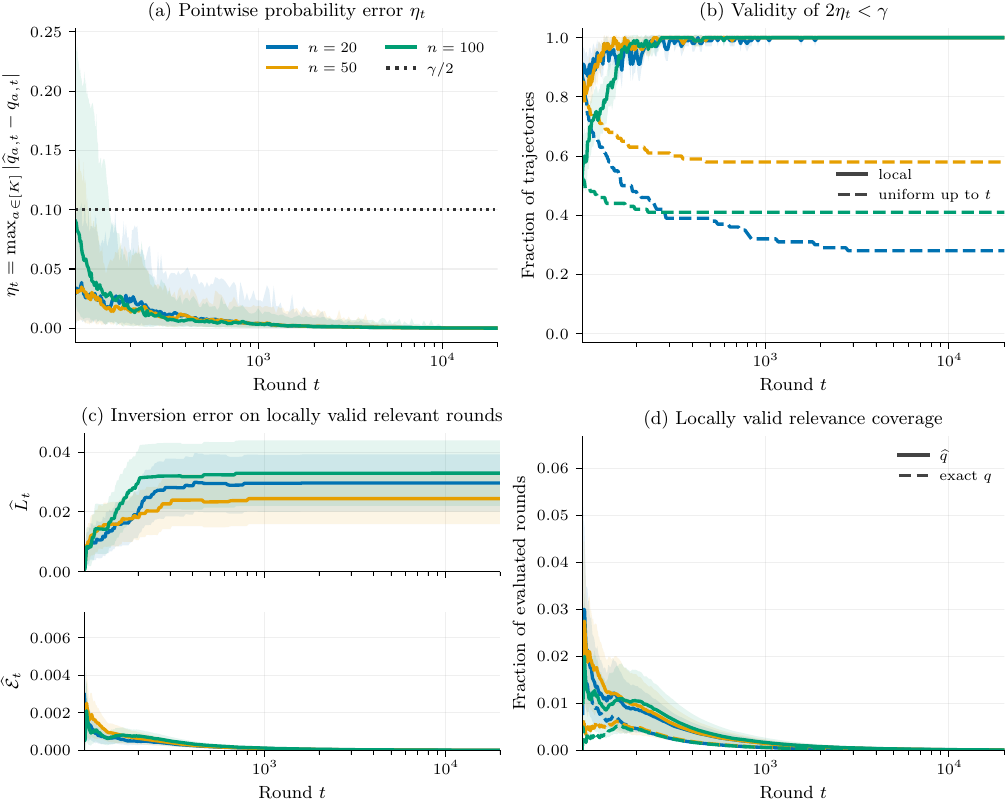}
    \caption{Sliding-window inversion for $\Phi$-LS with $K=2$,
    $T=20{,}000$, $\gamma=0.20$, and $n\in\{20,50,100\}$, averaged over
    100 trajectories. Panels (a)--(d) show pointwise probability error,
    local and running-uniform validity, inversion error, and empirical
    versus oracle relevance coverage, respectively.}
    \label{fig:sliding-phi-k2}
\end{figure*}

\begin{figure*}[t]
    \centering
    \includegraphics[width=\textwidth]
    {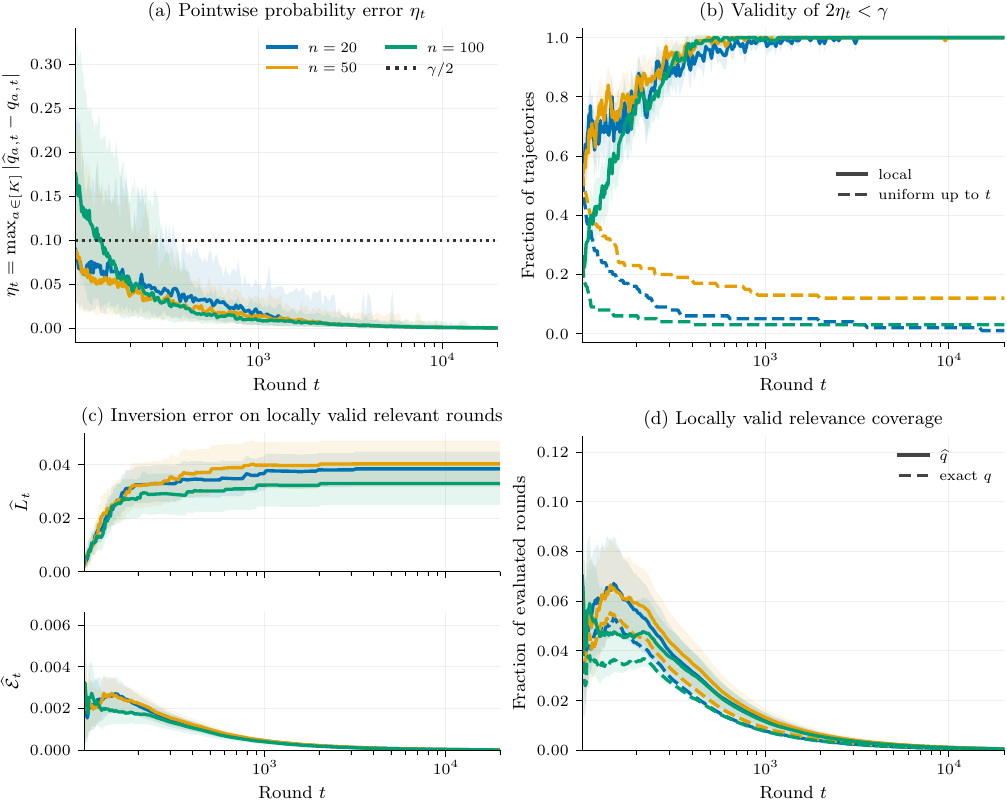}
    \caption{Sliding-window inversion for $\Phi$-LS with $K=4$.
    The remaining settings and panels are the same as in
    Figure~\ref{fig:sliding-phi-k2}.}
    \label{fig:sliding-phi-k4}
\end{figure*}

\begin{figure*}[t]
    \centering
    \includegraphics[width=\textwidth]
    {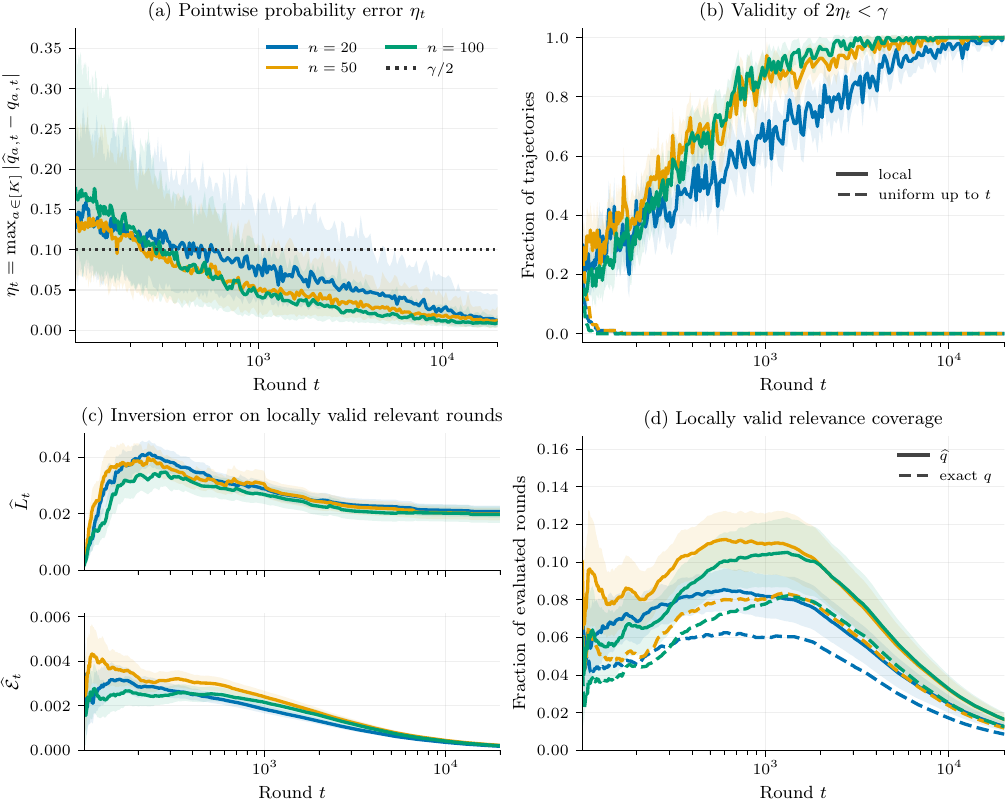}
    \caption{Sliding-window inversion for $\Phi$-LS with $K=8$.
    The remaining settings and panels are the same as in
    Figure~\ref{fig:sliding-phi-k2}.}
    \label{fig:sliding-phi-k8}
\end{figure*}

\begin{figure*}[t]
    \centering
    \includegraphics[width=\textwidth]
    {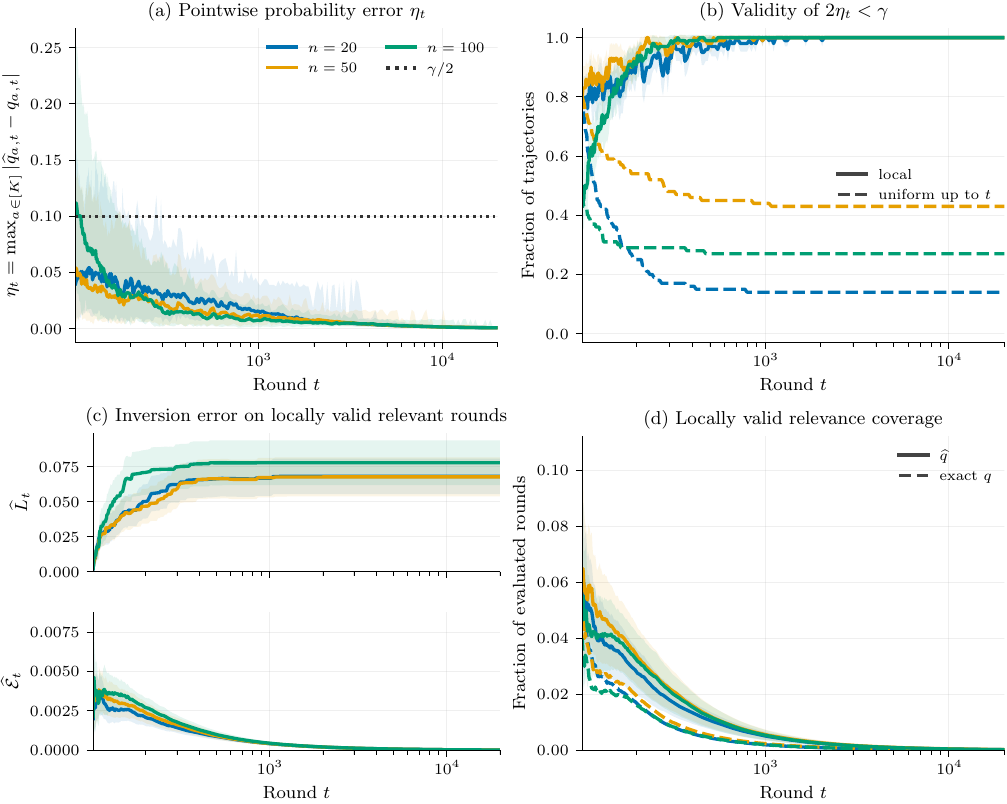}
    \caption{Sliding-window inversion for $\Psi$-LS with $K=2$,
    $T=20{,}000$, $\gamma=0.20$, and $n\in\{20,50,100\}$, averaged over
    100 trajectories. Panels (a)--(d) show pointwise probability error,
    local and running-uniform validity, inversion error, and empirical
    versus oracle relevance coverage, respectively.}
    \label{fig:sliding-psi-k2}
\end{figure*}

\begin{figure*}[t]
    \centering
    \includegraphics[width=\textwidth]
    {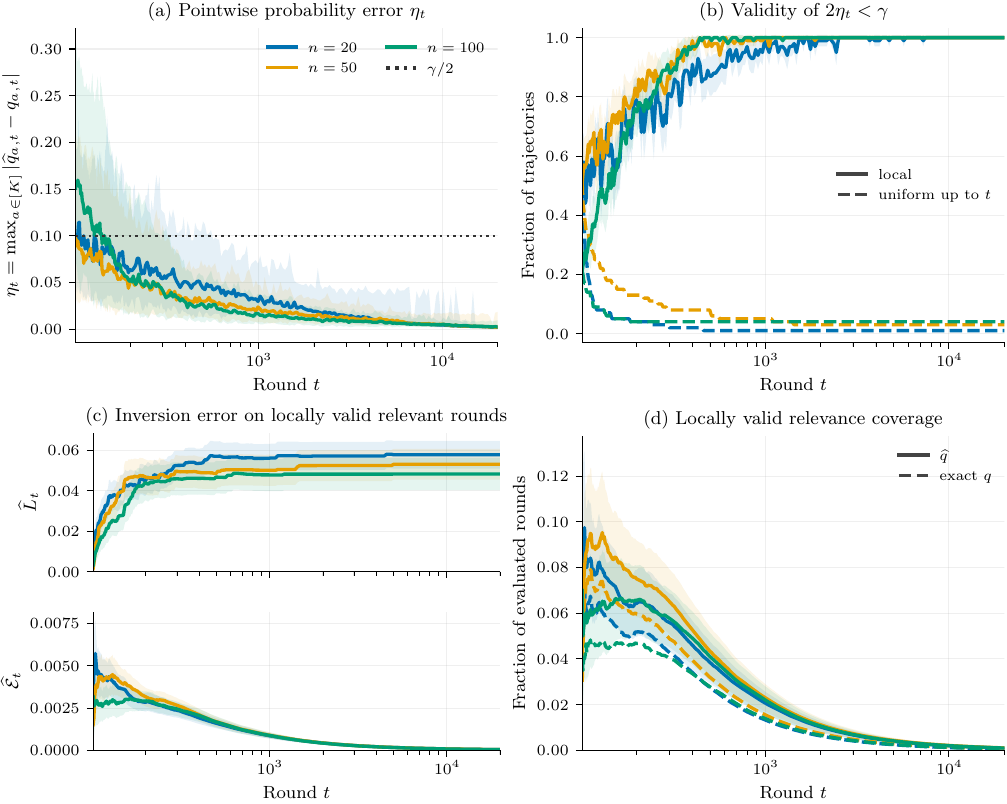}
    \caption{Sliding-window inversion for $\Psi$-LS with $K=4$.
    The remaining settings and panels are the same as in
    Figure~\ref{fig:sliding-psi-k2}.}
    \label{fig:sliding-psi-k4}
\end{figure*}

\begin{figure*}[t]
    \centering
    \includegraphics[width=\textwidth]
    {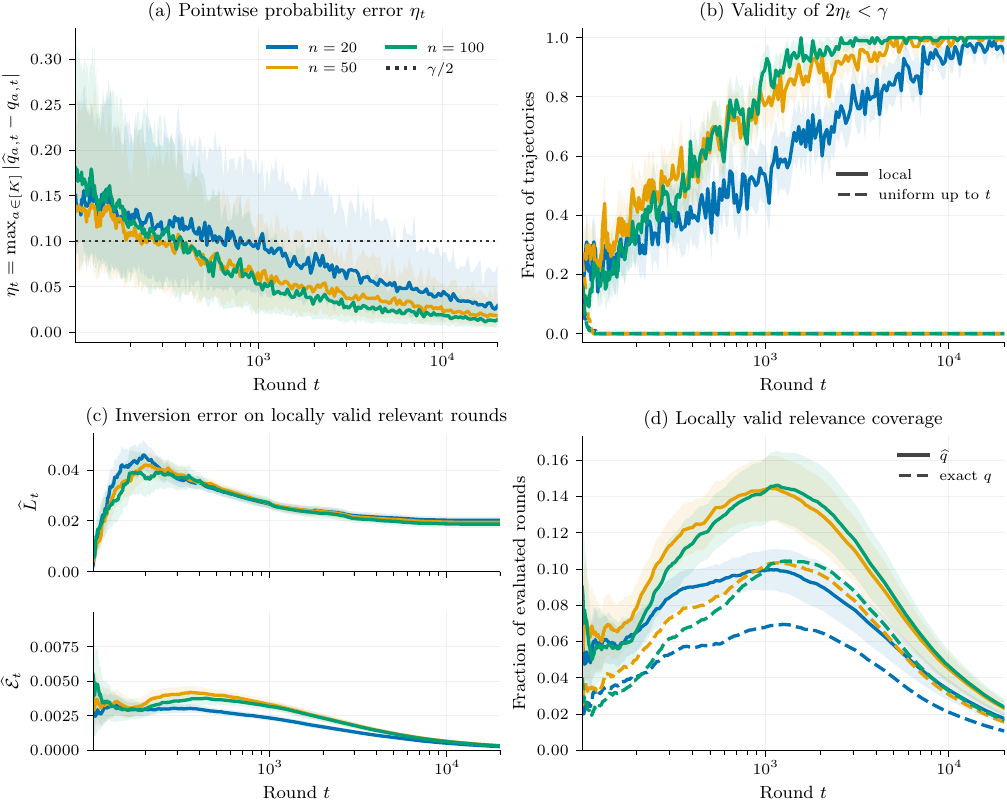}
    \caption{Sliding-window inversion for $\Psi$-LS with $K=8$.
    The remaining settings and panels are the same as in
    Figure~\ref{fig:sliding-psi-k2}.}
    \label{fig:sliding-psi-k8}
\end{figure*}

\subsection{$(\varepsilon,\tilde\varepsilon)$-LS family}
\label[appendix]{sec:epsilon-ls-experiment}

We empirically test~\cref{def:demos_family} for some common algorithms: TS, Maillard sampling (MS)~\citep{maillard2021fundamental,bian2022maillard}, and Boltzmann-Gumbel exploration (BGE)~\citep{cesa2017boltzmann} with the $\Phi$-LS and $\Psi$-LS references.
For each pair, we evaluate $|\Delta_{a_t,t}^{LS}-\Delta_{a_t,t}^\mathcal{A}|$ and
$\max_{a}|p_{a,t}-q_{a,t}|$ on matched histories.  We also compare
$\Phi$-LS and $\Psi$-LS directly. To examine how these quantities change with the number of arms, we consider a $K=2$ Gaussian bandit with mean vector
$\mu=(1/2,1)$, a $K=4$ bandit with mean vector
$\mu=(0,1/3,2/3,1)$, and a $K=8$ Gaussian bandit with
\[
\mu_a=1-0.1(a-1), \qquad a\in[K],
\]
corresponding to $\mu=(1,0.9,\ldots,0.3)$. In all settings, the observation variance is $\alpha^2=1$.

\begin{figure*}[h]
  \centering
  \begin{subfigure}[t]{0.49\textwidth}
    \includegraphics[width=\linewidth]{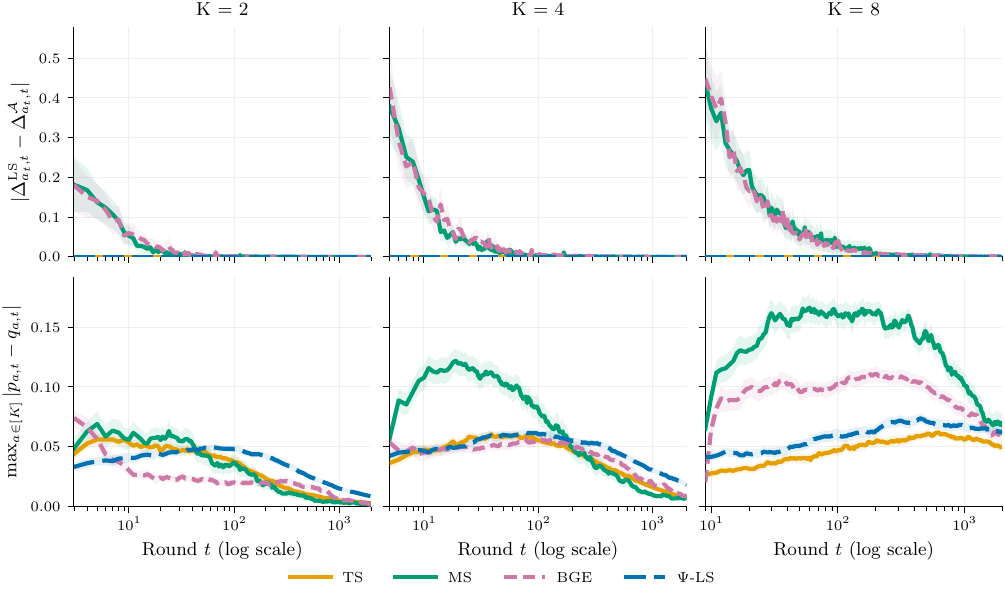}
    \caption{$\Phi$-LS reference.}
    \label{fig:epsilon-ls-phi}
  \end{subfigure}\hfill
  \begin{subfigure}[t]{0.49\textwidth}
    \includegraphics[width=\linewidth]{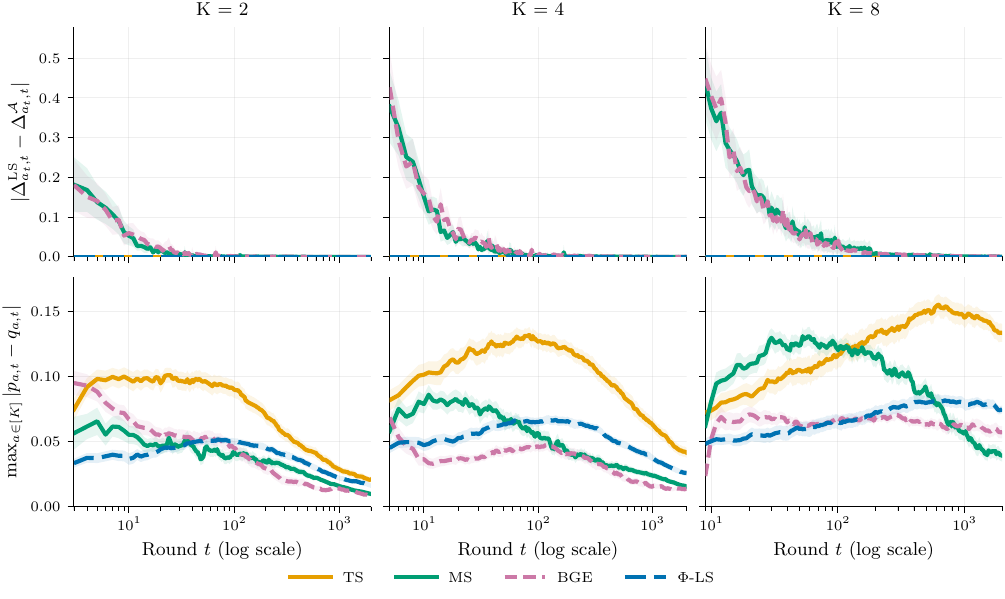}
    \caption{$\Psi$-LS reference.}
    \label{fig:epsilon-ls-psi}
  \end{subfigure}
  \caption{Same-history discrepancies relative to the $\Phi$-LS reference (left) and the $\Psi$-LS reference (right), for $K\in\{2,4,8\}$ and $T=2000$. The upper panels report the played-arm gap discrepancy
$|\Delta^{\mathrm{ref}}_{a_t,t}-\Delta^{\mathcal A}_{a_t,t}|$, while the lower panels report the maximum sampling-policy discrepancy
$\max_{a\in[K]}|p^{\mathcal A}_{a,t}-q^{\mathrm{ref}}_{a,t}|$.
For each candidate trajectory, the reference policy is evaluated on the same realized action-reward history using exact sampling probabilities. Curves show the mean over 100 simulations and shaded regions show pointwise 95\% confidence intervals. The horizontal axis begins at $t=K$, marking the end of the forced initialization used by MS and BGE.}
  \label{fig:epsilon-ls}
\end{figure*}

Figure~\ref{fig:epsilon-ls} separates discrepancies in the internal gap estimates from discrepancies in the resulting sampling policies. Since TS, $\Phi$-LS, and $\Psi$-LS use the same Gaussian posterior update, their same-history gap discrepancies are zero up to numerical precision. In contrast, MS and BGE construct their estimates differently and exhibit a transient gap discrepancy near initialization due to their forced initialization, which becomes larger as $K$ increases but decreases as more observations are collected. The lower panels show that agreement in gap construction does not necessarily imply agreement between sampling policies: TS and the two LS variants remain relatively close, while MS and BGE display larger policy discrepancies, particularly for larger $K$. The horizontal axis begins at $t=K$, marking the end of the forced initialization used by MS and BGE.  Relative to the $\Psi$-LS reference, the discrepancy of TS also increases with $K$. These curves describe how the empirical discrepancies evolve over time; the corresponding trajectory-wise maxima are needed to assess the uniform conditions formally.

\paragraph{End-to-end inversion $v=\Psi$}
We next evaluate the complete inversion procedure for TS, MS, and BGE at the
fixed relevance threshold $\gamma=0.4$.
Figures~\ref{fig:fixed-gamma-psi-k2}--\ref{fig:fixed-gamma-psi-k8} should be interpreted together
with the $(\varepsilon,\widetilde{\varepsilon})$-LS comparison above.
In that comparison, $\varepsilon$ measures the discrepancy between the
demonstrator's sampling distribution and that of its $\Psi$-LS reference,
whereas $\widetilde{\varepsilon}$ measures the discrepancy between their
internal gaps when evaluated on the same realized history. The end-to-end
experiment shows how these two forms of misspecification affect the gaps
recovered by iLITE-$\Psi$. 

Figures~\ref{fig:fixed-gamma-psi-k2}--\ref{fig:fixed-gamma-psi-k8}
show a clear dependence on the number of arms. For $K=2$, the relevance
set is comparatively large and MS exhibits the largest conditional and
coverage-weighted errors, followed by BGE and TS. The same qualitative
ordering persists for $K=4$, although the relevant-round fraction and
$\mathcal E_t^\gamma$ are smaller. For $K=8$, non-reference pulls remain
frequent throughout the horizon, but very few rounds satisfy the relevance
condition at $\gamma=0.4$, since the sampling probability is distributed
among more arms. Consequently, the coverage-weighted error becomes very
small and the conditional-error curves are more irregular.
\begin{figure*}[h]
    \centering
    \includegraphics[width=\textwidth]{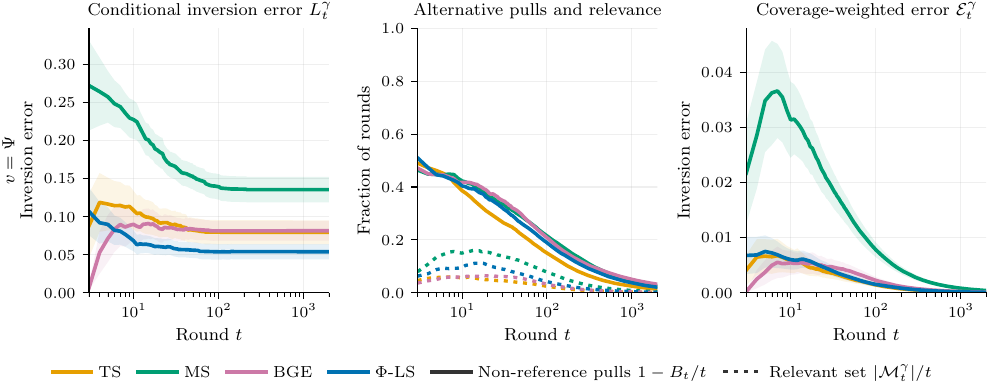}
    \caption{%
        Fixed-threshold inversion with $v=\Psi$ for $K=2$, $T=2000$, and
        $\gamma=0.4$. We invert TS, MS, BGE, and a $\Phi$-LS demonstrator
        using the exact sampling probabilities. The panels report the
        conditional inversion error $L_t^\gamma$, the non-reference pull
        fraction $1-B_t/t$ and relevant-round fraction
        $|\mathcal M_t^\gamma|/t$, and the coverage-weighted error
        $\mathcal E_t^\gamma$. Curves are restricted to trajectories
        satisfying $2\varepsilon<\gamma$; shaded regions are pointwise
        $95\%$ confidence intervals.
    }
    \label{fig:fixed-gamma-psi-k2}
\end{figure*}

\begin{figure*}[h]
    \centering
    \includegraphics[width=\textwidth]{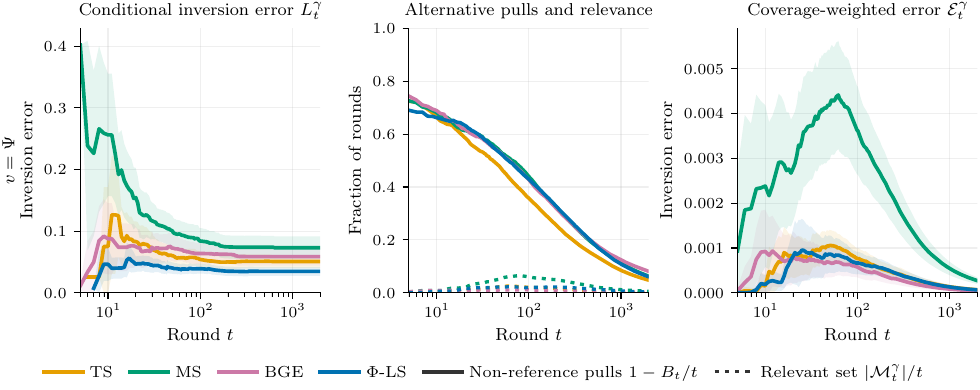}
    \caption{%
        Fixed-threshold inversion with $v=\Psi$ for $K=4$, $T=2000$, and
        $\gamma=0.4$. The panels respectively report $L_t^\gamma$, the
        fractions of non-reference and relevant rounds, and
        $\mathcal E_t^\gamma$. The blue curve corresponds to a $\Phi$-LS
        demonstrator inverted using $\Psi^{-1}$ and therefore isolates
        inverse-CDF misspecification. Exact sampling probabilities are used,
        and only trajectories satisfying $2\varepsilon<\gamma$ contribute.
        Shaded regions are pointwise $95\%$ confidence intervals.
    }
    \label{fig:fixed-gamma-psi-k4}
\end{figure*}

\begin{figure*}[h]
    \centering
    \includegraphics[width=\textwidth]{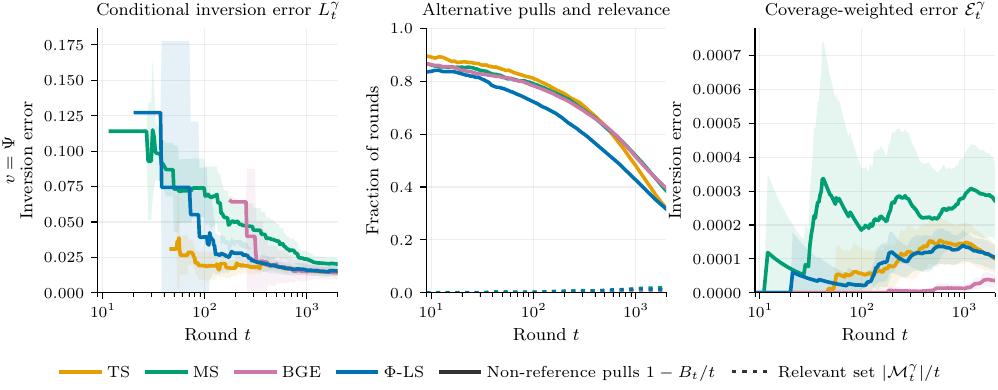}
    \caption{%
        Fixed-threshold inversion with $v=\Psi$ for $K=8$, $T=2000$, and
        $\gamma=0.4$. The three panels show the conditional inversion error,
        the decomposition into non-reference and relevant rounds, and the
        coverage-weighted inversion error. The $\Phi$-LS curve illustrates
        inversion using a misspecified inverse CDF. Results use exact
        sampling probabilities and are restricted to trajectories satisfying
        $2\varepsilon<\gamma$. Shaded regions are pointwise $95\%$
        confidence intervals.
    }
    \label{fig:fixed-gamma-psi-k8}
\end{figure*}

\paragraph{End-to-end inversion for $v=\Phi$.}
Figures~\ref{fig:fixed-gamma-phi-k2}--\ref{fig:fixed-gamma-phi-k8}
complement the corresponding $\Psi$-inversion results. For $K=2$, MS
again exhibits the largest conditional and coverage-weighted errors, while
TS has the smallest error and BGE and $\Psi$-LS lie between them. The same
ordering is broadly preserved for $K=4$, although the relevance set and
$\mathcal E_t^\gamma$ are smaller. Compared with inversion using
$\Psi^{-1}$, the $\Phi$ inverse produces visibly smaller errors for TS and
BGE in these two environments, showing that the choice of inverse CDF
affects how policy misspecification is amplified. Conversely, applying
$\Phi^{-1}$ to a $\Psi$-LS demonstrator produces a nonzero error, mirroring
the error obtained when $\Phi$-LS is inverted using $\Psi^{-1}$.

For $K=8$, both experiments exhibit substantial non-reference play but
limited relevance at $\gamma=0.4$. Hence, the small coverage-weighted errors
primarily reflect the contraction of the relevance set and should not be
interpreted as improved inversion. This qualification is especially
important for MS under $v=\Phi$, for which only three trajectories satisfy
$2\varepsilon<\gamma$. More generally, the $\Phi$ and $\Psi$ results agree
qualitatively on the role of $K$. 

\begin{figure*}[h]
    \centering
    \includegraphics[width=\textwidth]{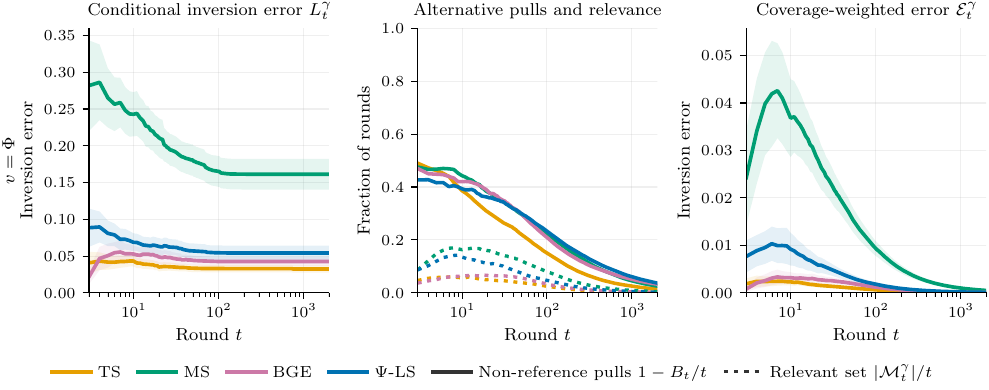}
    \caption{%
        Fixed-threshold inversion with $v=\Phi$ for $K=2$, $T=2000$, and
        $\gamma=0.4$. We invert TS, MS, BGE, and a $\Psi$-LS demonstrator
        using the exact sampling probabilities. The left panel reports the
        conditional inversion error $L_t^\gamma$. The central panel shows
        the fraction of non-reference pulls, $1-B_t/t$, together with the
        relevant-round fraction $|\mathcal M_t^\gamma|/t$. The right panel
        reports the coverage-weighted error $\mathcal E_t^\gamma$.
        Curves are computed over trajectories satisfying
        $2\varepsilon<\gamma$; shaded regions are pointwise $95\%$
        confidence intervals.
    }
    \label{fig:fixed-gamma-phi-k2}
\end{figure*}

\begin{figure*}[h]
    \centering
    \includegraphics[width=\textwidth]{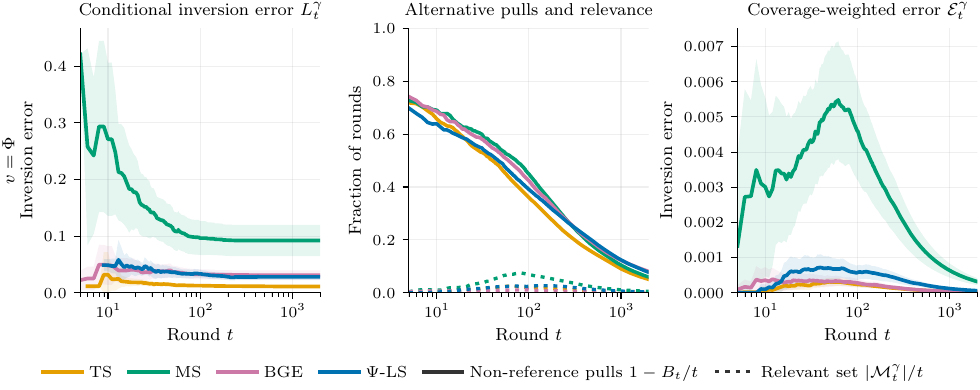}
    \caption{%
        Fixed-threshold inversion with $v=\Phi$ for $K=4$, $T=2000$, and
        $\gamma=0.4$. The panels respectively report the conditional
        inversion error $L_t^\gamma$, the non-reference and relevant-round
        fractions, and the coverage-weighted error $\mathcal E_t^\gamma$.
        The blue curve corresponds to a $\Psi$-LS demonstrator inverted
        using $\Phi^{-1}$. Exact sampling probabilities are used, and only
        trajectories satisfying $2\varepsilon<\gamma$ contribute to each
        curve. Shaded regions are pointwise $95\%$ confidence intervals.
    }
    \label{fig:fixed-gamma-phi-k4}
\end{figure*}

\begin{figure*}[h]
    \centering
    \includegraphics[width=\textwidth]{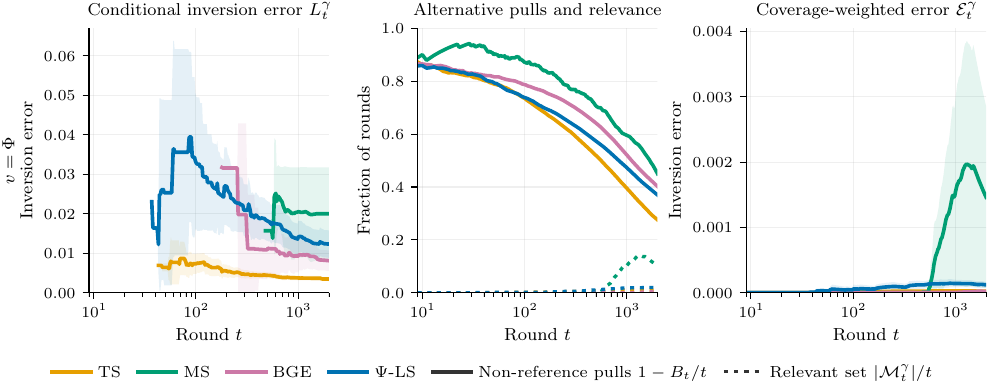}
    \caption{%
        Fixed-threshold inversion with $v=\Phi$ for $K=8$, $T=2000$, and
        $\gamma=0.4$. The three panels show $L_t^\gamma$, the fractions of
        non-reference and relevant rounds, and $\mathcal E_t^\gamma$.
        The $\Psi$-LS curve illustrates inversion under inverse-CDF
        misspecification. Results use exact sampling probabilities and are
        restricted to trajectories satisfying $2\varepsilon<\gamma$.
        For MS, this condition holds for only a small number of trajectories. Shaded regions are
        pointwise $95\%$ confidence intervals.
    }
    \label{fig:fixed-gamma-phi-k8}
\end{figure*}

\subsection{Robustness to misspecification}
We evaluate the effect of misspecifying the inverse learner's Gaussian prior
variance for $K=4$ arms over a horizon of $T=2000$. In each panel in~\cref{fig:prior-variance-phi,fig:prior-variance-psi}, we
multiply the prior variance of one arm by
$\rho\in\{0.25,0.5,2,4\}$, leaving the other arms correctly specified. The
demonstrator retains the correct prior variance
$\sigma_{a,0}^2=1$, while the inverse learner uses
$\widetilde{\sigma}_{a,0}^2=\rho\sigma_{a,0}^2$ for the perturbed arm.
We use the demonstrator's exact sampling probabilities in every inversion,
so the observed error isolates prior misspecification rather than
probability-estimation error. The dashed black curve is the matched-prior
baseline and is zero up to numerical precision. Prior misspecification produces the largest errors during early learning, and these generally decrease as observations accumulate. This is consistent with~\cref{eq:var}: as an arm’s pull count grows, the data-dependent term increasingly dominates the prior-variance contribution, reducing the influence of the initial misspecification. The effect is most pronounced when the best arm’s prior variance is perturbed, because the best arm enters the gap comparison at every relevant round. Results are averaged over 100
independent trajectories; shaded regions show 95\% confidence intervals.

\begin{figure*}[t]
    \centering
    \includegraphics[width=\textwidth]
    {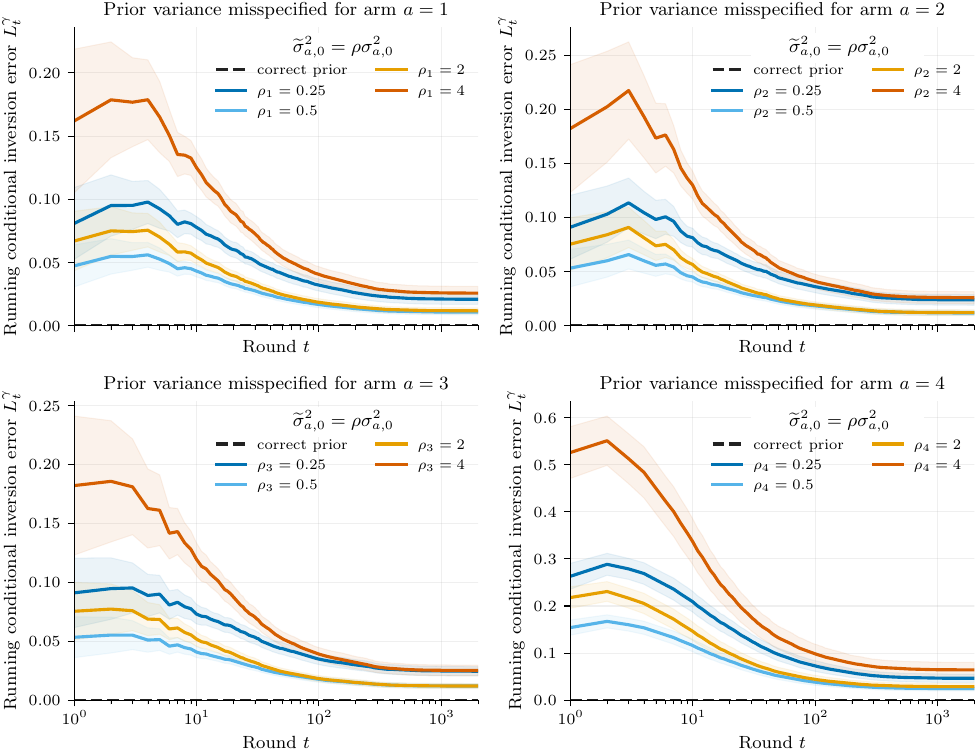}
    \caption{Running conditional inversion error under prior-variance
    misspecification for $\Phi$-LS. Each panel perturbs the inverse learner's
    prior variance for the indicated arm; $\rho$ denotes the multiplier in
    $\widetilde{\sigma}_{a,0}^{2}=\rho\sigma_{a,0}^{2}$. The dashed curve is
    the correctly specified prior baseline.}
    \label{fig:prior-variance-phi}
\end{figure*}

\begin{figure*}[t]
    \centering
    \includegraphics[width=\textwidth]
    {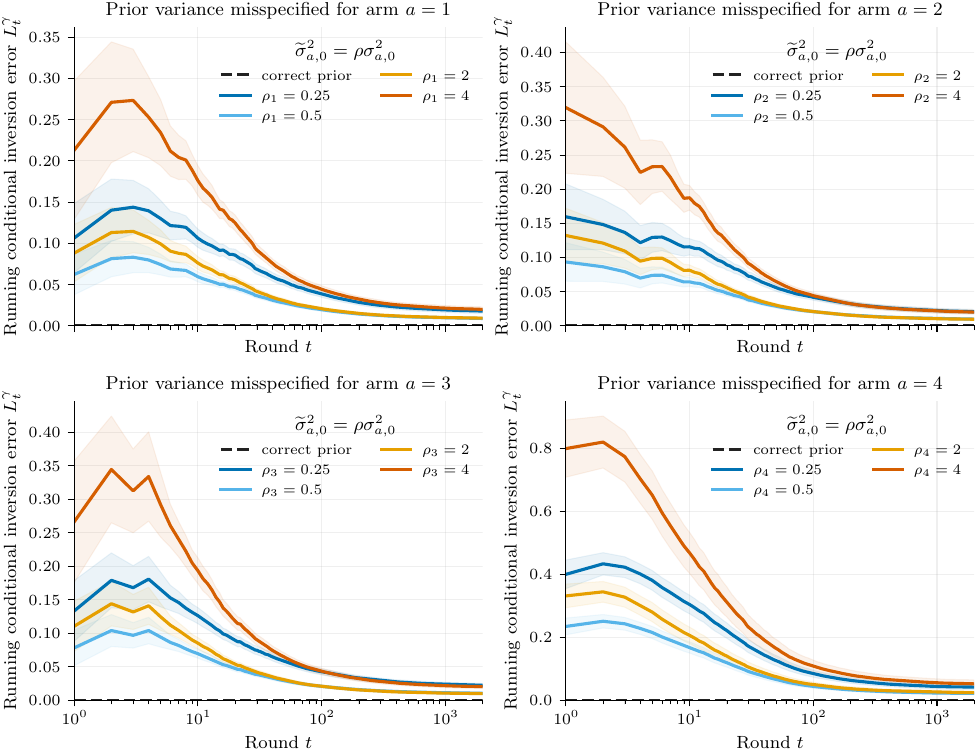}
    \caption{Running conditional inversion error under prior-variance
    misspecification for $\Psi$-LS. The setup and conventions are the same as
    in Figure~\ref{fig:prior-variance-phi}.}
    \label{fig:prior-variance-psi}
\end{figure*}


\end{document}